\newcommand{\cX}{\mathcal{X}}
\newcommand{\Brace}[1]{\left\{#1\right\}}
\newcommand{\Abs}[1]{\left|#1\right|}
\newcommand{\indic}{\mathbbm{1}}
\newcommand{\EE}{\mathbb{E}}
\DeclareMathOperator*{\Exp}{\EE}
\DeclareMathOperator*{\poly}{poly}
\DeclareMathOperator*{\polylog}{polylog}
\newcommand{\Paren}[1]{\left(#1\right)}
\newcommand{\iid}{\text{i.i.d.}}
\newcommand{\cP}{\mathcal{P}}
\newtheorem{Theorem}{Theorem}
\newtheorem{Lemma}{Lemma}
\newtheorem{Corollary}{Corollary}
\newcommand{\Poi}{\mathrm{Poi}}
\newcommand{\Bin}{\mathrm{bin}}
\newcommand{\Var}{\mathrm{Var}}
\newcommand{\norm}[1]{\left\lVert#1\right\rVert}
\newcommand{\size}{k}
\newcommand{\addI}{The rest of the paper is organized as follows. Section~\ref{sec:prior} and Section~\ref{sec:exp} illustrate PML's theoretical and practical advantages by comparing it to existing
methods for a variety of learning tasks. With the exception of Section~\ref{sec:TPML}, which proposes TPML and establishes four new results, Section~\ref{sec:Lipproof} to~\ref{sec:uniproof} present the proofs of Theorem~\ref{thm:est} to~\ref{thm:test}. Section~\ref{sec:conclusion} concludes the paper and outlines future directions.}
\newcommand{\addIII}{Section~\ref{sec:Wassproof}, }
\title{
\mbox{The
Broad
Optimality of Profile Maximum Likelihood}}
\author{
Yi Hao\\
University of California, San Diego\\
\texttt{yih179@eng.ucsd.edu}\\
\And
Alon Orlitsky\\
University of California, San Diego\\
\texttt{alon@ucsd.edu}\\
}
\begin{document}

\maketitle

\begin{abstract}

We study three fundamental statistical-learning problems: distribution estimation, property estimation, and property testing. We establish the profile maximum likelihood (PML) estimator as the first unified sample-optimal approach to a wide range of learning tasks. 
In particular, for every alphabet size $k$ and desired~accuracy~$\varepsilon$: 

\textbf{Distribution estimation} Under $\ell_1$ distance, PML yields optimal $\Theta(k/(\varepsilon^2\log k))$ sample complexity for sorted-distribution estimation, and a PML-based estimator empirically outperforms the Good-Turing estimator on the actual distribution;

\textbf{Additive property estimation}  For a broad class of additive properties, the PML plug-in estimator uses just four times the sample size required by the best estimator to achieve roughly twice its error, with exponentially higher confidence;

\textbf{$\boldsymbol{\alpha}$-R\'enyi entropy estimation} For integer $\alpha>1$, the PML plug-in estimator has optimal $k^{1-1/\alpha}$ sample complexity; for non-integer $\alpha>3/4$, the PML plug-in estimator has sample complexity lower than the state of the art;

\textbf{Identity testing} 
In testing whether an unknown distribution is equal to or at least~$\varepsilon$ far from a given distribution in $\ell_1$ distance,  a PML-based tester achieves the optimal sample complexity up to logarithmic factors of $k$. 

Most of these results also hold for a near-linear-time computable variant of PML. 
Stronger results hold for a different and novel variant called truncated PML (TPML).
\vspace{-0.5em}
\end{abstract}

\section{Introduction}
\subsection{Distributions and their properties}\label{sec:prop}
A distribution $p$ over a discrete alphabet $\cX$ of size $\size$ corresponds to 
an element of the simplex
\[
\Delta_\cX:=\Brace{p\in \mathbb{R}_{\geq 0}^{\size}: \sum_{x\in\cX} p(x)=1}. 
\]
A distribution \emph{property} is a
mapping $f:\Delta_\cX\to \mathbb{R}$
associating a real value with each distribution.
For example its support size. 
A distribution property $f$ is \emph{symmetric} if it is invariant under
domain-symbol permutations. A symmetric property  is \emph{additive}
if it can be written as $f(p):=\sum_x f(p(x))$, 
where for simplicity we use $f$ to denote both the property and the
corresponding real function. 

Many important symmetric properties
are additive. For example, 
\begin{itemize}
\item {\bf Support size} $S(p):=\sum_{x}\indic_{p(x)>0}$, a
fundamental quantity arising in the study of vocabulary size~\citep{
  et76, vocabulary, te87}, population estimation~\citep{
  population,mcxlbg}, and database studies~\citep{database}.
  
\item {\bf Support coverage} $C_m(p):=\sum_{x}(1-(1-p(x))^m)$, where $m$ is a given parameter, the
 expected number of distinct elements observed in a sample of size $m$, arising in 
 biological~\citep{ca84,klr99} and ecological~\citep{ca84, ca14, ca92,ecological} research; 

\item {\bf Shannon entropy} $H(p):=-\sum_{x} p(x)\log p(x)$, the 
primary measure of information~\citep{info, S48} with numerous 
applications to machine learning~\citep{bg,chowliu,qkc} and 
neuroscience~\citep{snm, mzfstj}; 
 
\item {\bf Distance to uniformity} $D(p):=\norm{p-p_u}_1$, where $p_u$ is the uniform distribution over $\cX$,
a property being central to the field of distribution 
property testing~\citep{bfrsw, bffkrw, testingu, rd10}.
 \end{itemize}

Besides being symmetric and additive, these four properties have
yet another attribute in common. Under the appropriate interpretation,
they are also all $1$-Lipschitz. 
Specifically, for two distributions $p,q\in\Delta_\cX$, 
let $\Gamma_{p,q}$ be the collection of distributions over
$\cX\times\cX$ with marginals $p$ and $q$ on the first and second factors respectively. 
The \emph{relative earth-mover distance}~\citep{VV11O}, between $p$ and
$q$ is
\[
R(p,q)
:=
\inf_{\gamma\in\Gamma_{p,q}}\;\Exp_{(X,Y)\sim\gamma}
\Abs{\log\frac{p(X)}{q(Y)}}.
\]
One can verify~\citep{VV11O,instdist} that
$H$, $D$, and $\tilde{C}_m:=C_m/m$ are all $1$-Lipschitz\vspace{-0.1em}
 on the metric
space $(\Delta_\cX, R)$, and
$\tilde{S}:=S/\size$ is $1$-Lipschitz over $(\Delta_{\ge{1}/{\size}},R)$, 
the set of distributions in
$\Delta_\cX$ whose nonzero probabilities are at least $1/{\size}$. 
We will study all such Lipschitz properties in later sections. 

An important symmetric non-additive property is \emph{R\'enyi entropy}, a well-known measure of
 randomness with numerous applications to unsupervised learning~\citep{JK03, XuD99} and image registration~\citep{MaH00,NHA06}. For a distribution
 $p\in \Delta_\cX$ and a non-negative real parameter $\alpha\not=1$, the \emph{$\alpha$-R\'enyi entropy}~\citep{renyientropy} of $p$ is $H_{\alpha} (p):=(1-\alpha)^{-1}\log\Paren{\sum_{x} p(x)^\alpha}$. 
In particular, denoted by $H_{1} (p):=\lim_{\alpha\to 1} H_{\alpha} (p)$, the \emph{$1$-R\'enyi entropy} is exactly Shannon entropy~\citep{renyientropy}. 

\subsection{Problems of interest}
We consider the following three fundamental statistical-learning problems.

\subsection*{Distribution estimation}
A natural learning problem is to estimate an unknown distribution $p\in \Delta_\cX$ from an \iid\ sample $X^n\sim p$. For any two distributions $p, q\in \Delta_\cX$, let $\ell(p, q)$
 be the 
  \emph{loss} when we approximate $p$ by $q$. A~\emph{distribution estimator} $\hat{p}:\cX^*\to \Delta_\cX$ associates every sequence $x^n\in \cX^*$ with a distribution $\hat{p}(x^n)$. We measure the performance of an estimator by its \emph{sample complexity} 
\[
n(\hat{p}, \varepsilon, \delta)
:=\min\{n\!: 
\forall p\in \Delta_\cX,\; \Pr_{X^n\sim p}\Paren{\ell(p, \hat{p}(X^n))\ge\varepsilon}\le\delta\},
\]
the smallest sample size that $\hat{p}$ requires to estimate all distributions in $\Delta_\cX$ to a desired accuracy $\varepsilon>0$, with error probability $\delta\in(0,1)$. The sample complexity of distribution estimation over $\Delta_\cX$ is
\[
n(\varepsilon, \delta)
:=\min \{n({\hat{p}}, \varepsilon,\delta)\!: \hat{p}:\cX^*\to \Delta_\cX\},
\]
the lowest sample complexity of any estimator. For simplicity, we will omit $\delta$ when $\delta=1/3$. 

For a distribution $p\in\Delta_\cX$, we denote by $\{p\}$ the multiset of its probabilities. The \emph{sorted $\ell_1$ distance} between two distributions 
$p,q\in\Delta_\cX$ is
\[
\ell_1^{\text{\tiny{<}}}(p,q):=\min_{p'\in\Delta_\cX: \{p'\}=\{p\}}\norm{p'-q}_1,
\]
the smallest $\ell_1$ distance between $q$ and any sorted version of $p$.
As illustrated in \addIII this is essentially the \emph{$1$-Wasserstein distance} between uniform measures on the probability multisets $\{p\}$ and $\{q\}$. We will consider both the sorted and unsorted $\ell_1$ distances.

\subsection*{Property estimation}
Often we would like to estimate a given property $f$ of an
unknown distribution $p\in \Delta_\cX$ based on a sample $X^n\sim p$.
A \emph{property estimator} is a mapping $\hat{f}:\cX^*\to \mathbb{R}$. Analogously, the
\emph{sample complexity} of $\hat{f}$ in estimating $f$ over a set 
$\cP\subseteq\Delta_\cX$ is 
\[
n_f({\hat{f}}, \cP, \varepsilon, \delta)
:=
\min\{n\!: 
\forall p\in \cP,\; \Pr_{X^n\sim p}(|\hat{f}(X^n)-f(p)|\ge\varepsilon)\le\delta\},
\]\par
 \vspace{-1em}
 the smallest sample size that $\hat{f}$ requires to estimate
$f$ with accuracy $\varepsilon$ and confidence $1-\delta$,
for all distributions in $\cP$. 
The sample complexity of estimating 
$f$ over $\cP$ 
is
\[
n_f(\cP,\varepsilon, \delta)
:=
\min\{n_f({\hat{f}}, \cP, \varepsilon,\delta)\!: \hat{f}:\cX^*\to \mathbb{R}\},
\]
the lowest sample complexity of any estimator. 
For simplicity, we will omit $\cP$ when $\cP=\Delta_\cX$,
and omit $\delta$ when $\delta=1/3$. 
The standard ``median-trick" shows that \vspace{-0.2em}
$\log(1/\delta)\cdot n_f(\cP,\varepsilon)\geq \Omega(n_f(\cP,\varepsilon,\delta))$. 
By convention, we say an estimator $\hat{f}$ is \emph{sample-optimal} if
$n_f({\hat{f}}, \cP, \varepsilon)=\Theta(n_f(\cP,
\varepsilon))$. 

\subsection*{Property testing: Identity testing}
A closely related problem is distribution property testing, of which identity testing is the most fundamental and well-studied~\citep{testingu, G17}. Given an error parameter $\varepsilon$, a distribution $q$, and a sample $X^n$ from an unknown distribution $p$, \emph{identity testing} aims to distinguish between the null hypothesis 
\[
H_0: p = q
\]\par\vspace{-1em}
and the alternative hypothesis 
\[
H_1: \norm{p-q}_1\geq \varepsilon.
\] 
A \emph{property tester} is a mapping $\hat{t}:\cX^*\to \{0, 1\}$,
indicating whether $H_0$ or $H_1$ is accepted.
Analogous to the two formulations above,  the
\emph{sample complexity} of $\hat{t}$ is 
\[
n_q(\hat{t}, \varepsilon, \delta)
:=\min\{n\!: \forall i\in\{0,1\}, 
 \Pr_{X^n\sim p}\Paren{\hat{t}(X^n)\not=i\mid H_i\text{ is true}}\le\delta\},
\]\par
 \vspace{-1em}
and the sample complexity of identity testing with respect to $q$ is 
\[
n_q(\varepsilon, \delta)
:=
\min\{n_q({\hat{t}}, \varepsilon,\delta)\!: \hat{t}:\cX^*\to \{0, 1\}\}.
\]
Again, when $\delta=1/3$, we will omit $\delta$.
For $q=p_u$, the problem is also known as \emph{uniformity testing}. 

\subsection{Profile maximum likelihood}
The \emph{multiplicity} of a symbol $x\in \cX$ 
in a sequence $x^n:=x_1,\ldots, x_n \in \cX^*$ is 
$\mu_x(x^n):=|\{j: x_j=x, 1\leq j\leq n\}|$, the number of times $x$
appears in $x^n$.
These multiplicities
induce an \emph{empirical distribution} 
$p_{\mu}(x^n)$ that associates a probability $\mu_x(x^n)/n$
with each symbol $x\in \cX$. 

The \emph{prevalence} of
an integer $i\geq 0$ in $x^n$ is the number $\varphi_i(x^n)$ of symbols
appearing $i$ times in $x^n$. For known $\cX$, the value of $\varphi_0$ can be
deduced from the remaining multiplicities, hence we define 
the \emph{profile} of $x^n$ to be 
$\varphi(x^n)=(\varphi_1(x^n), \ldots, \varphi_n(x^n))$,
the vector of all positive prevalences. 
For example, $\varphi(\textit{alfalfa})=(0,2,1,0,0,0,0)$. 
Note that the profile of $x^n$ also corresponds to the multiset of
multiplicities of distinct symbols in $x^n$. 

For a distribution $p\in \Delta_\cX$, let\vspace{-0.10em}
\[
p(x^n):=\Pr_{X^n\sim p}(X^n=x^n)
\]
be the probability of observing a sequence $x^n$ under i.i.d. sampling from $p$, and let 
\[
p(\varphi):=\sum_{y^n: \varphi(y^n)=\varphi}p(y^n)
\]
be the probability of observing a profile $\varphi$. 
While the sequence maximum likelihood estimator maps a sequence to its empirical distribution, which maximizes the sequence probability $p(x^n)$, the \emph{profile maximum likelihood (PML)} estimator~\citep{O04} over a set 
$\cP\subseteq\Delta_\cX$ maps each
profile $\varphi$ to a distribution \vspace{-0.15em}
\[
p_{\varphi}:=\arg\max_{p\in \cP} p(\varphi)
\]
that maximizes the profile probability. 
Relaxing the optimization objective, 
for any $\beta\in(0,1)$, a \emph{$\beta$-approximate PML}
estimator~\citep{mmcover} maps each profile $\varphi$ to a distribution
$p_{\varphi}^\beta$ such that
${p}^\beta_{\varphi}({\varphi})\ge \beta\cdot p_{\varphi}({\varphi})$. 

Originating from the principle of maximum likelihood,
PML was proved~\citep{A12,mmcover,A17,D12,O04} 
to possess 
a number of
useful attributes, such as 
existence over finite discrete domains, 
majorization by empirical distributions,
consistency for distribution estimation under both sorted and unsorted $\ell_1$ distances,
and competitiveness to other profile-based estimators.

Let $\varepsilon$ be an error parameter and \emph{$f$ be one of the four properties} in Section~\ref{sec:prop}. Set $n:=n_f(\varepsilon)$. 
Recent work of~\citet{mmcover} showed that for some absolute constant $c'>0$,  
if $c<c'$ and $\varepsilon\geq {n}^{-c}$, then a plug-in estimator for $f$,
using an $\exp(-{n}^{\text{\tiny{$1\!-\!\Theta(c)$}}})$-approximate PML, is sample-optimal.
Motivated by this result,~\citet{ChaAr} constructed an explicit $\exp(-\mathcal{O}({n}^{2/3}\log^3 n))$-approximate PML (APML) whose computation time is near-linear in $n$. Combined, these two results provide a unified, sample-optimal, and near-linear-time computable plug-in estimator for the four properties.

\section{New results and implications}\vspace{-0.5em}
\subsection{New results}
\subsubsection*{Additive property estimation}\vspace{-0.25em}
Let $f$ be an additive symmetric 
property that is 
$1$-Lipschitz on $(\Delta_\cX, R)$. Let $\varepsilon$ be an error parameter and $n\geq n_f(\varepsilon)$, the smallest sample size required by any estimator to achieve accuracy $\varepsilon$ with confidence $2/3$, for all distributions in $\Delta_\cX$. 
For an absolute constant $c\in(10^{-2}, 10^{-1})$, if $\varepsilon\geq n^{-c}$, 
\begin{Theorem}\label{thm:est}
The PML plug-in estimator, when given a sample of size $4n$ from any distribution $p\in \Delta_\cX$,  will estimate $f(p)$ up to an error of $(2+o(1)) \varepsilon$, with probability at least $1-\exp\Paren{-4\sqrt{n}}$.  
\end{Theorem}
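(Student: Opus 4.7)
The proof will follow the PML competitive-analysis template initiated by \citet{A12,mmcover}. I plan to combine three ingredients: a high-confidence analog of the presumed optimal estimator, the key PML inequality $p(\varphi)\le p_\varphi(\varphi)$, and the Hardy--Ramanujan bound on the number of profiles of a given length.

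Step 1 is to build a high-confidence base estimator. By definition of $n=n_f(\varepsilon)$, some symmetric estimator $\hat f_0$ achieves accuracy $\varepsilon$ with confidence $2/3$ on $n$ samples. I plan to upgrade this to a symmetric estimator $\hat f_1$ that, on $4n$ samples, has accuracy $(1+o(1))\varepsilon$ and error probability at most $\exp(-c_1\sqrt{n})$ for some $c_1$ chosen later. Because $f$ is additive and $1$-Lipschitz on $(\Delta_\cX,R)$, swapping a single sample shifts any reasonable plug-in value by only $\widetilde O(1/n)$, so McDiarmid's inequality applied to a standard optimal estimator (for instance, an unbiased polynomial estimator of Wu--Yang/JVHW type) delivers sub-Gaussian concentration of the form $\exp(-\Omega(nt^2/\polylog(n)))$ around its mean. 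The hypothesis $\varepsilon\ge n^{-c}$ with $c<10^{-1}$ forces $n\varepsilon^2\gg\sqrt{n}$, so setting the deviation radius to $o(\varepsilon)$ yields the required confidence while increasing the sample size only by the constant factor $4$.

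Step 2 is the PML reduction. Fix the true distribution $p$, take a sample of size $4n$, and suppose its profile $\varphi$ satisfies $|f(p_\varphi)-f(p)|>2\varepsilon$; by the triangle inequality, at least one of the following events holds:
\[
\text{(a)}\;\;|\hat f_1(\varphi)-f(p)|>\varepsilon, \qquad \text{(b)}\;\;|\hat f_1(\varphi)-f(p_\varphi)|>\varepsilon.
\]
Event (a) has probability at most $\exp(-c_1\sqrt{n})$ under $p$ by Step 1. For event (b), I exploit the defining property of PML: the profile $\varphi$ itself lies in the ``bad set'' of $\hat f_1$ under $p_\varphi$, so
\[
p(\varphi)\;\le\;p_\varphi(\varphi)\;\le\;\Pr_{Y^{4n}\sim p_\varphi}\!\bigl(|\hat f_1(\varphi(Y^{4n}))-f(p_\varphi)|>\varepsilon\bigr)\;\le\;\exp(-c_1\sqrt{n}),
\]
where the first inequality is PML optimality. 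Summing over all $\varphi$ and invoking the Hardy--Ramanujan bound $|\Phi_{4n}|\le\exp(c_2\sqrt{n})$ yields $\Pr(\text{event (b)})\le\exp((c_2-c_1)\sqrt{n})$. Choosing $c_1\ge c_2+5$ then caps the total error probability by $\exp(-4\sqrt{n})$, as claimed.

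The main obstacle, and the step that pins down the admissible range of $c$, is Step~1: extracting an exponentially concentrated estimator from only $O(n)$ samples. A naive median-of-means would cost an extra $\sqrt{n}$ factor in the sample size, so the concentration has to come from structural properties of the estimator itself (Lipschitzness of $f$ together with bounded influence of each sample) rather than from independent repetition. Carrying this out quantitatively, while also handling the minor technicality that the sample size becomes $\Poi(4n)$ under Poissonization, is precisely where the condition $c\in(10^{-2},10^{-1})$ arises, balancing the McDiarmid exponent $n\varepsilon^2/\polylog(n)$ against the profile-count factor $\exp(c_2\sqrt{n})$.
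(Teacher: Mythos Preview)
Your Step~2 (the PML reduction via $p(\varphi)\le p_\varphi(\varphi)$ and the Hardy--Ramanujan profile count) is correct and matches the paper exactly; this is Lemma~\ref{lem:pml}. The architecture is right: produce a profile-based estimator with failure probability $\ll\exp(-3\sqrt{4n})$, then apply the competitiveness lemma.

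The gap is entirely in Step~1, and it is larger than you acknowledge. You assert that ``a standard optimal estimator (for instance, an unbiased polynomial estimator of Wu--Yang/JVHW type)'' has sensitivity $\widetilde O(1/n)$, so that McDiarmid gives the required $\exp(-c_1\sqrt n)$ concentration. Two problems. First, the Wu--Yang/JVHW constructions are property-specific (entropy, support size, power sums); no such estimator is known to be sample-optimal for \emph{all} additive $1$-Lipschitz properties simultaneously, which is what the theorem demands. Second, even once you have the right estimator, its sensitivity is not $\widetilde O(1/n)$: the paper obtains only $s_n(\hat f^\star)=O(n^{\alpha c_2+(2-\alpha)c_1-1}\log^3 n)$, and proving even this weaker bound is the bulk of the technical work.

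What the paper actually does is construct the estimator from scratch. It takes the dual of the Valiant--Valiant linear program (Lemma~\ref{lem:1} and~\ref{lem:bound}) to get an explicit \emph{linear} estimator $\hat f^\star=\sum_i\beta_i^\star\varphi_i$ that is simultaneously near-optimal for every $1$-Lipschitz $f$, and then bounds $\max_i|\beta_i^\star-\beta_{i-1}^\star|$ coefficient by coefficient, splitting into $i\le 400n^{c_1}$ and $i>400n^{c_1}$ and using the explicit form of the $\beta_i$'s. This sensitivity calculation, together with the LP duality showing $\hat f^\star$ is near-optimal, is precisely what pins down the constant $c$ and the factor $4$; it cannot be replaced by a generic appeal to ``any reasonable plug-in'' having bounded differences.
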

For a different $c>0$, Theorem~\ref{thm:est} also holds for APML, which is near-linear-time computable~\citep{ChaAr}. 
In~Section~\ref{sec:TPML}, we propose a PML variation called \emph{truncated PML} (TPML) for which the lower bound on $\varepsilon$ can be improved to the near-optimal $n^{-0.49}$, for symmetric properties such as Shannon entropy, support coverage, and support size. See Theorem~\ref{thm:entro},~\ref{thm:suppc}, and~\ref{thm:supp} for detail.\vspace{-0.25em}

\subsubsection*{R\'enyi entropy estimation}
For $\cX$ of finite size $\size$ and any $p\in\Delta_\cX$, it is well-known that $H_{\alpha} (p)\in [0, \log \size]$. 
The following theorems characterize the performance of the PML plug-in estimator in estimating R\'enyi entropy.

For any distribution $p\in \Delta_\cX$, error parameter $\varepsilon\in(0,1)$, absolute constant $\lambda\in(0,0.1)$, 
and sampling parameter $n$, draw a sample $X^n\sim p$ and denote its profile by $\varphi$. Then for sufficiently large $\size$,
\begin{Theorem}\label{thm:renyi1}
For $\alpha\in(3/4, 1)$, if $n=\Omega_\alpha(\size^{1/\alpha}/(\varepsilon^{1/\alpha} \log \size))$, 
\[
\Pr\Paren{|H_{\alpha} (p_\varphi)-H_{\alpha} (p)|\geq \varepsilon}\leq \exp(-\sqrt{n}).
\]
\end{Theorem}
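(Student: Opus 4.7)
The plan is to derive Theorem~\ref{thm:renyi1} from the generic PML-competitiveness reduction of~\citet{mmcover} combined with a strong-concentration analysis of an already-known R\'enyi-entropy estimator. Since $H_\alpha$ is a symmetric property, the core reduction gives, for every symmetric estimator $\hat{H}_\alpha$,
\[
\Pr_{X^n\sim p}\!\Paren{|H_\alpha(p_\varphi)-H_\alpha(p)|\ge 2\eta}\;\le\;|\Phi_n|\cdot \sup_{q\in\Delta_\cX}\Pr_{Y^n\sim q}\!\Paren{|\hat{H}_\alpha(Y^n)-H_\alpha(q)|\ge \eta},
\]
with $|\Phi_n|\le \exp(c_0\sqrt{n})$ by the Hardy--Ramanujan partition estimate. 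Choosing $\eta=\varepsilon/2$, it suffices to construct a symmetric $\hat{H}_\alpha$ whose failure probability at accuracy $\varepsilon/2$ is at most $\exp(-(c_0+1)\sqrt n)$ under the given lower bound on $n$.

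For the auxiliary estimator I would use the best-polynomial-approximation estimator of the power sum $P_\alpha(p):=\sum_x p(x)^\alpha$, followed by $\hat H_\alpha:=(1-\alpha)^{-1}\log\hat P_\alpha$. In the range $\alpha\in(3/4,1)$ this is known to achieve the rate $\size^{1/\alpha}/(\varepsilon^{1/\alpha}\log \size)$ at constant failure probability, with bias $\lesssim (1-\alpha)\varepsilon P_\alpha(p)$. The remaining task is to sharpen the constant-probability Chebyshev bound to an $\exp(-\sqrt n)$ tail. I would do this by Poissonizing, so that $\hat P_\alpha=\sum_x g(N_x)$ for independent $N_x\sim\Poi(n p(x))$, where $g$ is piecewise-polynomial (the degree-$L$ Chebyshev approximant of $x^\alpha$ on the low-count side and an unbiased plug-in correction on the high-count side), and then applying Bernstein's inequality to the sum. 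With $L\asymp\log \size$, the per-summand range is at most a polylogarithmic multiple of $(n p(x))^\alpha$, and the total variance can be bounded in terms of the best polynomial approximation error to $x^\alpha$ on the relevant interval; in the regime $n=\Omega_\alpha(\size^{1/\alpha}/(\varepsilon^{1/\alpha}\log \size))$ both quantities should be small enough to yield the $\exp(-\Omega(\sqrt n))$ tail. Lipschitzness of $(1-\alpha)^{-1}\log(\cdot)$ on $[1,\size^{1-\alpha}]$ then transfers the concentration to $\hat H_\alpha$, and standard depoissonization costs only another $\exp(-\Omega(\sqrt n))$ factor.

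Composing the two steps gives $\Pr(|H_\alpha(p_\varphi)-H_\alpha(p)|\ge\varepsilon)\le\exp(c_0\sqrt n)\cdot\exp(-(c_0+1)\sqrt n)\le\exp(-\sqrt n)$, as required, provided the implicit $\Omega_\alpha$ constant in the hypothesis is chosen large enough to absorb the additional $c_0$. I expect the Bernstein step to be the main obstacle: obtaining the full $\exp(-\sqrt n)$ rate (rather than some weaker $\exp(-n^{c})$ with $c<1/2$) requires simultaneously sharp control of the per-summand range and the total variance, which in turn depends on exploiting the approximation-theoretic fact that for $\alpha>3/4$ the best polynomial approximation error to $x^\alpha$ on $[0,1]$ decays fast enough in $L\asymp\log\size$ that the low-count and high-count contributions to the variance balance at precisely the claimed sample size. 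The condition $\alpha>3/4$ is what makes this balance possible at the $\exp(-\sqrt n)$ level.
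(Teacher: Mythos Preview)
Your high-level plan---build a symmetric auxiliary estimator for $P_\alpha$ with $\exp(-\Omega(\sqrt n))$ failure probability, then invoke PML competitiveness and the Hardy--Ramanujan profile count---is exactly the paper's strategy. The gap is in the concentration step.

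First, your per-summand range claim is false. The degree-$L$ best polynomial approximation of $z^\alpha$ on $[0,1]$ has coefficients of order $(\sqrt 2+1)^L$; with $L=c\log n$ this is $n^{\Theta(c)}$, not polylogarithmic. After the truncation $\indic_{\mu_x\le 4\tau}$ (which the paper also imposes, and which you would need anyway since $g(N_x)$ is unbounded for Poisson $N_x$), the summand range is $\mathcal O_\alpha(n^{2c-\alpha}(\log n)^\alpha)$, i.e.\ a genuine power of $n$ below $n^{-\alpha}$. The paper exploits this not via Bernstein but via a \emph{sensitivity} bound of the same order, feeding McDiarmid's inequality to get a tail $\exp(-\Omega_\alpha(\varepsilon^2 P_\alpha(p)^2\,n^{2\alpha-1-2\lambda}))$ for any small $\lambda>0$. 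Since $P_\alpha(p)\ge 1$ for $\alpha<1$, this reaches $\exp(-8\sqrt n)$ precisely when $2\alpha-1>1/2$, i.e.\ $\alpha>3/4$. So the threshold has nothing to do with the decay rate of the polynomial approximation error (that rate is $d^{-2\alpha}$ for all $\alpha$ and only governs the bias); it comes purely from the exponent $2\alpha-1$ in the bounded-difference concentration.

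Second, you do not separate the high-count (empirical plug-in) part, and Bernstein on that piece will not give $\exp(-\sqrt n)$ directly: the summands $(\mu_x/n)^\alpha$ are neither bounded nor light-tailed enough. The paper handles this piece with a Chebyshev bound at sample size $\Theta_\alpha(k^{1/3})$ followed by the median trick over $\Omega(n/k^{1/3})$ repetitions, yielding $\exp(-\Theta(n^{2/3}))$, which dominates $\exp(-8\sqrt n)$. Without this split your Bernstein argument would have to control the range of the large-count contribution uniformly, which it cannot.

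In short: keep your reduction, but replace ``Bernstein on the Poissonized sum'' by (i) a sensitivity/McDiarmid argument on the truncated polynomial part, and (ii) Chebyshev plus median trick on the empirical part; and recognize that $\alpha>3/4$ enters through the exponent $2\alpha-1$ in step (i), not through approximation theory.
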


\begin{Theorem}\label{thm:renyi2}
For non-integer $\alpha>1$, if $n=\Omega_\alpha(\size/(\varepsilon^{1/\alpha} \log \size))$, 
\[
\Pr\Paren{|H_{\alpha} (p_\varphi)-H_{\alpha} (p)|\geq \varepsilon}\leq \exp(-n^{1-\lambda}).
\]
\end{Theorem}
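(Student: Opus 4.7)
The plan is to combine PML's profile-competitiveness with an existing high-confidence estimator for the power sum. Starting from the identity $H_\alpha(p) = (1-\alpha)^{-1} \log P_\alpha(p)$ with $P_\alpha(p) := \sum_x p(x)^\alpha$, I observe that $P_\alpha(p) \in [\size^{1-\alpha}, 1]$ for $\alpha>1$, so a multiplicative error of $(\alpha-1)\varepsilon/2$ on $P_\alpha$ translates into additive error $\varepsilon$ on $H_\alpha$. It therefore suffices to approximate $P_\alpha$ to additive accuracy $\varepsilon' := \Theta_\alpha(\varepsilon\, \size^{1-\alpha})$ with failure probability $\exp(-n^{1-\lambda})$.

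Next, I would invoke the polynomial-approximation estimator of Acharya--Orlitsky--Suresh--Tyagi (or its refinements in the subsequent literature) for non-integer $\alpha>1$: after Poissonizing the sample, the estimator decomposes as $\hat P_\alpha = \sum_x g(N_x)$, where $g$ is built from a best polynomial approximation of $x^\alpha$ on an appropriate interval, truncated at a threshold $\tau$, and the counts $N_x$ are independent. For $n = \Omega_\alpha(\size/(\varepsilon^{1/\alpha}\log \size))$ and polynomial degree $L = \Theta(\log n)$, standard bias computations yield $|\EE[\hat P_\alpha] - P_\alpha| \le \varepsilon'/3$. Concentration around the mean would follow from McDiarmid's bounded-differences inequality applied to the independent Poissonized coordinates: choosing $\tau$ of order $n^{1-\lambda/2}$ makes each per-symbol contribution small enough that the sub-Gaussian tail gives failure probability $\exp(-\Omega(n^{1-\lambda/2}))$ at scale $\varepsilon'$, while a Poisson-tail union bound controls the event ``some $N_x > \tau$'' at the same rate.

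Finally, I would apply the PML competitiveness lemma of \citet{mmcover}: since the number of profiles of length $n$ is at most $e^{C\sqrt n}$ by Hardy--Ramanujan, any profile-based estimator with failure probability $\delta$ at accuracy $\varepsilon'$ implies that the PML plug-in has failure probability at most $\delta\cdot e^{C\sqrt n}$ at accuracy $2\varepsilon'$. Taking $\lambda<1/2$ and chaining the bounds produces the target $\exp(-n^{1-\lambda})$, and the reduction above converts this into the claimed guarantee on $|H_\alpha(p_\varphi) - H_\alpha(p)|$. The main obstacle is the concentration step: the optimal polynomials used for non-integer $\alpha$ have coefficients growing exponentially in the degree $L = \Theta(\log n)$, so the truncation threshold $\tau$ must be tuned very carefully to simultaneously keep the bias below $\varepsilon'/3$ and keep each summand's sensitivity small enough for McDiarmid to deliver exponential concentration. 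The remaining pieces---the algebraic reduction to $P_\alpha$ and the profile-competitiveness invocation---are essentially black-box applications.
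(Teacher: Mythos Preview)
Your skeleton---reduce $H_\alpha$ to multiplicative estimation of $P_\alpha$, build a polynomial-approximation estimator with McDiarmid concentration, then invoke the profile-competitiveness of PML against the Hardy--Ramanujan profile count---is exactly the paper's route. The reduction and the final competitiveness step are fine as you describe them. The concentration step, however, does not go through with the parameters you propose, and one structural ingredient is missing.

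\textbf{The threshold is wrong by orders of magnitude.} You set $\tau$ of order $n^{1-\lambda/2}$ so that ``some $N_x>\tau$'' is negligible. But the per-symbol sensitivity of the truncated polynomial estimator is, up to logs,
\[
\sum_{m=0}^{d}\frac{|a_m|\,(2\tau)^{\alpha-m}(4\tau)^{\underline{m}}}{n^\alpha}
\;\le\;\frac{(2\tau)^\alpha}{n^\alpha}\sum_{m=0}^{d}|a_m|\,2^{m+1}
\;=\;\mathcal{O}_\alpha\!\Paren{\frac{\tau^\alpha}{n^\alpha}\,n^{2c}}
\]
for degree $d=c\log n$, since $|a_m|=\mathcal{O}_\alpha((\sqrt2+1)^d)=\mathcal{O}_\alpha(n^{c})$. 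With your $\tau=n^{1-\lambda/2}$ this is $n^{2c-\alpha\lambda/2}$, which for small $\lambda$ and any positive $c$ is far too large for McDiarmid to yield a tail of $\exp(-n^{1-\lambda})$ at scale $\varepsilon'=\Theta_\alpha(\varepsilon\,k^{1-\alpha})$. The paper instead takes $\tau=\Theta(\log n)$, giving sensitivity $\mathcal{O}_\alpha(n^{2c-\alpha}\log^\alpha n)$; choosing $c$ small enough makes this $\mathcal{O}_\alpha(n^{\lambda-\alpha})$, and then $\varepsilon^2 P_\alpha(p)^2 n^{2\alpha-1-2\lambda}\ge n^{1-3\lambda}$ follows from $P_\alpha(p)\ge k^{1-\alpha}$ and $n=\Omega_\alpha(k/(\varepsilon^{1/\alpha}\log k))$.

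\textbf{The small/large-probability split is essential, not optional.} Once $\tau=\Theta(\log n)$, the event ``some $N_x>\tau$'' is \emph{not} rare---it occurs for every symbol with $p(x)\gg(\log n)/n$---so you cannot dispose of it by a Poisson-tail union bound. The paper draws a second independent sample, uses its counts $\mu_x'$ to partition symbols, and for those with $\mu_x'>\tau$ abandons the polynomial in favor of the empirical plug-in $(\mu_x/n)^\alpha$. That large-probability part is then concentrated not by McDiarmid but by running a $\Theta_\alpha(k^{\lambda})$-sample version through Chebyshev and applying the median trick across $\Theta(n/k^{\lambda})$ blocks, which delivers failure probability $\exp(-\Omega(n/k^{\lambda}))\le\exp(-n^{1-2\lambda})$. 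Without this bifurcation your estimator has uncontrolled bias on high-probability symbols (the polynomial approximates $z^\alpha$ only on $[0,2\tau/n]$), and the single McDiarmid bound cannot cover both regimes.
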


\begin{Theorem}\label{thm:renyi3}
For integer $\alpha>1$, if $n=\Omega_\alpha(\size^{1-1/\alpha} (\varepsilon^{2}|\log{\varepsilon}|)^{-(1+\alpha)})$ and $H_{\alpha} (p)\le (\log n)/4$, 
\[
\Pr(|H_{\alpha} (p_\varphi)-H_{\alpha} (p)|\geq \varepsilon)\leq {1}/{3}.
\]
\end{Theorem}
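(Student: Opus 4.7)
The plan is to bypass the generic PML competitiveness argument --- whose union bound over $|\Phi_n|=\exp(\Theta(\sqrt n))$ profiles demands $\exp(-\Omega(\sqrt n))$ concentration that integer-$\alpha$ R\'enyi estimators do not enjoy --- and instead exploit, for integer $\alpha>1$, the existence of an explicit unbiased profile-based estimator of the power sum $P_\alpha(p):=\sum_x p(x)^\alpha$ with only polynomial concentration. I would then route through this estimator to compare $P_\alpha(p_\varphi)$ with $P_\alpha(p)$ directly.

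First, I would introduce the falling-factorial estimator
\[
\tilde P_\alpha(\varphi)\;:=\;\frac{1}{n^{\underline\alpha}}\sum_{i\ge\alpha}\varphi_i\, i^{\underline\alpha},
\]
where $i^{\underline\alpha}=i(i-1)\cdots(i-\alpha+1)$, and verify $\Exp_{Y^n\sim q}[\tilde P_\alpha(\varphi(Y^n))]=P_\alpha(q)$ for every $q\in\Delta_\cX$. A standard expansion of $\tilde P_\alpha^2$ yields the uniform bound $\Var_{Y^n\sim q}(\tilde P_\alpha)=O_\alpha(\size^{1-1/\alpha}/n^\alpha)$ in the relevant regime, and the hypothesis $H_\alpha(p)\le(\log n)/4$ forces $P_\alpha(p)\ge n^{-(\alpha-1)/4}$. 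Chebyshev then gives $|\tilde P_\alpha(\varphi(X^n))-P_\alpha(p)|\le\tau$ with $p$-probability at least $11/12$, where $\tau$ is of order $\varepsilon|\log\varepsilon|\cdot P_\alpha(p)$; the postulated sample size $n=\Omega_\alpha(\size^{1-1/\alpha}(\varepsilon^2|\log\varepsilon|)^{-(1+\alpha)})$ is exactly what makes this work.

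The crux is then to show $|\tilde P_\alpha(\varphi)-P_\alpha(p_\varphi)|\le\tau$ on the same profile, which together with the previous bound and a triangle inequality yields $|P_\alpha(p)-P_\alpha(p_\varphi)|\le 2\tau$; inverting $H_\alpha=(1-\alpha)^{-1}\log P_\alpha$ and reusing the lower bound on $P_\alpha(p)$ then produces the claimed additive $\varepsilon$-bound on R\'enyi entropy. To handle this crux, I would exploit the PML identity $p_\varphi(\varphi)\ge p(\varphi)$: discarding an $\exp(-\Omega(\sqrt n))$-mass set of atypical profiles, every remaining $\varphi$ has $p_\varphi(\varphi)\ge\eta_n\ge\exp(-c\sqrt n)$, so applying Chebyshev to $\tilde P_\alpha$ under $Y^n\sim p_\varphi$ and conditioning on $\varphi(Y^n)=\varphi$ transfers the concentration to the specific profile $\varphi$ at a $1/\eta_n$ cost in the variance. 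The main obstacle is exactly this cost --- the naive $e^{c\sqrt n}$ blow-up is incompatible with polynomial concentration. I expect the proof to overcome it either through a refined profile-entropy argument that replaces $\eta_n$ by a much larger profile-dependent lower bound, or by splitting $\tilde P_\alpha$ into a dominant-symbol part (amenable to Bernstein-type exponential bounds) and a residual handled by variance alone; the exponent $(1+\alpha)$ on $\varepsilon^2|\log\varepsilon|$ in the sample complexity is the visible signature of balancing these two contributions.
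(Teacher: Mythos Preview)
Your proposal correctly identifies the unbiased falling-factorial estimator and the central obstacle --- that polynomial concentration is incompatible with a union bound over $\exp(\Theta(\sqrt n))$ profiles --- but the resolution is not the one you sketch, and your two speculative routes do not match the paper's argument.

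The paper closes the gap with three ingredients you are missing. First, it does not stay with Chebyshev: it applies the median trick to the falling-factorial estimator to obtain a profile-based $\hat P_\alpha^\star$ satisfying $\Pr_{Y^n\sim q}(|\hat P_\alpha^\star-P_\alpha(q)|\ge\varepsilon P_\alpha(q))\le 2\exp(-\Omega_\alpha(\varepsilon^2 n\, P_\alpha(q)^{1/\alpha}))$ for every $q$. Second, and this is the key combinatorial step, it does \emph{not} union-bound over all profiles. It restricts attention to the set $\Phi_{\alpha,\varepsilon}^n(p)=\{\varphi(x^n):|\hat P_\alpha^\star(x^n)-P_\alpha(p)|\le\varepsilon P_\alpha(p)\}$ and observes that any such profile has at most $T=(\tfrac32(1+\alpha)n^\alpha P_\alpha(p))^{1/(1+\alpha)}$ nonzero prevalences (because $\sum_{j\le N}j^{\underline\alpha}/n^{\underline\alpha}\le\hat P_\alpha\le\tfrac32 P_\alpha(p)$ forces $N\le T$). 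Hence $|\Phi_{\alpha,\varepsilon}^n(p)|\le (n+1)^{2T}$, and the competitiveness argument only needs $\varepsilon^2 n\,P_\alpha(p)^{1/\alpha}\gg T\log n$, which simplifies to exactly the stated sample complexity with its $(1+\alpha)$ exponent. Third, to transfer the exponential bound from $p$ to $p_\varphi$ one needs $P_\alpha(p_\varphi)\gtrsim P_\alpha(p)$; the paper obtains this not from the estimator but from a separate PML consistency lemma showing $\max_x|p_\varphi(x)-p_\mu(x)|\le 2(\log n)n^{-1/4}$ with high probability, combined with $\hat P_\alpha\le P_\alpha(p_\mu)$ and the assumption $H_\alpha(p)\le(\log n)/4$.

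Your conditioning-on-$\varphi$ idea with the $1/\eta_n$ variance blow-up cannot be rescued as stated: even with a profile-dependent lower bound on $p_\varphi(\varphi)$, Chebyshev alone gives nothing useful after conditioning on a single profile. The Bernstein-splitting route is also not how the paper proceeds. The missing insight is that the ``good'' event $\{|\hat P_\alpha^\star-P_\alpha(p)|\le\varepsilon P_\alpha(p)\}$ already confines you to a sparse, hence small, set of profiles, so the union bound becomes tractable once you have exponential (median-trick) concentration.
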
\vspace{-0.5em}
Replacing $3/4$ by $5/6$, Theorem~\ref{thm:renyi1} also holds for APML with a better probability bound $\exp(-{n}^{2/3})$. 
In addition, Theorem~\ref{thm:renyi2} holds for APML without any modifications. \vspace{-0.25em}

\subsubsection*{Distribution estimation}
Let $c$ be the absolute constant 
defined just prior to
Theorem~\ref{thm:est}. 
For any distribution $p\in \Delta_\cX$, error parameter $\varepsilon\in(0,1)$, and sampling parameter $n$, draw a sample $X^n\sim p$ and denote its profile by $\varphi$. 
\begin{Theorem}\label{thm:dist}
If $n=\Omega( n(\varepsilon))=\Omega\Paren{{\size}/{(\varepsilon^2\log \size)}}$ and $\varepsilon\geq n^{-c}$, 
\[
\Pr(\ell_1^{\text{\tiny{<}}}(p_\varphi,p)\geq \varepsilon)\leq \exp(-\Omega(\sqrt{n})).
\] 
\end{Theorem}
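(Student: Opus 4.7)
The plan is to mirror the competitiveness paradigm that already underpins Theorem~\ref{thm:est}. The sorted distance $\ell_1^{\text{\tiny{<}}}(p,q)$ depends only on the probability multisets $\{p\}$ and $\{q\}$, hence is symmetric and fits the profile-based framework. Two ingredients are needed: (i) a ``PML competitive lemma'' that converts any profile-based sorted-$\ell_1$ estimator into a bound on $\ell_1^{\text{\tiny{<}}}(p_\varphi, p)$, at the cost of a factor of two in error and a factor of $|\Phi_n|$ (the number of length-$n$ profiles) in failure probability, and (ii) an auxiliary estimator $\hat q$ attaining sorted-$\ell_1$ error $\varepsilon/2$ with failure probability $\exp(-\Omega(\sqrt n))$ at the claimed sample size.

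First I would record the competitive lemma: for any map $\hat q$ from length-$n$ profiles to $\Delta_\cX$ satisfying $\Pr_{X^n\sim p}(\ell_1^{\text{\tiny{<}}}(\hat q(\varphi), p) \ge \varepsilon/2) \le \delta$ uniformly in $p$,
\[
\Pr_{X^n \sim p}\!\left(\ell_1^{\text{\tiny{<}}}(p_\varphi, p) \ge \varepsilon\right) \;\le\; |\Phi_n| \cdot \delta.
\]
The proof charges every ``bad'' profile $\varphi$ (for which $\ell_1^{\text{\tiny{<}}}(p_\varphi, p) \ge \varepsilon$) to a distribution that is sorted-$\ell_1$-far from $p$, using the triangle inequality together with the PML maximality $p_\varphi(\varphi) \ge p(\varphi)$; Hardy--Ramanujan's estimate $|\Phi_n| \le \exp(C\sqrt n)$ then makes it sufficient to produce $\hat q$ with $\delta \le \exp(-C'\sqrt n)$ for a large enough absolute constant $C'$.

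The core technical step is therefore (ii). My plan is to take $\hat q$ to be a Valiant--Valiant-style linear-program sorted-distribution estimator, whose expected sorted-$\ell_1$ error is $O(\varepsilon)$ at $n = \Theta(\size/(\varepsilon^2 \log \size))$, and to boost its confidence via Poissonization combined with a bounded-differences inequality. After Poissonization the prevalences become independent sums of indicators, and the LP output is a $1$-Lipschitz function of the normalized prevalence vector with per-coordinate fluctuation $O(1/n)$. McDiarmid then yields a sub-Gaussian tail of the form $\exp(-\Omega(n \varepsilon^2))$; the hypothesis $\varepsilon \ge n^{-c}$ with $c < 10^{-1}$ gives $n\varepsilon^2 \ge n^{1-2c} \gg \sqrt n$, delivering the required $\exp(-\Omega(\sqrt n))$ confidence.

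The main obstacle is precisely this confidence boost in (ii): the classical VV analysis only guarantees constant success probability, and upgrading it to $\exp(-\Omega(\sqrt n))$ without sacrificing the $\log \size$ improvement in sample complexity requires care. A naive bounded-differences argument applied to the raw multiplicities gives coordinate fluctuations of order $\Omega(1)$, which is far too weak; one must instead apply McDiarmid to the normalized empirical profile, truncate symbols with very large empirical probability so that they do not dominate the bounded-differences constant, and then reconcile the Poissonized analysis with the multinomial setting by a standard de-Poissonization argument. Once this concentration is in hand, combining it with the competitive lemma and rescaling $\varepsilon$ by a constant factor to absorb the factor-of-two loss finishes the proof.
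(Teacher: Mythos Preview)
Your competitive lemma (i) is correct: sorted $\ell_1$ is a genuine metric on probability multisets, so the triangle-inequality argument that underlies Lemma~\ref{lem:pml} goes through verbatim, and Hardy--Ramanujan gives the $\exp(3\sqrt n)$ profile-count factor. The problem is entirely in (ii).

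The concentration you sketch for the auxiliary estimator does not hold as written. First, Poissonization makes the \emph{multiplicities} $\mu_x$ independent, not the \emph{prevalences} $\varphi_i=\sum_x\indic_{\mu_x=i}$, so ``independent sums of indicators'' is false for the profile coordinates. Second, and more seriously, there is no reason to believe a Valiant--Valiant LP solution is $O(1/n)$-sensitive to single sample changes: LP optima are piecewise-linear in the data with Lipschitz constants that can blow up near degeneracies, and nothing in the VV analysis controls this. Indeed the best known concentration for that LP estimator is only $\exp(-n^{0.02})$ (see the comparison in Section~\ref{sec:com_dist}), which is far too weak to survive multiplication by $\exp(3\sqrt n)$. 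Your truncation remark does not address this, because the instability comes from the LP argmin, not from large multiplicities.

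The paper avoids concentrating any LP output. It instead uses Wasserstein duality to write $\ell_1^{\text{\tiny{<}}}(p,p_\varphi)=\sup_{f\in\mathcal L_1}|f(p)-f(p_\varphi)|$, where $\mathcal L_1$ is the class of $1$-Lipschitz functions on $[0,1]$. For each fixed $f$ the linear estimator $\hat f^\star$ from the proof of Theorem~\ref{thm:est} has provably small sensitivity, so McDiarmid gives the needed $\exp(-\Omega(\sqrt n))$ tail and Lemma~\ref{lem:pml} transfers it to $|f(p)-f(p_\varphi)|$. The supremum is then handled by an $\varepsilon$-net: truncate each $f$ at a level $\eta$, discretize the truncated part on a grid of size $s$ (yielding at most $e^{3s\log s}$ functions, over which one can union-bound), and control the remaining large-probability piece using the pointwise consistency of PML, $\max_x|p_\varphi(x)-p(x)|\le n^{\gamma'-1/4}$ with high probability. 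The parameters $\eta,s,\gamma'$ are chosen so that all error terms are $o(\varepsilon)$ and the union bound survives. This is the missing idea: concentration is obtained property-by-property for explicit linear estimators, not for a black-box distribution estimator.
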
\vspace{-0.5em}
For a different $c>0$, Theorem~\ref{thm:dist} also holds for APML with a better probability bound $\exp(-{n}^{2/3})$. 
In Section~\ref{sec:TPMLdist}, we show that a simple combination of the TPML and empirical estimators accurately recovers the actual distribution under a variant of the relative earth-mover distance, regardless of $k$. \vspace{-0.25em}

\subsubsection*{Identity testing}
The recent works of~\citet{DK16} and~\citet{Gol16} provided a procedure reducing identity testing to uniformity testing, while modifying the desired accuracy and alphabet size by only absolute constant factors. Hence below we consider uniformity testing. 

The uniformity tester $T_{\text{\tiny PML}}$ shown in Figure~\ref{fig:1} is purely based on PML and satisfies
\begin{Theorem}\label{thm:test}
If $\varepsilon= \tilde{\Omega}(\size^{-1/4})$ and $n=\tilde{\Omega}(\sqrt{\size}/\varepsilon^2)$, then the tester $T_{\text{\tiny PML}}(X^n)$ will be correct with probability at least $1-\size^{-2}$. The tester also distinguishes between $p=p_u$ and $\norm{p-p_u}_2\geq \varepsilon/\sqrt{k}$. 

\end{Theorem}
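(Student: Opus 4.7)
The tester in Figure~\ref{fig:1} applies a threshold to a statistic computed from the PML distribution $p_\varphi$; the natural choice --- and the one consistent with the ``purely based on PML'' description --- is the functional $g(p) := \norm{p}_2^2$. This works because $g(p_u) = 1/k$ while $g(p) - 1/k = \norm{p-p_u}_2^2$, and Cauchy--Schwarz gives $\norm{p-p_u}_2^2 \geq \norm{p-p_u}_1^2/k \geq \varepsilon^2/k$ under $H_1$. The plan is to have the tester accept $H_0$ iff $g(p_\varphi) - 1/k \leq \varepsilon^2/(2k)$, thereby reducing the testing question to estimating the symmetric property $g$ from the profile to additive accuracy $\eta = \Theta(\varepsilon^2/k)$. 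The stronger $\ell_2$ claim at the end of the theorem falls out of the same argument by skipping the Cauchy--Schwarz step.

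The proof then follows the competitive template behind Theorem~\ref{thm:est}. Step one is to exhibit a profile-based reference estimator $\hat g$ that achieves accuracy $\eta$ in the regime $n = \tilde{\Omega}(\sqrt{k}/\varepsilon^2)$ with failure probability at most $k^{-2}\exp(-3\sqrt{n})$. The standard unbiased $U$-statistic $\hat g(X^n) := \binom{n}{2}^{-1}\sum_{i<j}\indic_{X_i = X_j}$ depends only on the profile, is unbiased for $\norm{p}_2^2$, and has variance on the order of $\norm{p}_3^3/n + \norm{p}_2^4/n^2$; a Bernstein or Hanson--Wright type inequality then yields the required sub-exponential tail, using that in the testing regime $\norm{p}_\infty$ is controlled by roughly $1/\sqrt{k} + \varepsilon$.

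Step two invokes the PML blow-up lemma, the same Acharya--Das--Orlitsky--Suresh type argument underpinning Theorem~\ref{thm:est}: for any profile-based estimator with worst-case failure probability $\delta$ at accuracy $\eta$, the PML plug-in has failure probability at most $\delta \cdot \exp(3\sqrt{n})$ at accuracy $2\eta$. Combining with step one yields $\Pr(|g(p_\varphi) - g(p)| \geq \varepsilon^2/k) \leq k^{-2}$. Under $H_0$ the statistic $g(p_\varphi) - 1/k$ then lies below $\varepsilon^2/k$, and under $H_1$ it exceeds $\varepsilon^2/k - \varepsilon^2/k = 0$ plus the gap $\varepsilon^2/k$; a careful choice of threshold in between (tightening the constants above) separates the two cases with the required confidence.

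The main obstacle is step one: the standard uniformity-testing literature only establishes constant failure probability for $\hat g$, whereas here the blow-up factor $\exp(3\sqrt{n})$ forces the tail bound below $\exp(-\omega(\sqrt{n}))$. The hypothesis $\varepsilon \geq \tilde{\Omega}(k^{-1/4})$ is precisely what makes the required Bernstein bound sharp enough: in this regime the variance of $\hat g$ is dominated by its $\norm{p}_3^3/n$ term, which is small compared to $\eta^2$ by a polynomial-in-$k$ factor, leaving room for the $\exp(3\sqrt{n})$ blow-up. A secondary subtlety is that $g = \norm{\cdot}_2^2$ is not $1$-Lipschitz under the relative earth-mover metric, so Theorem~\ref{thm:est} does not apply verbatim; one instead invokes the raw competitive lemma, which only needs pointwise high-probability agreement of $\hat g$ with $g(p)$ on profiles.
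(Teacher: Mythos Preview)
Your approach has a genuine gap: you invoke the generic competitive lemma with the full $\exp(3\sqrt{n})$ profile blow-up, but the reference estimator cannot achieve failure probability $\exp(-3\sqrt{n})$ at the stated sample size. Concretely, the collision statistic at accuracy $\eta=\Theta(\varepsilon^2/k)$ has variance of order $\eta^2$ when $n=\tilde\Theta(\sqrt{k}/\varepsilon^2)$ and $p$ is near uniform, so Chebyshev or Bernstein only yield constant failure probability. Boosting via the median trick to $\exp(-3\sqrt{n})$ requires $\Theta(\sqrt{n})$ independent blocks each of size $\tilde\Omega(\sqrt{k}/\varepsilon^2)$, forcing $n=\tilde\Omega(k/\varepsilon^4)$, not $\tilde\Omega(\sqrt{k}/\varepsilon^2)$. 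Equivalently, matching a Bernstein exponent $\Theta(n\varepsilon^4)$ against $3\sqrt{n}$ requires $\varepsilon\ge\tilde\Omega(k^{-1/12})$, not $k^{-1/4}$; your assertion that the variance is ``small compared to $\eta^2$ by a polynomial-in-$k$ factor'' is simply false at these parameters.

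The paper's proof avoids this by a device you completely miss: Step~1 of the tester rejects whenever any multiplicity exceeds $3\max\{1,n/k\}\log k$. Under $p=p_u$ this filter fires with probability at most $k^{-2}$ by a binomial Chernoff bound, so it costs nothing on the null side. Its purpose is to restrict the subsequent PML argument to the profile set $\Phi^n_\cX$ of samples passing the filter, whose cardinality is only $\exp(\mathcal{O}(\max\{n/k,1\}\log^2 k))$ rather than $\exp(3\sqrt{n})$. The reference $\ell_2$-distance estimator is then boosted by the median trick to failure probability $\exp(-\Omega(n\varepsilon^2/\sqrt{k}))$, and the competitive argument is run against this \emph{restricted} profile count; the two conditions $n=\tilde\Omega(\sqrt{k}/\varepsilon^2)$ and $\varepsilon=\tilde\Omega(k^{-1/4})$ are exactly what make $\exp(-\Omega(n\varepsilon^2/\sqrt{k}))\cdot|\Phi^n_\cX|$ small in the two regimes $n\le k$ and $n>k$. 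You also overlook that under $H_1$ one cannot assume $P_2(p)=\mathcal{O}(1/k)$, which the $\ell_2$-distance estimator's guarantee requires; the paper handles large $P_2(p)$ by a separate branch using the power-sum estimator of Corollary~\ref{cor:2} over the same restricted profile set.
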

The $\tilde{\Omega}(\cdot)$ notation only hides logarithmic factors of $k$. 
The tester $T_{\text{\tiny PML}}$ is near-optimal as for uniform distribution $p_u$, 
the results in~\citep{DGT18} yield an ${\Omega}(\sqrt{\size\log \size}/\varepsilon^2)$ lower bound on
 $n_{p_u}(\varepsilon, k^{-2})$.

\addI

\begin{figure}[ht]
\begin{center}
\boxed{
\begin{aligned}
&\text{\bf Input:}\ \ \text{parameters }\size, \varepsilon, \text{and a sample }X^n\sim p \text{ with profile } \varphi.\\
& \text{\bf if } {\max}_x \mu_x(X^n)\geq 3\max\{1, n/\size\}\log \size \text{ \bf  then return } 1\text{;}\\
& \text{\bf elif } \norm{p_{\varphi}-p_u}_2\geq  3\varepsilon/ (4\sqrt{\size}) \text{ \bf  then return }1\text{;}\\ 
& \text{\bf else}\text{ \bf return }0
\end{aligned}
}
\end{center}
\caption{Uniformity tester $T_{\text{\tiny PML}}$}
\label{fig:1}\vspace{-1em}
\end{figure}

\subsection{Implications}
Several immediate implications are in order.

{\bf Theorem~\ref{thm:est}} makes PML the first plug-in estimator that is universally \emph{sample-optimal}  for a broad class of distribution properties. 
In particular, Theorem~\ref{thm:est} also covers the four properties considered in~\citep{mmcover}. To see this, as mentioned in Section~\ref{sec:prop}, $\tilde{C}_m$, $H$, and $D$ are $1$-Lipschitz on $(\Delta_\cX, R)$; as for $\tilde{S}$, the following result~\citep{mmcover} relates it to $\tilde{C}_m$ for distributions in $\Delta_{\geq {1}/{\size}}$, and proves PML's optimality.
\begin{Lemma}\label{lem:sc}
For any $\varepsilon>0$, $m=k \log(1/\varepsilon)$, and $p\in \Delta_{\geq {1}/{\size}}$, 
\[
|\tilde{S}(p)-  \tilde{C}_m(p)\log(1/\varepsilon)|\leq \varepsilon.
\]
\end{Lemma}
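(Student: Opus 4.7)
The plan is to reduce the inequality to a pointwise bound on the tail of $(1-p(x))^m$ under the constraint that all nonzero probabilities are at least $1/k$. Unpacking the normalizations with $m=k\log(1/\varepsilon)$ gives
\[
\tilde{C}_m(p)\log(1/\varepsilon) \;=\; \frac{C_m(p)}{m}\log(1/\varepsilon) \;=\; \frac{C_m(p)}{k},
\]
and $\tilde{S}(p)=S(p)/k$, so the claimed inequality is equivalent to $|S(p)-C_m(p)|\le k\varepsilon$.

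Next, I would compare $S$ and $C_m$ symbol by symbol. For $x$ with $p(x)=0$ both contributions vanish, while for $p(x)>0$ the support-size summand is $1$ and the coverage summand is $1-(1-p(x))^m$. Hence
\[
S(p)-C_m(p)\;=\;\sum_{x:\,p(x)>0}(1-p(x))^m \;\ge\; 0,
\]
so I only need an upper bound.

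For $p\in\Delta_{\ge 1/k}$, each nonzero $p(x)$ satisfies $p(x)\ge 1/k$, so $(1-p(x))^m\le (1-1/k)^m \le e^{-m/k}=\varepsilon$ by the choice $m=k\log(1/\varepsilon)$. Summing over the at most $S(p)\le k$ nonzero symbols gives $S(p)-C_m(p)\le S(p)\cdot\varepsilon\le k\varepsilon$, which after dividing by $k$ yields the lemma.

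There is no real obstacle here: the argument is just two elementary inequalities glued together, and the crucial input is the restriction $p\in\Delta_{\ge 1/k}$, without which $(1-p(x))^m$ could be close to $1$. The only thing to double-check is the conventions for $\tilde{S}$, $\tilde{C}_m$, and $m$, since the whole proof collapses into a one-line calculation once those are pinned down.
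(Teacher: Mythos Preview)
Your argument is correct and is exactly the standard one. The paper itself does not supply a proof of this lemma (it simply cites~\citep{mmcover}), but the very same computation appears verbatim in Section~\ref{sec:TPML} when the paper bounds $|\tilde{C}_m^s(p)\log(1/\varepsilon)-\tilde{S}^s(p)|$: expand the difference termwise as $\frac{1}{k}\sum_x (1-p(x))^m$, use $p(x)\ge 1/k$ to get $(1-1/k)^{k\log(1/\varepsilon)}\le\varepsilon$, and sum over at most $k$ symbols.
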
\vspace{-0.5em}
The theorem also applies to many other properties. As an example~\citep{VV11O}, given an integer $s>0$, let $f_s(x) := \min\{x, |x-1/s|\}$. Then to within a factor of two,  $f_s(p) :=\sum_x f_s(p(x))$ approximates the $\ell_1$ distance between any distribution $p$ and the closest uniform distribution in $\Delta_\cX$ of support~size~$s$. 

In Section~\ref{sec:com_add} we compare Theorem~\ref{thm:est} with existing results and present more of its implications.

{\bf Theorem~\ref{thm:renyi1} and~\ref{thm:renyi2}} imply that for all
non-integer $\alpha>3/4$ (resp. $\alpha>5/6$), the PML  (resp. APML) plug-in estimator achieves a sample complexity better than the best currently known~\citep{AO17}. This makes both the PML and APML plug-in estimators the state-of-the-art algorithms for estimating non-integer order R\'enyi entropy.  
See Section~\ref{sec:rel_ren} for an introduction of known results, and see Section~\ref{sec:comp_renyi} for a detailed comparison between existing methods and ours.   

{\bf Theorem~\ref{thm:renyi3}} shows that for all integer $\alpha>1$, the sample complexity of the PML plug-in estimator has optimal $k^{1-1/\alpha}$ dependence~\citep{AO17,OS17} on the alphabet size $k$.

{\bf Theorem~\ref{thm:dist}} makes APML the first distribution
estimator under sorted $\ell_1$ distance that is both near-linear-time
computable and sample-optimal for a range of desired accuracy
$\varepsilon$ beyond inverse polylogarithmic of $n$. In comparison, existing
algorithms~\citep{A12,jnew,ventro} either run in polynomial time in
the sample sizes, or are only known to achieve optimal sample complexity for $\varepsilon=\Omega(1/\sqrt{\log n})$, which is essentially different from the applicable range of $\varepsilon\geq n^{-\Theta(1)}$ in Theorem~\ref{thm:dist}. We provide a more detailed comparison in Section~\ref{sec:com_dist}.

{\bf Theorem~\ref{thm:test}}  
provides the first PML-based uniformity tester with near-optimal sample complexity. As stated, the tester also distinguishes between $p=p_u$ and $\norm{p-p_u}_2\geq \varepsilon/\sqrt{k}$. 
This is a stronger guarantee since by the Cauchy-Schwarz inequality,
$\norm{p-p_u}_1\geq \varepsilon$ implies $\norm{p-p_u}_2\geq \varepsilon/\sqrt{k}$.\vspace{-0.25em}

\section{Related work and comparisons}\label{sec:prior}
\subsection{Additive property estimation}\label{sec:rel_add}
The study of additive property estimation dates back at least half a
century~\citep{emiller, population, G56} and has steadily grown over the
years. For any additive symmetric property $f$ and sequence $x^n$, the
simplest and most widely-used approach uses the \emph{empirical
  (plug-in)} estimator $\hat{f}^E(x^n):=f(p_{\mu}(x^n))$
that evaluates $f$ at the empirical distribution.
While the empirical estimator
performs well in the large-sample regime, modern data science applications often
concern high-dimensional data, for which more involved methods have yielded property 
estimators that are more sample-efficient.
For example, for relatively large $\size$ and for $f$ being $\tilde{S}$, $\tilde{C}_m$,
$H$, or $D$, recent research~\citep{jvhw, O16, VV11, VV11O, 
  mmentro,W19} showed that 
the empirical estimator is optimal up to logarithmic factors, namely
$
n_f(\cP,\varepsilon)=
\Theta_{\varepsilon}({n_f(\hat{f}^E,\cP, \varepsilon)}/{\log
n_f(\hat{f}^E,\cP, \varepsilon)}),
$
where $\cP$ is $\Delta_{\geq {1}/{\size}}$ for $\tilde{S}$,
and is $\Delta_\cX$ for the other properties. 

Below we classify
the methods for deriving the corresponding sample-optimal estimators into two
categories: plug-in and approximation, and provide a high-level
description. For simplicity of illustration, we assume
that $\varepsilon\in (0,1]$.

The \emph{plug-in} approach essentially estimates the unknown distribution multiset, 
which suffices for computing any
symmetric properties. Besides the empirical and PML
estimators,
\citet{et76} proposed a linear-programming approach that finds a multiset estimate consistent with the sample's profile. This approach was then adapted and analyzed by~\citet{VV11, ventro}, yielding plug-in estimators that achieve near-optimal sample complexities for $H$ and $\tilde{S}$, and optimal sample complexity for $D$, when $\varepsilon$ is relatively large. 

The \emph{approximation} approach modifies non-smooth segments of the probability function to correct the bias of empirical estimators. A popular modification is to replace those non-smooth segments by their low-degree polynomial approximations and then estimate the modified function. 
For several properties including the above four and \emph{power sum} $P_{\alpha}(p):=\sum_{x} p(x)^\alpha$, where $\alpha$ is a given parameter, 
this approach yields property-dependent estimators~\citep{jvhw, O16,
  mmentro, W19} that are sample-optimal for all
$\varepsilon$.  

More recently,  \citet{mmcover} proved the aforementioned results on PML estimator and made it the first unified, sample-optimal plug-in estimator for  $\tilde{S}$, $\tilde{C}_m$, $H$ and $D$ and relatively large $\varepsilon$. 
Following these advances, \citet{jnew} refined the linear-programming approach and designed a plug-in estimator that implicitly performs polynomial approximation and is sample-optimal for $H$, $\tilde{S}$, and $ P_{\alpha}$ with $\alpha<1$, when $\varepsilon$ is relatively large. 

\subsection{Comparison I: Theorem~\ref{thm:est} and related
  property-estimation work}\label{sec:com_add}

In terms of the estimator's theoretical guarantee,
Theorem~\ref{thm:est} is essentially the same
as~\citet{VV11O}. However, 
for each property, $k$, and $n$, \citep{VV11O} solves a different linear program and constructs a new estimator, which takes polynomial time. 
On the other hand, both the PML estimator and its near-linear-time computable variant, once computed, can be used to accurately estimate exponentially  many properties that are $1$-Lipschitz on $(\Delta_\cX, R)$. A similar comparison holds between the PML method and the approximation approach, while the latter is provably sample-optimal for only a few properties. In addition, Theorem~\ref{thm:est} shows that the PML estimator often achieves the optimal sample complexity up to a small constant factor, which is a desired estimator attribute shared by some, but not all approximation-based estimators~\citep{jvhw, O16, mmentro, W19}. 

In term of the method and proof technique, Theorem~\ref{thm:est} is
closest to~\citet{mmcover}. On the other hand,~\citep{mmcover} 
establishes the optimality of PML for only 
four properties, while our result covers a much broader property class. 
In addition, both the above mentioned ``small constant factor'' attribute, and the confidence boost from $2/3$ to $1-\exp(-4\sqrt{n})$ are unique contributions of this work. 
The PML plug-in approach is also close in flavor to the plug-in estimators in~\citet{VV11, ventro} and their refinement in~\citet{jnew}.  
On the other hand, as pointed out previously, these plug-in estimators are provably sample-optimal for only a few properties. 
More specifically, for estimating $H$, $\tilde{S}$, and $\tilde{C}_m$, 
the plug-in estimators in~\citep{VV11, ventro} achieve sub-optimal
sample complexities with regard to the desired accuracy $\varepsilon$;
and the estimation guarantee  in~\citep{jnew} is provided in terms of
the approximation errors of $\tilde{\mathcal{O}}(\sqrt{n})$
polynomials that  are not directly related to the optimal sample
complexities.

\subsection{R\'enyi entropy estimation}\label{sec:rel_ren}
Motivated by the wide applications of R\'enyi entropy, heuristic estimators were proposed and studied 
 in the physics literature following~\cite{GP88}, and asymptotically consistent estimators were presented and 
 analyzed in the statistical-learning literature~\cite{KLS12,XE10}.  
For the special case of 1-R\'enyi (or Shannon) entropy, the works of~\citep{VV11,VV11O} determined the sample complexity to be $n_f(\varepsilon)=\Theta(\size/(\varepsilon\log \size))$. 

For general $\alpha$-R\'enyi entropy, the best-known results in~\citet{AO17} state that 
for integer and non-integer $\alpha$ values, the corresponding sample complexities $n_f(\varepsilon, \delta)$ are $\mathcal{O}_\alpha(\size^{1-1/\alpha}\log(1/\delta)/\varepsilon^{2})$ and $\mathcal{O}_\alpha(\size^{\min\{1/\alpha, 1\}}\log(1/\delta)/(\varepsilon^{1/\alpha} \log \size))$, respectively. 
The upper bounds for integer $\alpha$ are achieved by an estimator that corrects the bias of the empirical plug-in estimator. 
To achieve the upper bounds for non-integer $\alpha$ values, one needs to compute some best polynomial approximation~of~$z^\alpha$, whose degree and domain both depend on $n$, and 
construct a more involved estimator using the approximation approach~\citep{jvhw,
  mmentro} mentioned in Section~\ref{sec:rel_add}. 

\subsection{Comparison II: Theorem~\ref{thm:renyi1} to~\ref{thm:renyi3} and related R\'enyi-entropy-estimation work}\label{sec:comp_renyi}
Our result shows that a single PML estimate suffices to estimate the R\'enyi entropy of different orders $\alpha$. Such adaptiveness to the order parameter is a significant advantage of PML over existing methods.
For example, by Theorem~\ref{thm:renyi2} and the union bound, one can use a \emph{single} APML or PML to accurately approximate exponentially many non-integer order R\'enyi entropy values, yet still maintains an overall confidence of $1-\exp(-\size^{0.9})$. 
By comparison, the estimation heuristic in~\citep{AO17} requires different polynomial-based estimators for different $\alpha$ values. In particular, to construct each estimator, one needs to compute some best polynomial approximation of $z^\alpha$, which is not known to admit a closed-form formula for $\alpha\not\in\mathbb{Z}$. Furthermore, even for a single $\alpha$ and with a sample size $\sqrt{\size}$ times larger, such estimator is not known to achieve the same level of confidence as PML or APML.  

As for the theoretical guarantees, the sample-complexity upper bounds in both Theorem~\ref{thm:renyi1} and~\ref{thm:renyi2} are better than those  mentioned in the previous section. More specifically,
 for any $\alpha\in(3/4, 1)$ and $\delta\ge\exp(-\size^{0.5})$, Theorem~\ref{thm:renyi1} shows that $n_{f}(\varepsilon, \delta)=\mathcal{O}_\alpha(\size^{1/\alpha}/(\varepsilon^{1/\alpha} \log \size))$. Analogously, for any non-integer $\alpha>1$ and $\delta\ge\exp(-\size^{0.9})$, Theorem~\ref{thm:renyi2} shows that $n_{f}(\varepsilon, \delta)=\mathcal{O}_\alpha(\size/(\varepsilon^{1/\alpha} \log \size))$. Both bounds are better than the best currently known by a $\log (1/\delta)$ factor. \vspace{-0.25em}

\subsection{Distribution estimation}
Estimating large-alphabet distributions from their samples
is a fundamental statistical-learning tenet. 
Over the past few decades, distribution estimation has found numerous
applications, ranging from natural language modeling~\citep{language} to biological research~\citep{bioinfo}, and has been studied extensively. 
Under the classical $\ell_1$ and KL losses, existing research~\citep{Braess04,learning} showed that the corresponding sample complexities $n(\varepsilon)$ are $\Theta(\size/\varepsilon^2)$ and $\Theta(\size/\varepsilon)$, respectively. 
Several recent works have investigated the analogous formulation under sorted $\ell_1$ distance, and revealed a lower sample complexity of $n(\varepsilon)=\Theta(k/(\varepsilon^2\log k))$. 
Specifically, under certain conditions, \citet{ventro} and \citet{jnew} derived sample-optimal estimators using linear programming, and \citet{A12} showed that PML achieves a sub-optimal $\mathcal{O}(\size/(\varepsilon^{2.1} \log \size))$ sample complexity for relatively large $\varepsilon$. 

\subsection{Comparison III: Theorem~\ref{thm:dist} and related distribution-estimation work}\label{sec:com_dist}
We compare our results with existing ones from three different
perspectives.

{\bf Applicable parameter ranges:} As shown by~\citep{jnew}, for $\varepsilon\ll n^{-1/3}$, the 
simple empirical estimator is already sample-optimal. Hence we 
consider the parammeter range $\varepsilon= \Omega(n^{-1/3})$.
For the results in~\citep{ventro} and~\citep{A12} to hold, we would need $\varepsilon$ to be at least $\Omega(1/\sqrt{\log n})$. 
On the other hand, Theorem~\ref{thm:dist} shows that PML and APML are sample-optimal for $\varepsilon$ larger than $n^{-\Theta(1)}$. Here, the gap is exponentially large. The result in~\citep{jnew} applies to the whole range $\varepsilon=\Omega(n^{-1/3})$, which is larger than the applicable range of our results. 

{\bf Time complexity:} Both the APML and the estimator in~\citep{ventro} are near-linear-time computable in the sample sizes, while the estimator in~\citep{jnew} would require polynomial time to be computed. 

{\bf Statistical confidence:} The PML and APML achieve the desired
accuracy with an error probability at most $\exp(-\Omega(\sqrt{n}))$. On the contrary, the estimator
in~\citep{jnew} is known to achieve an error probability that
decreases only as $\mathcal{O}(n^{-3})$. The gap is again exponentially large. 
The estimator in~\citep{ventro} admits an error probability bound of $\exp(-n^{0.02})$, which is still far from ours. 

\subsection{Identity testing}
Initiated by the work of~\citep{GO00}, identity testing is arguably one of the most important and widely-studied problems in distribution property testing. Over the past two decades, a sequence of works~\citep{ CompUni13,Ach15,bffkrw, ChanD14, DK16, DKane15,  DGT18,GO00, Pan08, Val17} have addressed the sample complexity of this problem and proposed testers with a variety of  guarantees. In particular, applying a coincidence-based tester, \citet{Pan08} determined the sample complexity of uniformity testing up to constant factors; utilizing a variant of the Pearson's chi-squared statistic, \citet{Val17} resolved the general identity testing problem. 
For an overview of related results, we refer interested readers
to~\citep{testingu} and~\citep{G17}.
The contribution of this work is mainly showing that PML, is a unified
sample-optimal approach for several related problems, and as shown in
Theorem~\ref{thm:test}, also provides a near-optimal tester for this important testing problem.

\section{Numerical experiments}\label{sec:exp}
A number of different approaches have been taken to computing the PML and its approximations. 
Among the existing works, \citet{ExactPML} considered exact algebraic computation, \citet{EMMCMC04,O04} designed an EM algorithm with MCMC acceleration, \citet{Von12,Von14I} proposed a Bethe approximation heuristic, \citet{A17} introduced a sieved PML estimator and a stochastic approximation of the associated EM algorithm, and \citet{Pav17} derived a dynamic programming approach. Notably and recently, for a sample size $n$, \citet{ChaAr} constructed an explicit $\exp(-\mathcal{O}({n}^{2/3}\log^3 n))$-approximate PML whose computation time is near-linear in $n$. 

In this section, we first introduce a variant of the MCMC-EM algorithm in~\citep{EMMCMC04, O04, SPan} and then demonstrate the 
efficacy of PML on a variety of learning tasks through experiments.  

\subsection{MCMC-EM algorithm variant}\label{sec:EMv}
To approximate PML, the work~\citep{EMMCMC04} proposed an MCMC-EM algorithm, where MCMC and EM stand for Markov chain Monte Carlo and expectation maximization, respectively. A sketch of the original MCMC-EM algorithm can be found in~\citep{EMMCMC04}, and a detailed description is available in Chapter~6 of~\citep{SPan}.  The EM part  uses a simple iteration procedure to update the distribution estimates. One can show~\citep{SPan} that it is equivalent to the conventional \emph{generalized gradient ascent method}. 
The MCMC part exploits local properties of the update process and accelerates the EM computation. Below we present a variant of this algorithm that often runs faster and is more accurate.

\paragraph{Step 1:} We separate the large and small  multiplicities. Define a threshold parameter $\tau:=1.5\log^2 n$ and suppress $X^n$ in $p_\mu(X^n)$ for simplicity. For symbols $x$ with $\mu_x(X^n)\ge \tau$, estimate their probabilities by $p_{\mu}(x) = \mu_x(X^n)/n$ and remove them from the sample.  Denote the collection of removed symbols by $R$ and the remaining sample sequence by $X^r$. In the subsequent steps, we apply the EM-MCMC algorithm to $X^r$. 

The idea is simple: By the Chernoff-type bound for binomial random variables, with high probability, the
empirical frequency $\mu_x(X^n)/n$ of a large-multiplicity symbol $x$
is very close to its mean value~$p(x)$.
Hence for large-multiplicity symbols we can simply use the empirical
estimates and focus on estimating the probabilities of
small-multiplicity 
symbols. This is similar to initializing the EM
algorithm by the empirical distribution and fixing the large
probability estimates through the iterations. However, the approach
described here is more efficient. 

\paragraph{Step 2:} We determine a proper alphabet size for the output
distribution of the EM algorithm. If the true value $k$ is provided,
then we simply use $k-|R|$. Otherwise, we apply the following support
size estimator~\cite{mmcover} to $X^r$: 
\[
\hat{S}(X^r):= \sum_{j\geq 1}(1-(-(t-1))^j \Pr (L\geq j))\cdot \varphi_j(X^r),
\]
where $t=\log r$ and $L$ is an independent binomial random variable with support size $\lceil\frac{1}{2}\log_2(\frac{rt^2}{t-1})\rceil$ and success probability ${(t+1)^{-1}}$. For any $\varepsilon$ larger than an absolute constant, estimator $\hat{S}$ achieves the optimal sample complexity $n_f(\Delta_{\geq 1/k},\varepsilon)$ in estimating support size, up to constant factors~\citep{mmcover}.

\paragraph{Step 3:} Apply the MCMC-EM algorithm in~\citep{EMMCMC04, SPan} to $\varphi(X^r)$ with the output alphabet size determined in the previous step, and denote the resulting distribution estimate by $p_{r}$. (In the experiments, we perform the EM iteration for $30$ times.) 
Intuitively, this estimate corresponds to the conditional distribution given that the next observation is a symbol with small probability.

\paragraph{Step 4:}  Let $T_\mu:=\sum_{x\in R} p_\mu(x)$ be the total
probability of the large-multiplicity symbols. Treat $p_r$ as a vector
and let $p_r':=(1-T_r)\cdot p_r$. For every symbol $x\in R$, append
$p_\mu(x)$ to $p_r'$, and return the resulting vector.  Note that this 
vector corresponds to a valid discrete distribution.

\vfill
\pagebreak

\paragraph{Algorithm code}\

The implementation of our algorithm is available at~\url{https://github.com/ucsdyi/PML}.

For computational efficiency, the program code for the original MCMC-EM algorithm in~\citep{EMMCMC04, SPan} is written in C++, with a file name ``MCMCEM.cpp''. The program code for other functions is written in Python3. Note that to execute the program, one should have a 64-bit Windows/Linux system with Python3 installed (64-bit version). In addition, we also use functions provided by ``NumPy'' and ``SciPy'', while the latter is not crucial and can be removed by modifying the code slightly.  

Our implementation also makes use of ``ctypes'', a \emph{built-in}
foreign language library for Python that allows us to call C++ functions directly. Note that before calling C++ functions in Python, we need to compile the corresponding C++ source files into DLLs or shared libraries. 
We have compiled and included two such files, one is ``MCMCEM.so'', the other is ``MCMCEM.dll''.

Functions in ``MCMCEM.cpp'' can be used separately. To compute a PML estimate, simply call the function ``int PML(int MAXSZ=10000, int maximum\_EM=20, int EM\_n=100)'', where the first parameter specifies an upper bound on the support size of the
output distribution,  the second provides the maximum number of EM iteration, and the last corresponds to the sample size $n$.
This function takes as input a local file called ``proFile'', which contains the profile vector $\varphi(X^n)$ in the format of ``1 4 7 10 \ldots''. Specifically, the file ``proFile'' consists of only space-separated non-negative integers, and the $i$-th integer represents the value of $\varphi_i(X^n)$.  The output is a vector of length at most MAXSZ, and is stored in another local file called ``PMLFile''. Each line of the file ``PMLFile'' contains a non-negative real number, corresponding to a probability estimate. 

To perform experiments and save the plots to the directory containing
the code, simply execute the file ``Main.py''. To avoid further
complication, the code compares our estimator with only three other
estimators: empirical, empirical with a larger $n\log n$ sample size, and improved Good-Turing~\citep{compdist} (for distribution estimation under unsorted $\ell_1$ distance). The implementation covers all the distributions described in the next section. One can test any of these distributions by including it in ``D\_List'' of the ``main()'' function.
The implementation also covers a variety of learning tasks, such as distribution estimation under sorted and unsorted $\ell_1$ distances, and property estimation for Shannon entropy, $\alpha$-R\'enyi entropy, support coverage, and support size.

Finally, functions related to distribution and sample generation are available in file ``Samples.py''. Others including the property computation functions, the sorted and unsorted $\ell_1$ distance functions, and the previously-described support size estimator, are contained in file ``Functions.py''.

\subsection{Experiment distributions}
In the following experiments, samples are generated according to six distributions with the same support size $k=5{,}000$. 

Three of them have finite support by definition: 
uniform distribution, 
 two-step distribution with half the symbols having probability $2/(5k)$ and the other half have probability $8/(5k)$,
and a three-step distribution with one third the symbols having
probability $3/(13k)$,  another third having probability $9/(13k)$,
and the remaining having probability $27/(13k)$. 

The other three distributions are over $\{i\in\mathbb{Z}: i\geq 1\}$, and are truncated 
at $i=5{,}000$ and re-normalized: geometric distribution with parameter $g=1/k$ satisfying $p_i \propto (1-g)^i$,
 Zipf distribution with parameter $1/2$ satisfying $p_i \propto i^{-1/2}$,
and log-series distribution with parameter $\gamma = 2/k$  satisfying $p_i\propto (1-\gamma)^i/i$. 

\subsection{Experiment results and details}
As shown below, the proposed PML approximation algorithm has exceptional performance. 

\paragraph{Distribution estimation under $\boldsymbol{\ell_1}$ distance}\

We derive a new distribution estimator under the (unsorted) $\ell_1$ distance by combining the proposed PML computation algorithm with the denoising procedure in~\citep{instdist} and a missing mass estimator~\citep{compdist}. 
 
First we describe this distribution estimator, which takes a sample $X^n$ from some unknown distribution $p$. An optional input is $\cX$, the underlying alphabet. 
 
\paragraph{Step 1:} Apply the PML computation algorithm described in Section~\ref{sec:EMv} to $X^n$, 
and denote the returned vector,
consisting of non-negative real numbers that sum to $1$, by $V$.

\paragraph{Step 2:} Employ the following variant of the denoising procedure in~\citep{instdist}. 
Arbitrarily remove a total probability mass of $\log^{-2} n$ from
entries of the vector $V$ without making any entry negative. Then
for each $j\leq \log^2 n$, 
augment the vector by 
$n/(j\log^4 n)$ entries of probability $j/n$.
For every multiplicity $\mu\geq 1$ appearing in the sample,
assign to all symbols appearing $\mu$ times the following probability value.
If $\mu\geq \log^2 n$, simply assign to each of these symbols
the empirical estimate $\mu/n$;
otherwise, temporally associate a weight of $\Bin(n,v, \mu):=\binom{n}{\mu}(1-v)^{n-\mu}v^\mu$ with each entry $v$ in $V$,
 and assign to each of these symbols the current weighted median of  $V$.
 \paragraph{Step 3:} If $\cX$ is available, we can estimate the total probability mass $M(X^n):= \sum_{x\in \cX} \indic_{x\not\in X^n}$ of the unseen symbols (a.k.a., the \emph{missing mass}) by the following estimator:
 \[
 \hat{M}(X^n) := \frac{\varphi_1(X^n)}{\sum_{j} (j\varphi_j(X^n)\indic_{j> \varphi_{j+1}}+(j+1)\varphi_{j+1}(X^n)\indic_{j\le \varphi_{j+1}})}.
 \]
We equally distribute this probability mass estimate among symbols that do not appear in the sample. 

As shown below, the proposed distribution estimator achieves the state-of-the-art performance. 

In Figures~\ref{figure1}, the horizontal axis reflects the sample size
$n$, ranging from $10{,}000$ to $100{,}000$, and the vertical axis
reflects the (unsorted) $\ell_1$ distance between the true
distribution and the estimates, averaged over $30$ independent
trials. We compare our estimator with three others:
the improved Good-Turing estimator~\citep{compdist}, 
the empirical estimator, serving as a baseline, and
the empirical estimator with a larger $n\log n$ sample size.
Note that $\log n$ is roughly $11$. 
As shown in~\citep{compdist}, 
the improved Good-Turing estimator
is provably instance-by-instance near-optimal and
substantially outperforms other estimators such as the Laplace
(add-$1$) estimator, the Braess-Sauer estimator~\citep{Braess04}, and
the Krichevsky-Trofimov estimator~\citep{Krich81}.  Hence we do not
include those estimators in our comparisons. 

As the following plots show, our proposed estimator outperformed the
improved Good-Turing estimator in all experiments. 

\begin{figure*}[ht]
\begin{multicols}{3}
    \includegraphics[width=\linewidth]{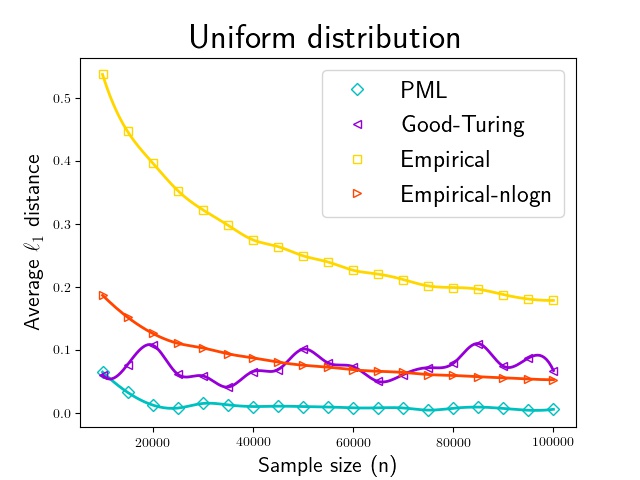}\par 
    \includegraphics[width=\linewidth]{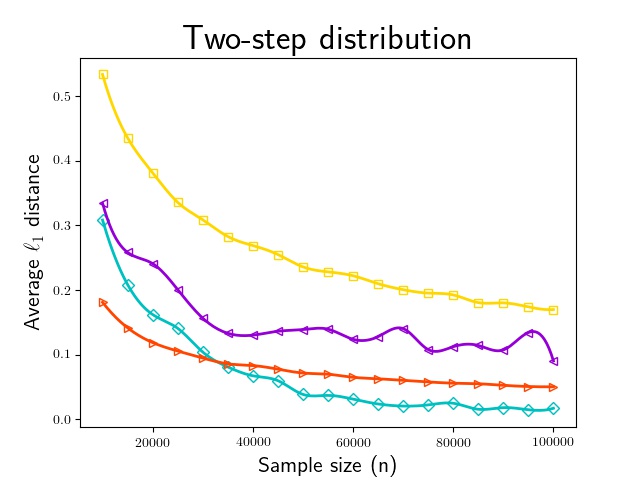}\par 
    \includegraphics[width=\linewidth]{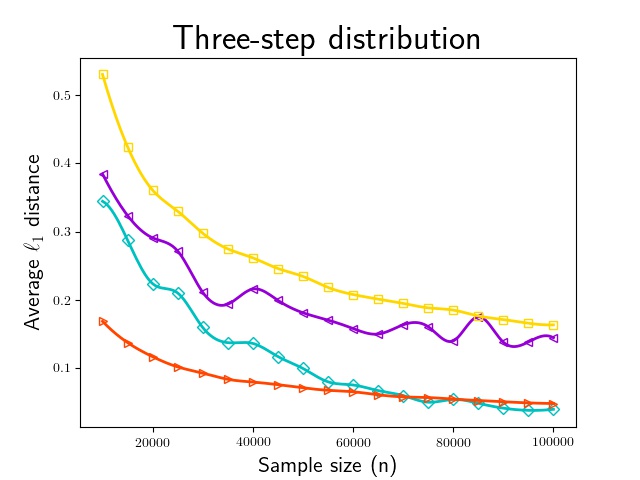}\par
     \end{multicols}\vspace{-1.5em}
\begin{multicols}{3}
    \includegraphics[width=\linewidth]{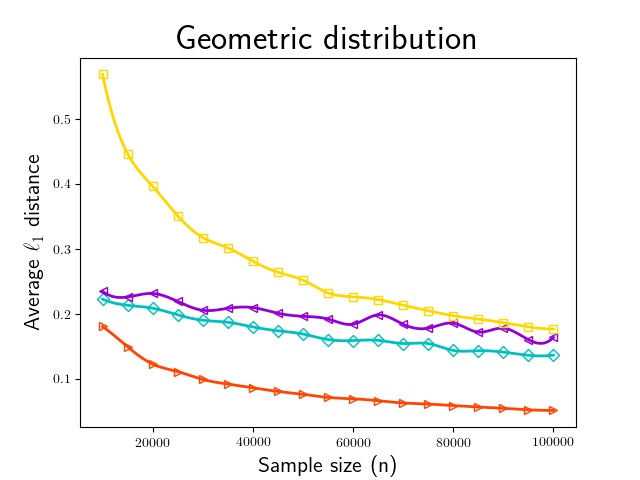}\par
   \includegraphics[width=\linewidth]{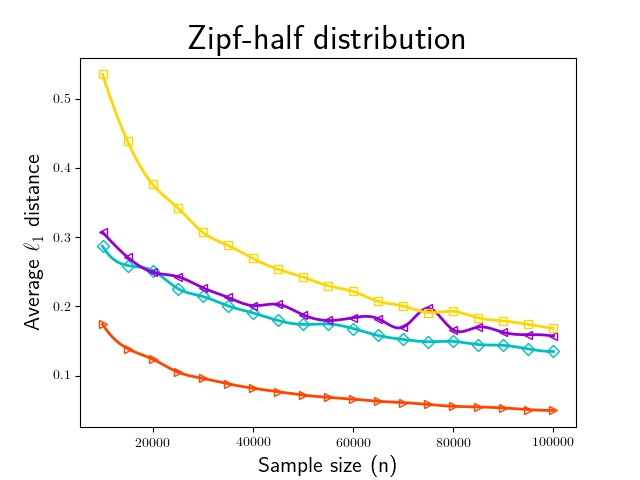}\par
   \includegraphics[width=\linewidth]{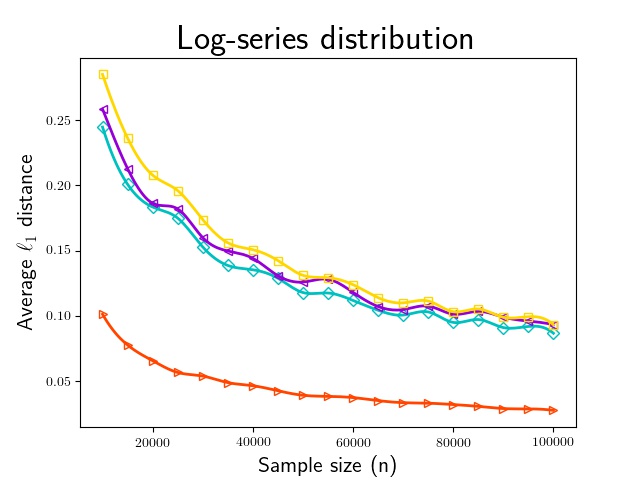}\par
\end{multicols}
\vspace{-1em}
\caption{Distribution estimation under $\protect{\ell_1}$ distance}
\label{figure1}
\end{figure*}

\vfill
\pagebreak

\paragraph{Distribution estimation under sorted $\boldsymbol{\ell_1}$ distance}\

In Figure~\ref{figure2}, the sample size $n$ ranges from $2{,}000$ to
$20{,}000$, and the vertical axis reflects the sorted $\ell_1$
distance between the true distribution and the estimates, averaged
over $30$ independent trials. We compare our estimator with that
proposed by~\citet{ventro} that utilizes linear programming, with
the empirical estimator, and with the empirical estimator with a larger
$n\log n$ sample size.

We do not include the estimator in~\citep{jnew} since there is no
implementation available, and as pointed out by the recent work of~\citep{VKVK19} (page 7), the approach in~\citep{jnew} ``is quite unwieldy. It involves significant parameter tuning and special treatment for the edge cases.'' and ``Some techniques \ldots are quite crude and likely lose large constant factors both in theory and in practice.''

As shown in Figure~\ref{figure2}, with the exception of uniform
distribution, where the estimator in~\citet{ventro} (VV-LP) is the
best and PML is the closest second, the PML estimator outperforms
VV-LP for all other tested distributions. As the underlying distribution becomes more skewed, the improvement of PML over VV-LP grows. For the log-series distribution, the performance of VV-LP is even worse than the empirical estimator. 

Additionally, the plots also demonstrate that PML has a more stable performance than VV-LP. 

\begin{figure*}[ht]
\begin{multicols}{3}
    \includegraphics[width=\linewidth]{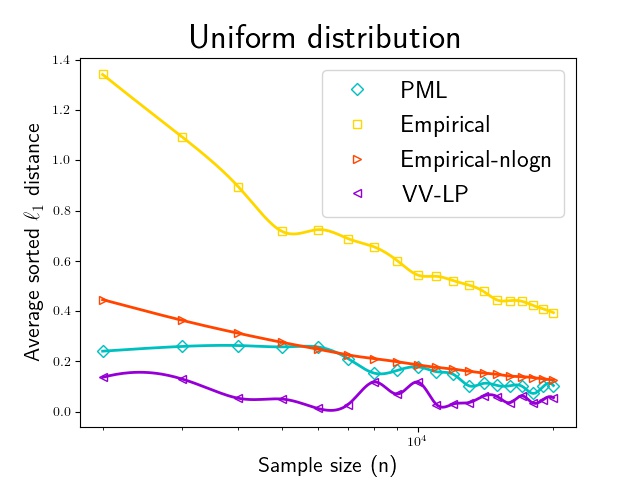}\par 
    \includegraphics[width=\linewidth]{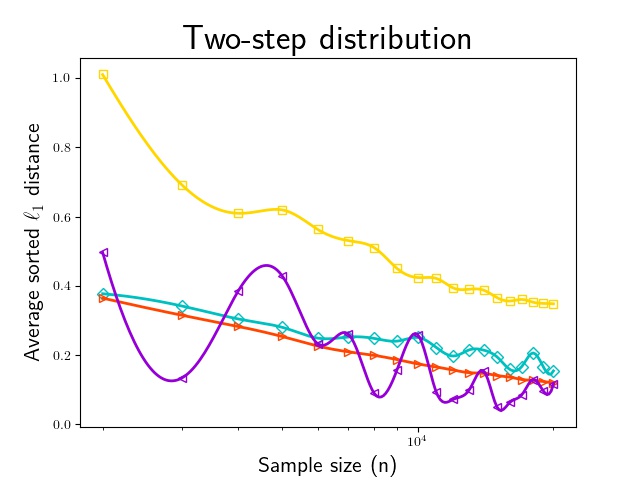}\par 
    \includegraphics[width=\linewidth]{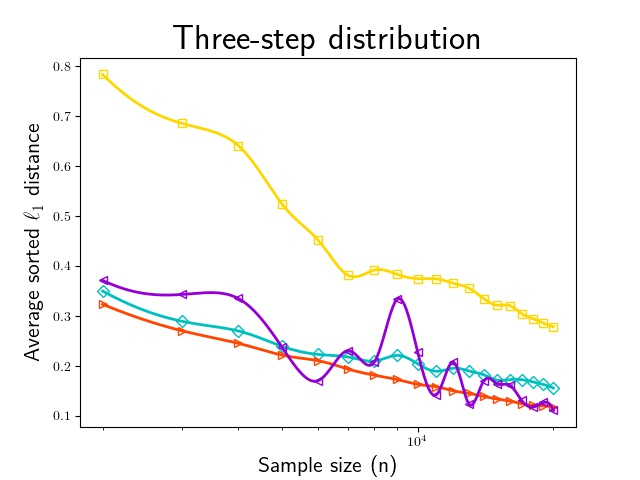}\par
     \end{multicols}\vspace{-1em}
\begin{multicols}{3}
    \includegraphics[width=\linewidth]{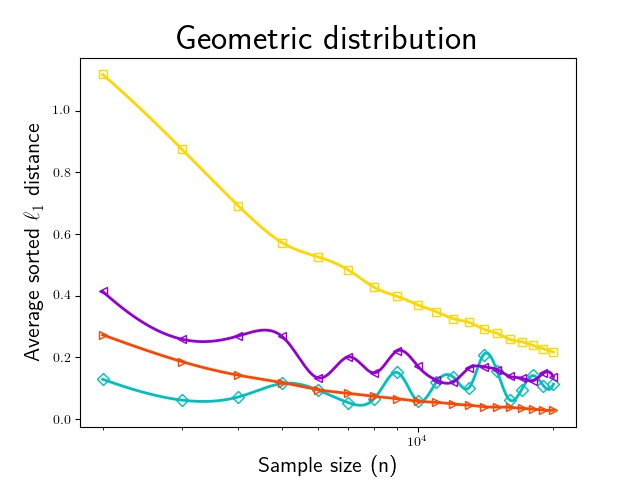}\par
   \includegraphics[width=\linewidth]{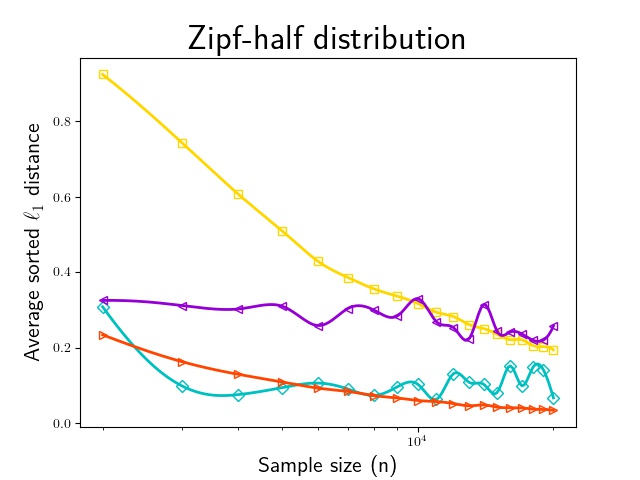}\par
   \includegraphics[width=\linewidth]{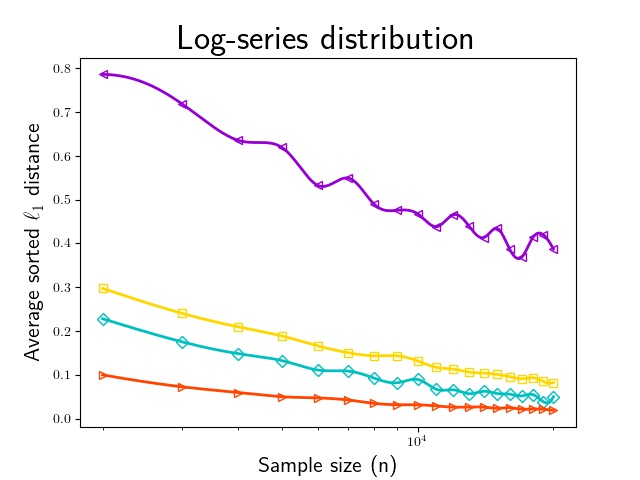}\par
\end{multicols}
\vspace{-0.5em}
\caption{Distribution estimation under sorted $\protect{\ell_1}$ distance}
\label{figure2}
\end{figure*}

\vspace{1em}
\paragraph{Shannon entropy estimation under absolute error}\

In Figure~\ref{figure3}, the sample size $n$ ranges from $1{,}000$ to
$1{,}000{,}000$, and the vertical axis reflects the absolute
difference between the true entropy values and the estimates, averaged
over $30$ independent trials. We compare our estimator with two
state-of-the-art estimators, 
\emph{WY}  in~\citep{mmentro}, 
and \emph{JVHW} in \citep{jvhw},
as well as the empirical estimator,
and the empirical estimator with a larger $n\log
n$ sample size.  Additional entropy estimators such as the
Miller-Mallow estimator~\citep{emiller}, the best upper bound (BUB)
estimator~\citep{pentro}, and the Valiant-Valiant
estimator~\citep{ventro} were compared in~\citep{mmentro,jvhw} and
found to perform similarly to or worse than the two estimators that we
compared with, therefore we do not include them
here. Also, considering~\cite{ventro}, page 50 in~\citep{PYang}
notes that ``the performance of linear programming
estimator starts to deteriorate when the sample size is very large.''

Note that the alphabet size $k$ is a crucial input to WY, but is not
required by either JVHW or our PML algorithm. In the experiments, we
provide WY with the true value of $k=5{,}000$.

As shown in the plots, our estimator performs as well as these state-of-the-art estimators. 
\vfill

\begin{figure*}[ht]
\begin{multicols}{3}
    \includegraphics[width=\linewidth]{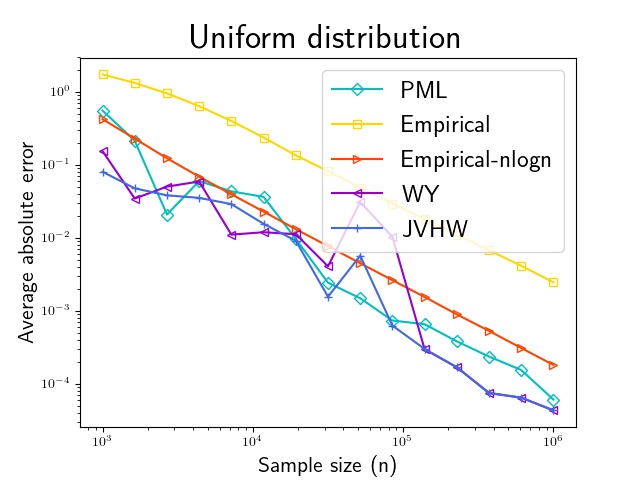}\par 
    \includegraphics[width=\linewidth]{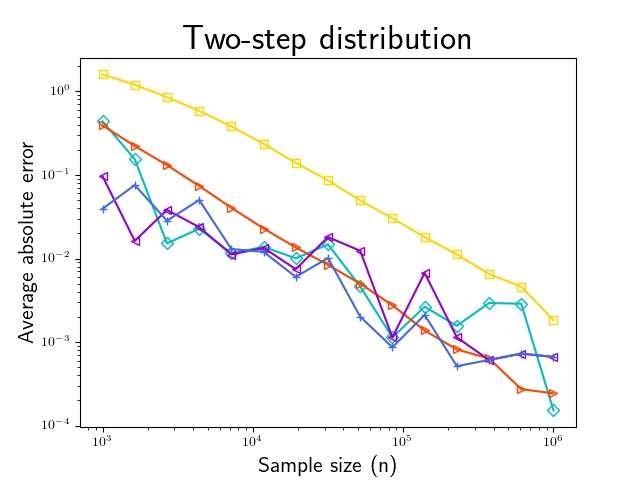}\par 
    \includegraphics[width=\linewidth]{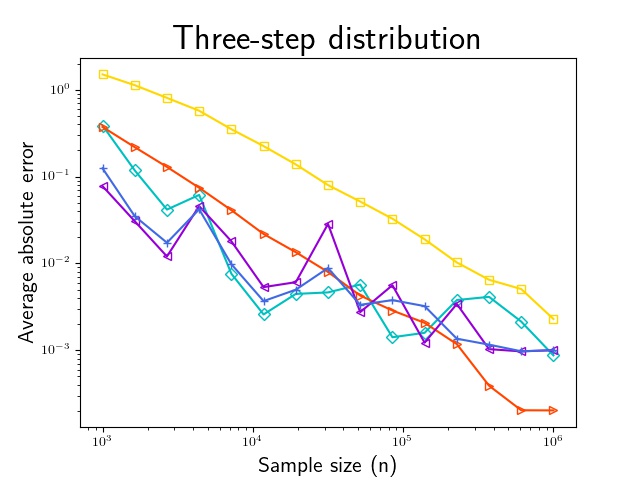}\par
     \end{multicols}\vspace{-2em}
\begin{multicols}{3}
    \includegraphics[width=\linewidth]{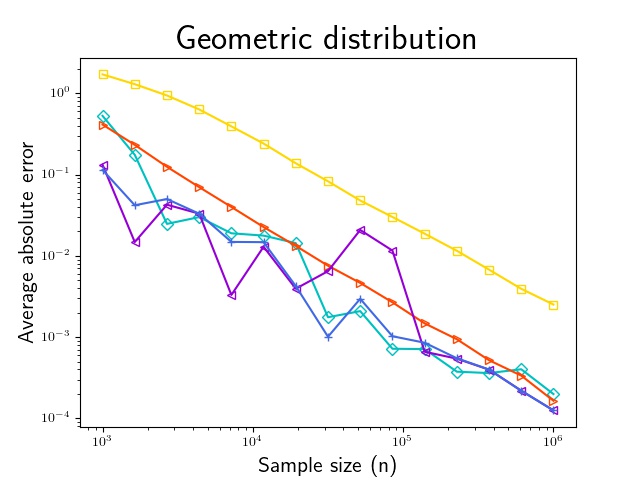}\par
   \includegraphics[width=\linewidth]{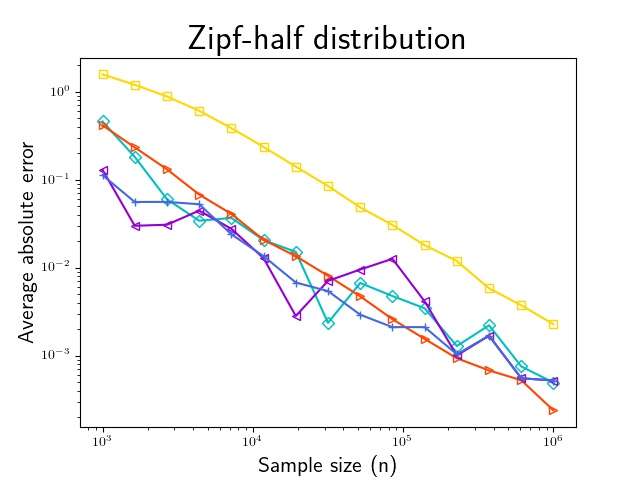}\par
   \includegraphics[width=\linewidth]{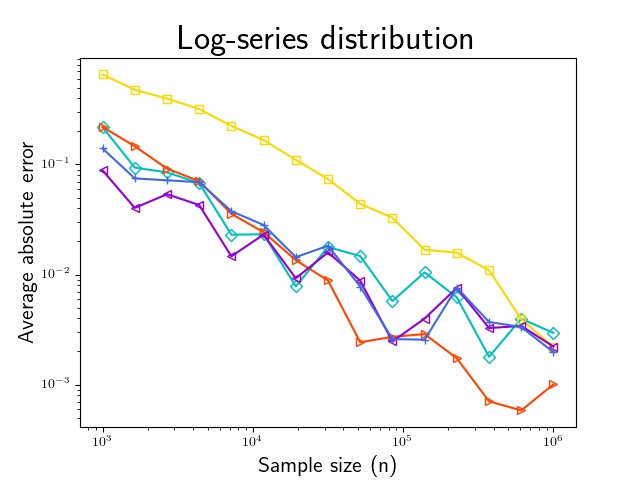}\par
\end{multicols}\vspace{-1em}
\caption{Shannon entropy estimation under absolute error}
\label{figure3}\vspace{-0.5em}
\end{figure*}

\pagebreak
\paragraph{$\boldsymbol{\alpha}$-R\'enyi entropy estimation under absolute error}\

For a distribution $p\in\Delta_\cX$, recall that the $\alpha$-power sum of $p$ is $P_{\alpha}(p)=\sum_{x} p(x)^\alpha$, implying $H_\alpha(p)=(1-\alpha)^{-1}\log(P_{\alpha}(p))$. To establish the sample-complexity upper bounds mentioned in Section~\ref{sec:rel_ren} for non-integer $\alpha$ values, \citet{AO17} first estimate the $P_{\alpha}(p)$ using the $\alpha$-power-sum estimator proposed in~\citep{jvhw}, and then substitute the estimate into the previous equation. 
 The authors of~\citep{jvhw} have implemented this two-step R\'enyi entropy estimation algorithm. In the experiments, we take a sample of size $n$, ranging from $10{,}000$ to $100{,}000$, and compare our estimator with this implementation, referred to as \emph{JVHW}, the empirical estimator, and the empirical estimator with a larger $n\log n$ sample size. Note that $\log n$ ranges from $9.2$ to $11.5$.
According to the results in~\citep{AO17}, the sample complexities for estimating $\alpha$-R\'enyi entropy are quite different for $\alpha<1$ and $\alpha>1$, hence we consider two cases: $\alpha=0.5$ and $\alpha=1.5$. 
 
 As shown in Figure~\ref{figure4} and~\ref{figure5}, our estimator clearly outperformed the one proposed by~\citep{AO17, jvhw}. 
 
We further note that for small sample sizes and several distributions, the 
 estimator in~\citep{AO17, jvhw} performs significantly worse than ours. Also, for large sample sizes, the estimators in~\citep{AO17, jvhw} degenerates to the simple empirical plug-in estimator. In comparison, our proposed estimator tracks the performance of the empirical estimator with a larger $n\log n$ sample size for nearly all the tested distributions. \vspace{-0.5em}

\begin{figure*}[h]
\begin{multicols}{3}
    \includegraphics[width=\linewidth]{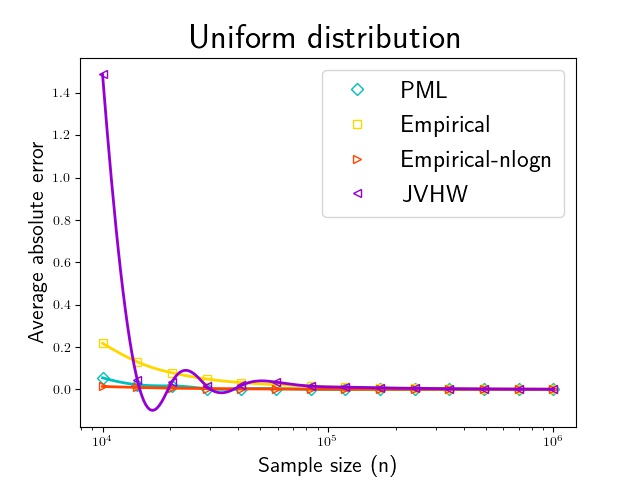}\par 
    \includegraphics[width=\linewidth]{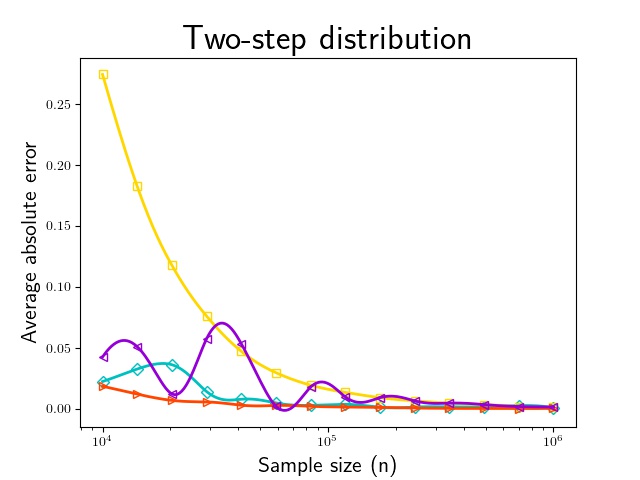}\par 
    \includegraphics[width=\linewidth]{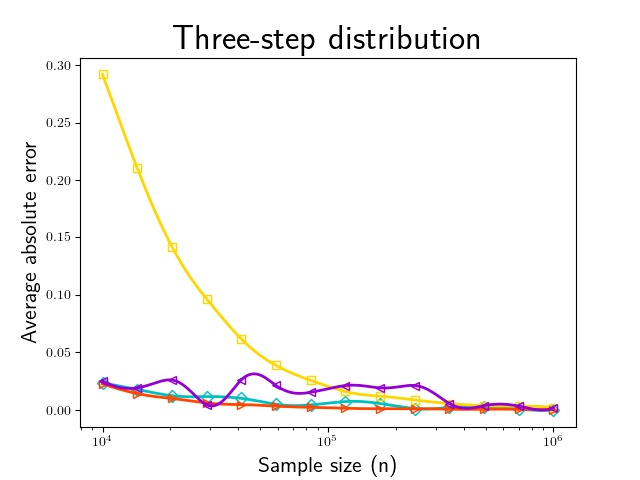}\par
     \end{multicols}\vspace{-1.5em}
\begin{multicols}{3}
    \includegraphics[width=\linewidth]{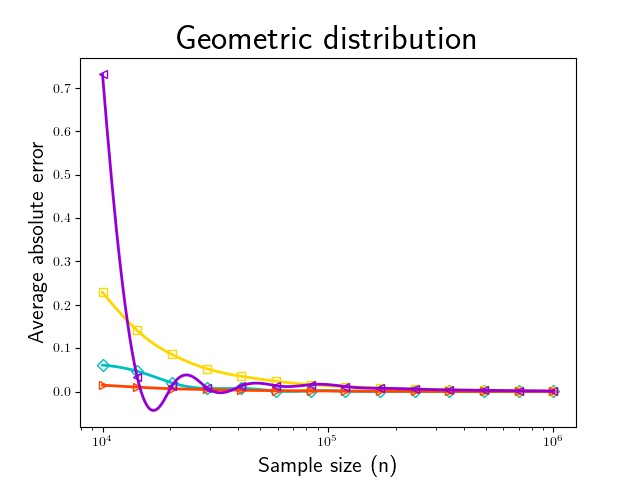}\par
   \includegraphics[width=\linewidth]{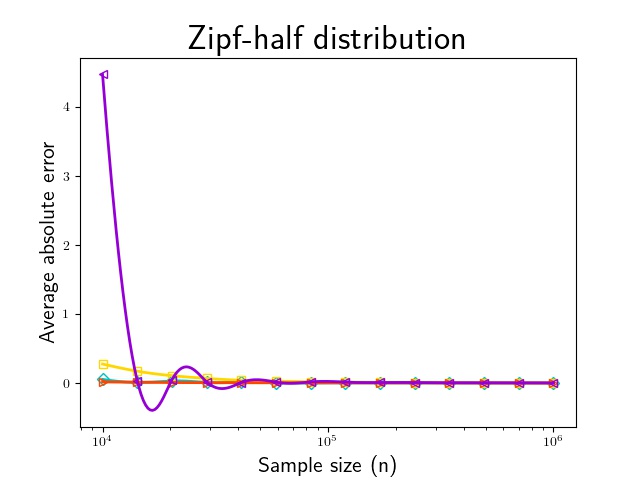}\par
   \includegraphics[width=\linewidth]{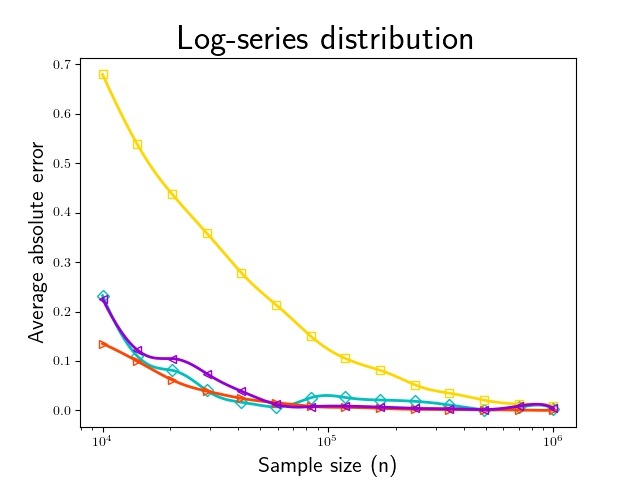}\par
\end{multicols}\vspace{-1em}
\caption{0.5-R\'enyi entropy estimation under absolute error}
\label{figure4}
\end{figure*}

\begin{figure*}[h]
\begin{multicols}{3}
    \includegraphics[width=\linewidth]{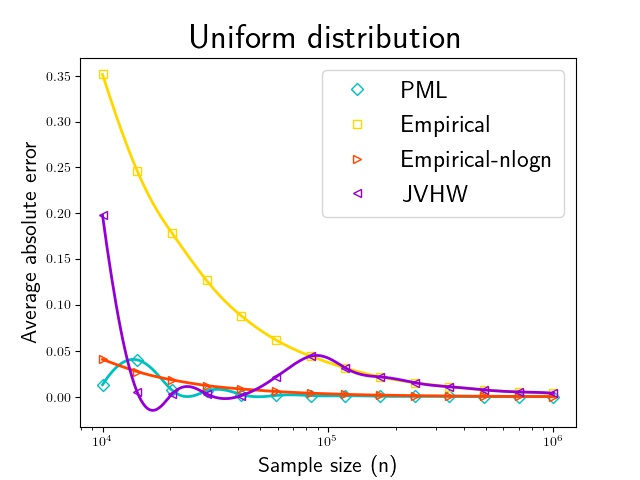}\par 
    \includegraphics[width=\linewidth]{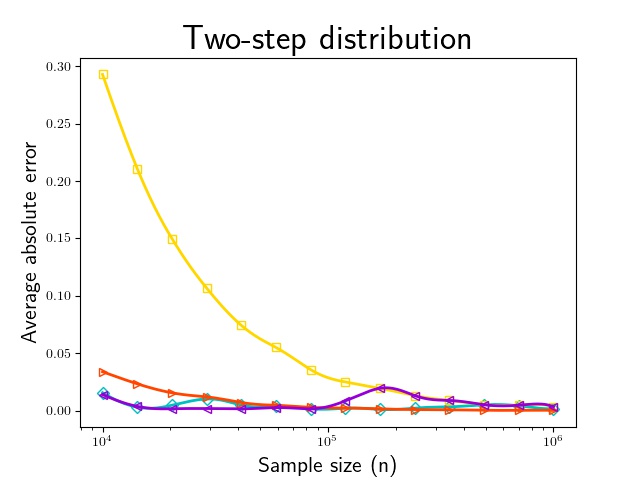}\par 
    \includegraphics[width=\linewidth]{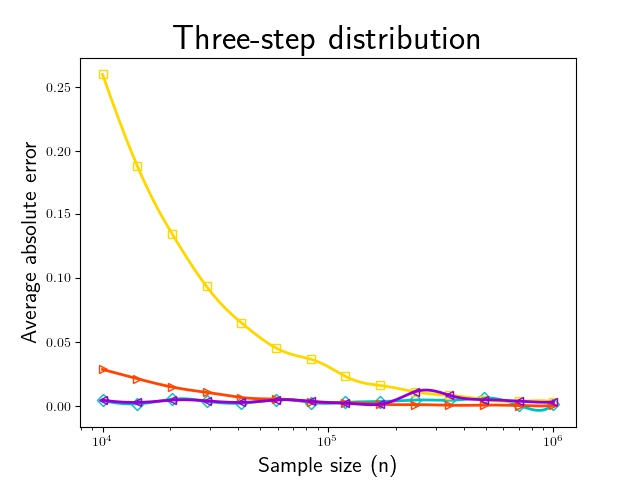}\par
     \end{multicols}\vspace{-1em}
\begin{multicols}{3}
    \includegraphics[width=\linewidth]{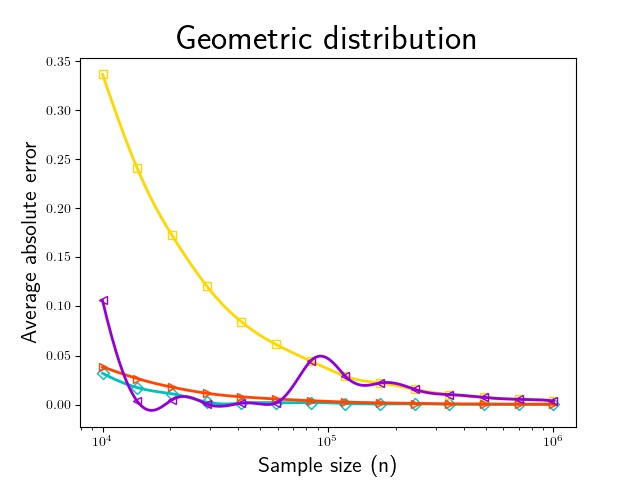}\par
   \includegraphics[width=\linewidth]{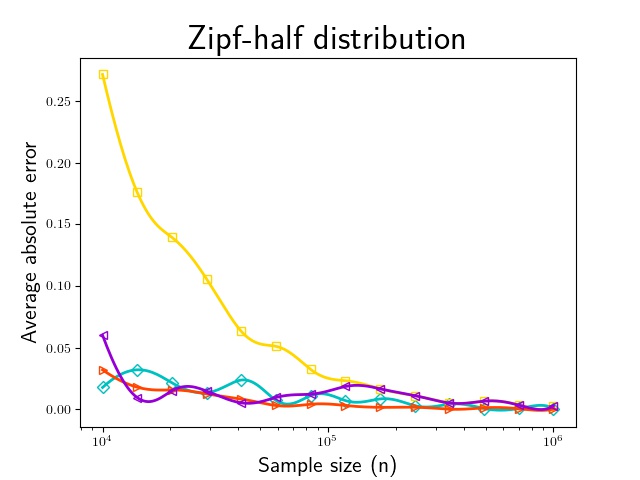}\par
   \includegraphics[width=\linewidth]{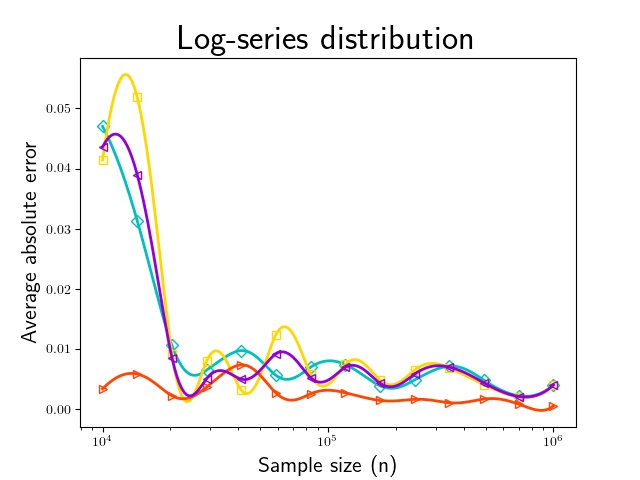}\par
\end{multicols}
\caption{1.5-R\'enyi entropy estimation under absolute error}
\label{figure5}
\end{figure*}\par

\section{ Lipschitz-property estimation}\label{sec:Lipproof}
\subsection{Proof outline of Theorem~\ref{thm:est}}
The proof proceeds as follows. 
First, fixing $n$, $\cX$, and a symmetric additive property $f$ that is $1$-Lipschitz on $(\Delta_{\cX}, R)$, we consider a related linear program defined in~\citep{VDoc12}, and lower bound the worst-case error of any estimators using the linear program's objective value, say $v$. 
Second, following the construction in~\citep{VDoc12}, we find an explicit estimator $\hat{f}^\star$ that is \emph{linear}, i.e., can be expressed as a linear combination of $\varphi_i$'s, and show optimality by upper bounding its worst-case error in terms of $v$. 
Third, we study the concentration of a general linear estimator, and through the McDiarmid's inequality~\citep{M89}, relate the tail probability of its estimate to the estimator's sensitivity to the input changes. 
Fourth, we bound the sensitivity of $\hat{f}^\star$ by the maximum difference between its consecutive coefficients, and further bound this difference by a function of $n$, showing that the estimate induced by $\hat{f}^\star$ highly concentrates around its expectation.
Finally, we invoke the result in~\citep{mmcover} that the PML-plug-in
estimator is competitive to all profile-based estimators whose
estimates are highly concentrated, concluding that PML shares the
optimality of $\hat{f}^\star$, thereby establishing Theorem~\ref{thm:est}. 
\subsection{Technical details}\label{sec:trueproof}
Let $f$ be a symmetric additive property that is $1$-Lipschitz on $(\Delta_{\cX}, R)$. Without loss of generality, we assume that $f(p)=0$ if $p(x)=1$ for some $x\in\cX$. 
\paragraph{Lower bound}%
First, fixing $n$, $\cX$, and $f$, we lower bound the worst-case error of any estimators. 

Let $u\in(0,1/2)$ be a small absolute constant. If there is an estimator $\hat{f}$ that, when given a length-$n$ sample from any distribution $p\in \Delta_\cX$, will estimate $f(p)$ up to an error of $\varepsilon$ with probability at least $1/2+u$. 
Then for any two distributions $p_1, p_2\in \Delta_\cX$ satisfying $|f(p_1)-f(p_2)|>\varepsilon$, we can use $\hat{f}$ to distinguish  $X^n\sim p_1$ from $X^n\sim p_2$, and will be correct with probability at least $1/2+u$. 

On the other hand, for any parameter $c_1\in(1/100,1/25]$ and $c_2=1/2+6c_1$, consider the corresponding linear program defined in Linear Program 6.7 in~\citep{VDoc12}, and denote by $v$ the objective value of any of its solutions. Then, Proposition 6.8 in~\citep{VDoc12} implies that we can find two distributions $p_1, p_2\in \Delta_\cX$ such that $|f(p_1)-f(p_2)|>v\cdot (1-o(1))-O(n^{-c_1} \log n)$, and no algorithm can use $\Poi(n)$ sample points to distinguish these two distributions with probability at least $1/2+u$. 

The previous reasoning yields that $v< (1+o(1))\varepsilon +O(n^{-c_1} \log n)$. By construction, $v$ is a function of $\cX, n,$ and $f$, and essentially serves as a lower bound for $\varepsilon$. 

\paragraph{Upper Bound}%
 Second, fixing $n$, $\cX$, and $f$, we construct an explicit estimator based on the previously mentioned linear program, and show optimality by upper bounding its worst-case error in terms of $v$, the linear program's objective value. 

A property estimator $\hat{f}$ is \emph{linear} if there exist real coefficients $\{\ell_i\}_{i\geq 1}$ such that the identity $\hat{f}(x^n)=\sum_{i\geq 1} \ell_i\cdot \varphi_i(x^n)$ holds for all $x^n$. The following lemma (Proposition 6.10 in~\citep{VDoc12}) bounds the worst-case error of a linear estimator when its coefficients satisfy certain conditions. 
 \begin{Lemma}\label{lem:1}
Given any positive integer $m$, and real coefficients $\{\beta_i\}_{i\geq 0}$, 
define  $\varepsilon(y):={f(y)}/{y}-e^{-my}\sum_{i\geq 0} \beta_i \cdot (my)^i/i!$. 
Let $\beta_i^\star:=\beta_{i-1}\cdot {i} /m, \forall i\geq 1$, and $\beta_0^\star:=0$. If for some $a',b',c'> 0$,
\begin{enumerate}
\item $|\varepsilon(y)|\leq a'+{b'}/{y}$,
\item $|\beta_{j}^\star-\beta_{\ell}^\star|\leq c' \sqrt{{j}/{m}}$ for any $j$ and $\ell$ such that $|j- \ell|\leq \sqrt{j}\log m$, 
\end{enumerate}
then given a sample $X^m$ from any $p\in D_\cX$, the estimator defined by $\sum_{i\geq 1}\beta_{i}^\star\cdot \varphi_i$ will estimate $f(p)$ with an accuracy of $a'+b'\cdot\size+c'\cdot\log m$ and a failure probability at most $o(1/\poly(m))$.
\end{Lemma}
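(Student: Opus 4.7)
The plan is to proceed via Poissonization and a bias-plus-concentration decomposition, in the spirit of~\citep{VDoc12}. Working under the Poisson sampling model in which the sample size is $\Poi(m)$, the multiplicities $\{\mu_x\}_{x\in\cX}$ are independent with $\mu_x\sim\Poi(mp(x))$, and standard de-Poissonization changes all probabilities by at most $o(1/\poly(m))$. Since $\beta_0^\star=0$, the estimator may be written symbol-wise as $\hat f=\sum_{x} \beta^\star_{\mu_x}$.

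The first step is to compute the bias. A direct index shift using $\beta_i^\star=\beta_{i-1}\cdot i/m$ gives
\[
\Exp[\beta^\star_{\mu_x}] = p(x)\,e^{-mp(x)}\sum_{j\ge 0}\beta_j\frac{(mp(x))^j}{j!} = f(p(x))-p(x)\,\varepsilon(p(x)),
\]
so $\Exp[\hat f]=f(p)-\sum_x p(x)\,\varepsilon(p(x))$. Condition~1 gives $\Abs{p(x)\,\varepsilon(p(x))}\le a'p(x)+b'$; summing over the at most $\size$ symbols in the support of $p$ yields $\Abs{\Exp[\hat f]-f(p)}\le a'+b'\size$.

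The second step is to establish concentration of $\hat f$ around its mean. Under Poisson sampling the summands $\{\beta^\star_{\mu_x}\}_x$ are independent, and a standard Poisson tail bound shows $\Abs{\mu_x-mp(x)}\le \sqrt{mp(x)}\log m$ with failure probability $o(1/\poly(m))$ per symbol. Condition~2, applied at $j=\lfloor mp(x)\rfloor$, then implies that on this typical event $\Abs{\beta^\star_{\mu_x}-\beta^\star_{\lfloor mp(x)\rfloor}}\le c'\sqrt{p(x)}$, so the effective fluctuation of each summand is controlled by $c'\sqrt{p(x)}$. Truncating each $\beta^\star_{\mu_x}$ to its typical window (handling atypical tails by a union bound) and applying a Bernstein-type inequality, with the per-symbol variance and uniform bounds derived from Condition~2, yields
\[
\Pr\Paren{\Abs{\sum_{x}\bigl(\beta^\star_{\mu_x}-\Exp\beta^\star_{\mu_x}\bigr)}\ge c'\log m}=o(1/\poly(m)).
\]
Combining with the bias bound via the triangle inequality and undoing the Poissonization delivers the claimed accuracy $a'+b'\size+c'\log m$ with the stated failure probability.

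The main obstacle is the concentration step. Condition~2 is only a local-regularity statement on $\beta^\star$, and the Poisson fluctuations of each $\mu_x$ live on essentially the same scale as the window in which Condition~2 applies, so the parameters must be tuned carefully so that the Poisson-tail window, the truncation bias it induces, and the Bernstein exponent together deliver both a total error of $c'\log m$ and an $o(1/\poly(m))$ failure probability. Symbols $x$ with very small $p(x)$ need extra care, because Poisson concentration is weak there; the remedy is to invoke Condition~2 at a small integer $j$ and to use the fact that the combined mass of such symbols is negligible.
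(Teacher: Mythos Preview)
The paper does not give its own proof of this lemma; it is quoted verbatim as Proposition~6.10 of~\citep{VDoc12} and used as a black box. Your plan---Poissonize, compute the per-symbol bias via the index shift $\beta_i^\star=\beta_{i-1}\,i/m$ to get $\Exp[\beta^\star_{\mu_x}]=f(p(x))-p(x)\,\varepsilon(p(x))$ and hence the $a'+b'\size$ bias bound from Condition~1, then use Poisson tails together with the local regularity in Condition~2 to bound per-symbol fluctuations by $c'\sqrt{p(x)}$ and conclude concentration at level $c'\log m$---is exactly the argument in~\citep{VDoc12}, so there is nothing to contrast.
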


Following the construction in~\citep{VDoc12} (page 124), let $z:=(z_0, z_1, \ldots)$ be the vector of coefficients induced by any solution of the dual program of the previously mentioned linear program. For our purpose, the way in which these coefficients are derived is largely irrelevant. One can show that $|z_\ell|\leq v \cdot n^{c_2},\forall \ell\ge 0$.
 Let $t_n:=2n^{-c_1}\log n$ and $\alpha\in(0, 1)$, and define 
 \[
\beta_i:=(1-e^{-t_n\alpha i})f\Paren{\frac{(i+1)\alpha}{n}}\frac{n}{(i+1)\alpha}+\sum_{\ell=0}^i z_\ell (1-t_n)^\ell\alpha^\ell (1-\alpha)^{i-\ell}\binom{i}{\ell}.
\]
 for any $i\leq n$, and $\beta_i:=\beta_n$ for $i>n$. The next lemma shows that we
 can find proper parameters $a, b,$ and $c$ to apply Lemma~\ref{lem:1} to the above construction. Specifically,
 \begin{Lemma}\label{lem:bound}
 For any $\alpha\in[1/100,1)$ and some $a'',b''\geq 0$ such that $a''+b''\size\leq v$, if $v\leq \log^2 n$ and $c_1, c_2$ satisfy $\alpha c_2+(3/2-\alpha) c_1\leq 1/4$, the two conditions in Lemma~\ref{lem:1} hold for the above construction with $m=n/\alpha$, $a'=a''+\mathcal{O}(n^{-c_1/2}\log^2 n)$, $b'=b''(1+\mathcal{O}(t_n))$, and $c'=\mathcal{O}(n^{-1/4}\log^3 n)$. 
Furthermore, for any $i\geq 0$,  we have $|\beta_i|\leq \mathcal{O}(n^{\alpha c_2+ (1-\alpha) c_1}\log^3 n)$.
 \end{Lemma}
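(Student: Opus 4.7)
The plan is to verify the two hypotheses of Lemma~\ref{lem:1} (with $m=n/\alpha$) for the decomposition $\beta_i=A_i+B_i$, where $A_i:=(1-e^{-t_n\alpha i})f((i+1)\alpha/n)\,n/((i+1)\alpha)$ is the $f$-term and $B_i$ the $z$-term, and to establish the magnitude bound separately. The central tool is Poisson thinning: if $I\sim\Poi(\lambda)$ and $L\mid I\sim\Bin(I,\alpha)$ then $L\sim\Poi(\lambda\alpha)$. Combined with $m\alpha=n$, a direct computation collapses the binomial inside $B_i$ and gives
\[
\Exp_{I\sim\Poi(my)}[B_I] \;=\; e^{-t_n ny}\cdot\Exp_{L\sim\Poi((1-t_n)ny)}[z_L].
\]
The dual feasibility that produces $z$ (Proposition~6.8 of~\citep{VDoc12}) is precisely the statement that this Poisson expectation approximates $f(y)/y$ up to the LP value $v$, in the form $a''+b''/y$ with $a''+b''k\le v$.

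To verify Condition~1, write $\varepsilon(y)=f(y)/y-\Exp_I[A_I]-\Exp_I[B_I]$. The $B$-term already delivers $f(y)/y$ up to the $a''+b''/y$ slack, provided that Part A supplies the prefactor correction coming from $e^{-t_n ny}$. Unfolding $\Exp_{I\sim\Poi(my)}[(1-e^{-t_n\alpha I})\tilde f((I+1)\alpha/n)]$ with $\tilde f(y):=f(y)/y$, applying the identity $\Exp_{I\sim\Poi(\lambda)}[e^{-cI}]=e^{-\lambda(1-e^{-c})}$, and reindexing $I\mapsto I+1$ reduces Part A to $\tilde f$ evaluated at a random point concentrating around $y$, with multiplier $(1-e^{-t_n ny})$ matching Part B's prefactor up to lower-order terms. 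The 1-Lipschitzness of $f$ on $(\Delta_\cX,R)$ bounds the drift of $\tilde f$ over a logarithmic neighborhood of $y$, and a Poisson Chernoff tail handles atypical $I$. The accumulated error is $a''+b''/y+\mathcal{O}(n^{-c_1/2}\log^2 n)$, matching the stated $a'$ and $b'$.

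For Condition~2, telescope $\beta_j^\star-\beta_\ell^\star$ into single-index differences and reduce to bounding $|\beta_{i+1}-\beta_i|$. On Part A, 1-Lipschitzness of $f$ on $(\Delta_\cX,R)$ forces $|\tilde f((i+2)\alpha/n)-\tilde f((i+1)\alpha/n)|=\mathcal{O}(1/i)$, which together with the smoothness of $(1-e^{-t_n\alpha i})$ gives a per-step change of $\mathcal{O}(1/i)$. On Part B, Pascal's identity $\binom{i+1}{\ell}-\binom{i}{\ell}=\binom{i}{\ell-1}$ rewrites $B_{i+1}-B_i$ as a further binomial mixture of $z$-values, bounded by $|z|_\infty\cdot\alpha\le v\cdot n^{c_2}\cdot\alpha$ times a tail probability. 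Summing $\sqrt{j}\log m$ such increments and dividing by $m$ yields $c'=\mathcal{O}(n^{-1/4}\log^3 m)$ exactly under the hypothesis $\alpha c_2+(3/2-\alpha)c_1\leq 1/4$. The magnitude bound $|\beta_i|=\mathcal{O}(n^{\alpha c_2+(1-\alpha)c_1}\log^3 n)$ then follows by combining $|\tilde f(y)|=\mathcal{O}(\log n)$ on the relevant range for Part A with $|z_\ell|\le v\cdot n^{c_2}\le n^{c_2}\log^2 n$ for Part B, tracking how the weights $(1-t_n)^\ell\alpha^\ell(1-\alpha)^{i-\ell}\binom{i}{\ell}$ concentrate near $\ell\approx\alpha i$.

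The hardest step is Condition~1: the LP guarantee on $z$ controls only a Poisson expectation, so matching it to $f(y)/y$ requires Part A to supply a delicate cutoff-style correction, and this alignment must leave a polylogarithmic (rather than polynomial) residual. The parameter constraint $\alpha c_2+(3/2-\alpha)c_1\leq 1/4$ enters precisely here, balancing the $n^{c_2}$ growth of $|z_\ell|$ against the $n^{-c_1}$-scale Poisson localization error of $\Poi(ny/\alpha)$.
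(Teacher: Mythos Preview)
Your overall plan—decomposing $\beta_i$ into the $f$-part $A_i$ and the $z$-part $B_i$, handling Condition~1 via Poisson thinning, and using the Lipschitz property of $f$ on $(\Delta_\cX,R)$—matches the paper (which in turn follows Proposition~6.19 of~\citep{VDoc12}). But there is a genuine gap in how you control the $z$-part.

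You only invoke the pointwise bound $|z_\ell|\le v\cdot n^{c_2}$. With that alone, the best you can extract is
\[
|B_i|\;\le\;|z|_\infty\sum_{\ell}\bigl((1-t_n)\alpha\bigr)^\ell(1-\alpha)^{i-\ell}\binom{i}{\ell}\;=\;|z|_\infty\,(1-t_n\alpha)^{i}\;\le\;\mathcal{O}(n^{c_2}\log^2 n),
\]
which is strictly weaker than the claimed $|\beta_i|=\mathcal{O}(n^{\alpha c_2+(1-\alpha)c_1}\log^3 n)$ since $c_2>c_1$. Your remark about ``tracking how the weights concentrate near $\ell\approx\alpha i$'' cannot close this gap: concentration of the binomial weights does not shrink an $\ell_\infty$ bound on the $z_\ell$'s. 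The paper instead imports from~\citep{VDoc12} the nontrivial estimate $\sqrt{\alpha}\,\|\tilde z_\alpha\|_2=\mathcal{O}(n^{\alpha c_2+(1-\alpha)c_1}\log^3 n)$, where $\tilde z_\alpha(i)=B_i$; this uses additional $\ell_2$-type structure of the LP dual, not just the pointwise bound. That $\ell_2$ estimate immediately gives $|B_i|\le\|\tilde z_\alpha\|_2$, which is exactly the magnitude bound.

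The same gap breaks your Condition~2 argument. Your telescoping plus Pascal gives $|B_{i+1}-B_i|\le 2\alpha|z|_\infty(1-t_n\alpha)^i$; for $i\le 400 n^{c_1}$ the factor $(1-t_n\alpha)^i$ is essentially $1$, so summing $\sqrt{j}\log m$ increments and normalizing yields $c'=\mathcal{O}(n^{c_2+c_1-1/2}\log^3 n)$, which with $c_2=1/2+6c_1$ has exponent $7c_1>0$ and does not vanish. The paper instead bounds $\sqrt{n/j}\,|(j/n)\tilde z_\alpha(j-1)-(\ell/n)\tilde z_\alpha(\ell-1)|$ directly by $\mathcal{O}(\sqrt{j/n})\cdot\max_i|\tilde z_\alpha(i)|$ for $j,\ell\le 400n^{c_1}$, and it is precisely the refined bound $\max_i|\tilde z_\alpha(i)|=\mathcal{O}(n^{\alpha c_2+(1-\alpha)c_1}\log^3 n)$ that produces the exponent $\alpha c_2+(3/2-\alpha)c_1-1/2\le -1/4$. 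So the constraint $\alpha c_2+(3/2-\alpha)c_1\le 1/4$ is used in Condition~2 (as you correctly wrote there), not in Condition~1 as your final paragraph asserts. To fix the proof, you need to either reproduce or cite the $\ell_2$ bound on $\tilde z_\alpha$ from~\citep{VDoc12}; the pointwise $|z_\ell|\le v n^{c_2}$ is not enough.
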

This lemma differs from the results established in the proof of Proposition 6.19 in~\citep{VDoc12} only in the applicable range of $\alpha$, where the latter assumes that $\alpha\in[1/2, 1)$. 
For completeness, we will present a proof of Lemma~\ref{lem:bound} in Appendix~\ref{sec:lemproof}.

By Lemma~\ref{lem:1} and~\ref{lem:bound}, if $v\leq \log^2 n$, given a sample $X^{n/\alpha}$ from any $p\in \Delta_\cX$, the linear estimator $\sum_{i\geq 1}\beta_{i}^\star\cdot \varphi_i$ will estimate $f(p)$ with an accuracy of $a'+b'\size+c'\log (n/\alpha)=a''+\mathcal{O}(n^{-c_1/2}\log^2 n)+b''\size(1+\mathcal{O}(t_n))+\mathcal{O}(n^{-1/4}\log^4 n)\leq v (1+\mathcal{O}(t_n))+\mathcal{O}(n^{-c_1/2}\log^2 n)$ and a failure probability at most $o(1/\poly(n))$. 
Recall that for fixed $\cX, n,$ and $f$, the value of $v$ is a constant, thus can be computed without samples. 
 Furthermore according to the last claim in Proposition 6.19 in~\citep{VDoc12}, for $v> \log^2 n$, the estimator that always returns $0$ has an error of at most $(1+o(1))v$. Hence with high probability, the estimator $\hat{f}^{\star}:=\sum_{i\geq 1}(\beta_{i}^\star\cdot \indic_{v\leq \log^2 n})\cdot \varphi_i$ will estimate $f(p)$ up to an error of $v (1+o(1))+\mathcal{O}(t_n\log n)$, for any possible values of $v$.

 \paragraph{Concentration of linear estimators} Third, we slightly diverge from the previous discussion and study the concentration of general linear estimators. 
 
 The \emph{sensitivity} of a property estimator $\hat{f}:\cX^*\to \mathbb{R}$ for a given input size $n$ is 
 \[
 s_n(\hat{f}):= \max\Brace{f(x^n)-f(y^n): x^n \text{ and } y^n \text{ differ in one element}},
 \]
 the maximum change in its value when the input sequence is modified at exactly one location. 
 For any $p\in \Delta_\cX$ and $X^n\sim p$, the following corollary of the McDiarmid's inequality~\citep{M89} 
 relates the two-side tail probability of $\hat{f}(X^n)$ to $s_n(\hat{f})$.
 \begin{Lemma}\label{lem:sensitivitybound}
For all $t\geq 0$, we have
$
 \Pr\Paren{|\hat{f}(X^n)-\EE[\hat{f}(X^n)]|\geq t}\leq 2\exp(-2t^2\cdot(\sqrt{n}s_n(\hat{f}))^{-2}).
$
\end{Lemma}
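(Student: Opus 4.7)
The plan is to derive this bound as a direct application of McDiarmid's bounded-differences inequality to the function $\hat{f}$ viewed as a map $\mathcal{X}^n \to \mathbb{R}$. Since $X^n = (X_1, \ldots, X_n)$ is an i.i.d. sample from $p$, the coordinates $X_1, \ldots, X_n$ are mutually independent random variables taking values in $\mathcal{X}$, which is precisely the setup required by McDiarmid.

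First I would verify the bounded-differences hypothesis. By the definition of sensitivity, for every index $i \in \{1, \ldots, n\}$ and every pair of sequences $x^n, y^n \in \mathcal{X}^n$ that differ only in the $i$-th coordinate, we have $|\hat{f}(x^n) - \hat{f}(y^n)| \le s_n(\hat{f})$. Hence $\hat{f}$ satisfies the bounded-differences condition with constants $c_i = s_n(\hat{f})$ uniformly in $i$.

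Next I would apply McDiarmid's inequality in its standard two-sided form, which gives
\[
\Pr\bigl(|\hat{f}(X^n) - \EE[\hat{f}(X^n)]| \ge t\bigr) \le 2\exp\!\left(-\frac{2t^2}{\sum_{i=1}^n c_i^2}\right).
\]
Plugging in $c_i = s_n(\hat{f})$ for each $i$ yields $\sum_{i=1}^n c_i^2 = n \cdot s_n(\hat{f})^2 = (\sqrt{n}\, s_n(\hat{f}))^2$, which matches the denominator in the target exponent exactly. This concludes the argument.

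There is no real obstacle here; the only technical point worth double-checking is that the sensitivity $s_n(\hat{f})$, as defined in the paper, bounds the change under modification of a single coordinate regardless of the other coordinates' values, which is exactly what McDiarmid requires (as opposed to a weaker average-case sensitivity). Since the definition given takes a maximum over all pairs of sequences differing in one element, this worst-case property holds by construction, and the lemma follows.
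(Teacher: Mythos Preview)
Your proposal is correct and matches the paper's approach exactly: the paper states this lemma as an immediate corollary of McDiarmid's inequality, with the bounded-differences constants $c_i = s_n(\hat{f})$ uniformly in $i$, which is precisely what you do.
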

Define $\ell_0:=0$. The next lemma bounds the sensitivity of a linear estimator $\hat{f}:=\sum_{i\geq 1} \ell_i\cdot \varphi_i$ in terms of $\max_{i\geq 1}|\ell_i-\ell_{i-1}|$,
the maximum absolute difference between its consecutive coefficients.
  \begin{Lemma}\label{lem:sen}
For any $n$ and linear estimator $\hat{f}:=\sum_{i\geq 1} \ell_i\cdot \varphi_i$, we have
$
s_n(\hat{f})\leq 2\max_{i\geq 1}|\ell_i-\ell_{i-1}|
$.
\end{Lemma}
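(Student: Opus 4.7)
The plan is to bound $s_n(\hat{f})$ by directly tracking how the profile changes when a single sample coordinate is modified, and then invoking linearity of $\hat{f}$. Fix two sequences $x^n,y^n \in \cX^n$ that differ at exactly one position, say $x^n$ has an $a$ where $y^n$ has a $b$, with $a\neq b$; all other positions agree. Let $k_a:=\mu_a(x^n)$ and $k_b:=\mu_b(x^n)$, so after the swap, $\mu_a(y^n)=k_a-1$ and $\mu_b(y^n)=k_b+1$, while every other symbol has the same multiplicity in both sequences.

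Next I would translate this into a change in the profile. The symbol $a$ migrates from contributing to $\varphi_{k_a}$ to contributing to $\varphi_{k_a-1}$, and the symbol $b$ migrates from contributing to $\varphi_{k_b}$ to contributing to $\varphi_{k_b+1}$. Since $\varphi_0$ is suppressed from the profile but $\ell_0$ is defined to be $0$, the formula
\[
\hat{f}(y^n)-\hat{f}(x^n)=(\ell_{k_a-1}-\ell_{k_a})+(\ell_{k_b+1}-\ell_{k_b})
\]
holds \emph{uniformly}, covering the edge cases $k_a=1$ (symbol $a$ disappears) and $k_b=0$ (symbol $b$ is new). Applying the triangle inequality then gives
\[
|\hat{f}(y^n)-\hat{f}(x^n)| \leq |\ell_{k_a-1}-\ell_{k_a}| + |\ell_{k_b+1}-\ell_{k_b}| \leq 2\max_{i\geq 1}|\ell_i-\ell_{i-1}|,
\]
and taking the maximum over all neighboring pairs $x^n,y^n$ yields the claim.

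The argument has no serious obstacle, but the one place that requires care is the bookkeeping for the boundary cases $k_a=1$ and $k_b=0$, plus the degenerate configurations where $k_a-1=k_b$ or $k_a=k_b+1$ (in which the two single-symbol contributions partially, or even totally, cancel). In every such configuration the displayed identity for $\hat{f}(y^n)-\hat{f}(x^n)$ still holds as written once $\ell_0:=0$ is adopted, so the triangle-inequality bound remains valid; there is nothing further to verify.
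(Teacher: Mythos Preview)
Your proof is correct and follows essentially the same approach as the paper's: track the two multiplicity changes induced by swapping one coordinate, express $\hat{f}(y^n)-\hat{f}(x^n)$ as a sum of two consecutive-coefficient differences (using $\ell_0:=0$), and apply the triangle inequality. Your explicit handling of the edge cases $k_a=1$, $k_b=0$, and the overlapping configurations is a nice addition that the paper leaves implicit.
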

\begin{proof}
Let $x^n$ and $y^n$ be two arbitrary sequences over $\cX$ that differ in one element. 
Let $i$ be the index where $x_i\not=y_i$. 
Then by definition, the following multiplicity equalities hold: $\mu_{x_i}(x^n)=\mu_{x_i}(y^n)+1$, $\mu_{y_i}(y^n)=\mu_{y_i}(x^n)+1$, and 
$\mu_{x}(x^n)=\mu_{x}(y^n)$ for $x\in \cX$ satisfying $x\not=x_i, y_i$. 
For simplicity of notation, let $\mu_0:= \mu_{x_i}(x^n)$, $\mu_1:= \mu_{y_i}(y^n)$, and for any $i\geq 1$, let
$\hat{f}_{i}:=\ell_{i-1}\cdot\varphi_{i-1}+\ell_{i}\cdot\varphi_{i}$. 

The first multiplicity equality implies 
$\varphi_{\mu_0}(x^n)=\varphi_{\mu_0}(y^n)+1$ and $\varphi_{\mu_0-1}(x^n)=\varphi_{\mu_0-1}(y^n)-1$.
Therefore, we have $\hat{f}_{\mu_0}(x^n)-\hat{f}_{\mu_0}(y^n)=\ell_{\mu_0}-\ell_{\mu_0-1}$. 
Similarly, the second equality implies
$\hat{f}_{\mu_1}(x^n)-\hat{f}_{\mu_1}(y^n)=-\ell_{\mu_1}+\ell_{\mu_1-1}$. 
The third equality combines these two results and yields
\[
\hat{f}(x^n)-\hat{f}(y^n) = \ell_{\mu_0}-\ell_{\mu_0-1}+(-\ell_{\mu_1}+\ell_{\mu_1-1}).
\]
Applying the triangle inequality to the right-hand side completes the proof. 
\end{proof}

By these two lemmas, we have the following result for the concentration of linear estimators. 
\begin{Corollary}\label{cor: conc}
For any $t\geq 0$, $p\in \Delta_\cX$, and $\hat{f}:=\sum_{i\geq 1} \ell_i\cdot \varphi_i$, if $X^n \sim p$, then
\[
\Pr\Paren{|\hat{f}(X^n)-\EE[\hat{f}(X^n)]|\geq t}\leq 2\min_{i\geq 1}\exp(-t^2\cdot(\sqrt{2n}(\ell_i-\ell_{i-1}))^{-2}).
\] 
\end{Corollary}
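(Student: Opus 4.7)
My plan is to combine Lemma~\ref{lem:sen} with Lemma~\ref{lem:sensitivitybound}, producing a concentration bound parameterized by an arbitrary choice of index $i \ge 1$, and then taking the best (minimum) such bound on the right-hand side. The key is to show that for each $i$, one can realize an effective bounded-differences constant of exactly $2|\ell_i - \ell_{i-1}|$ for the linear estimator $\hat f$.

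Concretely, I would fix $i \ge 1$ arbitrarily and revisit the sensitivity argument underlying Lemma~\ref{lem:sen}: under a one-element swap, the change $\hat f(x^n) - \hat f(y^n)$ equals $(\ell_{\mu_0} - \ell_{\mu_0 - 1}) - (\ell_{\mu_1} - \ell_{\mu_1 - 1})$, where $\mu_0, \mu_1$ are the multiplicities of the two affected symbols. To anchor this to the single gap $|\ell_i - \ell_{i-1}|$, I would construct, for each $i$, a surrogate linear estimator $g_i$ whose consecutive coefficient differences vanish outside index $i$; a direct application of the sensitivity argument then gives $s_n(g_i) \le 2|\ell_i - \ell_{i-1}|$, and substituting this into Lemma~\ref{lem:sensitivitybound} yields the per-index tail bound $2\exp(-t^2 \cdot (\sqrt{2n}(\ell_i - \ell_{i-1}))^{-2})$. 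Provided the tail behavior of $\hat f - \EE \hat f$ is dominated by (or coupled to) that of $g_i - \EE g_i$, the per-index bound transfers to $\hat f$. Taking the minimum over all $i \ge 1$ then gives the corollary.

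The main obstacle will be the second step: Lemma~\ref{lem:sen} as stated only yields the uniform bound $2\max_j |\ell_j - \ell_{j-1}|$, so extracting a tighter per-index sensitivity requires additional structural work. One natural route is a chaining argument in which, for each $i$, we control the contribution of all gaps via their monotone structure and use the fact that only a specific narrow window of multiplicities can effectively contribute to the single-swap change under the law of $X^n$. An alternative route is to invoke an Efron--Stein--Boucheron--Lugosi--Massart-type refinement of McDiarmid in which the per-coordinate bounded-differences constants are replaced by data-dependent sensitivities whose typical magnitude is $|\ell_i - \ell_{i-1}|$ for a suitably chosen $i$. Either way, once the per-$i$ sensitivity is secured, the final step (plugging into Lemma~\ref{lem:sensitivitybound} and taking the minimum) is routine.
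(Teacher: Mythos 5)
The corollary as printed contains a slip: the extremum should be $\max_{i\geq 1}$, not $\min_{i\geq 1}$. With $\max$, the bound is exactly what you get by plugging Lemma~\ref{lem:sen} into Lemma~\ref{lem:sensitivitybound}: $s_n(\hat f)\le 2\max_i|\ell_i-\ell_{i-1}|$, hence
$\Pr(|\hat f - \EE\hat f|\ge t)\le 2\exp(-2t^2/(n\,s_n(\hat f)^2))\le 2\exp\big(-t^2/(2n\,(\max_i|\ell_i-\ell_{i-1}|)^2)\big)$,
and since $z\mapsto\exp(-t^2/(2nz^2))$ is increasing in $z>0$, the right-hand side equals $2\max_i\exp(-t^2\cdot(\sqrt{2n}(\ell_i-\ell_{i-1}))^{-2})$. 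The whole proof is a two-line substitution; nothing more is intended.

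Your proposal takes the $\min$ at face value and tries to prove it, which leads you to want a separate, per-index bounded-differences constant $2|\ell_i-\ell_{i-1}|$ for each $i$. That claim is false: take, e.g., $\ell_1=\ell_2=0$ and $\ell_3=1$. For $i=2$ the gap is $0$, so your scheme would assert that $\hat f$ concentrates perfectly, yet $\hat f=\varphi_3$ clearly has nontrivial fluctuations. Consequently the surrogate estimator $g_i$ with all gaps zeroed out except at index $i$ does not dominate the tails of $\hat f$, and neither chaining nor an Efron--Stein/Boucheron--Lugosi--Massart refinement can make that domination hold, because the target inequality (with $\min$) is simply not true. You correctly flagged that Lemma~\ref{lem:sen} only yields the uniform $\max$-gap bound; that observation should have been the signal that the claim to prove is the $\max$ version, which then needs no further structural work.
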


 \paragraph{Sensitivity bound}
Fourth, we bound the sensitivity of $\hat{f}^{\star}=\sum_{i\geq 1}(\beta_{i}^\star\cdot \indic_{v\leq \log^2 n})\cdot \varphi_i$. 
By Lemma~\ref{lem:sen}, it suffices to consider the absolute difference between consecutive $\beta_i^\star$'s. 
We assume $v\leq \log^2n$ and $\alpha\in [1/100, 1)$, and analyze two cases below, depending on whether $i$ is greater than $400 n^{c_1}$ or not. 

By Lemma~\ref{lem:bound}, for $i\leq 400 n^{c_1}$, we have $|\beta_i|\leq \mathcal{O}(n^{\alpha c_2+ (1-\alpha) c_1}\log^3 n)$. Define $\beta_{-1}:=0$. Then,
\begin{align*}
|\beta_{i+1}^\star-\beta_{i}^\star|
= \Abs{\frac{i+1}{n/\alpha} \beta_i- \frac{i}{n/\alpha} \beta_{i-1}}
\leq \Abs{\frac{400 n^{c_1}+1}{n/\alpha} \beta_i}+ \Abs{\frac{400 n^{c_1}}{n/\alpha} \beta_{i-1}}
\leq \mathcal{O}\Paren{n^{\alpha c_2+ (2-\alpha) c_1-1}\log^3 n}.
\end{align*}
For $i> 400 n^{c_1}$, we only need to consider $i<n$ since $\beta_{i+1}^\star=\beta_i^\star$ for all $i\geq n$. Then,
\begin{align*}
|\beta_{i+1}^\star-\beta_{i}^\star|
\overset{(a)}\leq &
\Abs{\sum_{\ell=0}^i z_\ell (1-t_n)^\ell\alpha^\ell (1-\alpha)^{i-\ell}\binom{i}{\ell}\frac{(i+1)\alpha}{n}}
+\Abs{\sum_{\ell=0}^{i-1} z_\ell (1-t_n)^\ell\alpha^\ell (1-\alpha)^{i-1-\ell}\binom{i-1}{\ell}\frac{i\alpha}{n}}\\
&+\Abs{f\Paren{\frac{(i+1)\alpha}{n}}-f\Paren{\frac{i\alpha}{n}}}+\Abs{e^{-t_n\alpha i}f\Paren{\frac{(i+1)\alpha}{n}}}
+\Abs{e^{-t_n\alpha (i-1)}f\Paren{\frac{i\alpha}{n}}}\\
\overset{(b)}\leq&
(n^{c_2}\log^2 n)\Paren{\Abs{\sum_{\ell=0}^i (1-t_n)^\ell\alpha^\ell (1-\alpha)^{i-\ell}\binom{i}{\ell}}
+ \Abs{\sum_{\ell=0}^{i-1} (1-t_n)^\ell\alpha^\ell (1-\alpha)^{i-1-\ell}\binom{i-1}{\ell}}}\\
&+\Abs{f\Paren{\frac{(i+1)\alpha}{n}}-f\Paren{\frac{i\alpha}{n}}}+\Abs{e^{-t_n\alpha i}f\Paren{\frac{(i+1)\alpha}{n}}}
+\Abs{e^{-t_n\alpha (i-1)}f\Paren{\frac{i\alpha}{n}}}\\
\overset{(c)}\leq &
(n^{c_2}\log^2 n)((1-t_n \alpha)^i+(1-t_n \alpha)^{i-1})
+\Abs{f\Paren{\frac{(i+1)\alpha}{n}}-f\Paren{\frac{i\alpha}{n}}}+2e^{-t_n\alpha (i-1)}/e\\
\overset{(d)}\leq &
2(n^{c_2}\log^2 n)\Paren{1-\frac{\log n}{50n^{c_1}}}^{400n^{c_1}}
+\Abs{f\Paren{\frac{(i+1)\alpha}{n}}-f\Paren{\frac{i\alpha}{n}}}+2n^{-2}/e\\
\overset{(e)}=&
2(n^{c_2}\log^2 n)\Paren{\Paren{1-\frac{\log n}{50n^{c_1}}}^{\frac{50n^{c_1}}{\log n}}}^{8\log n}
+\Abs{f\Paren{\frac{(i+1)\alpha}{n}}-f\Paren{\frac{i\alpha}{n}}}+2n^{-2}/e\\
\overset{(f)}\leq&
2n^{-2} +\Abs{f\Paren{\frac{(i+1)\alpha}{n}}-f\Paren{\frac{i\alpha}{n}}},
\end{align*}
where (a) follows from the triangle inequality; (b) follows from $i\leq n$, $v\leq \log^2 n$, and $|z_\ell|\leq v \cdot n^{c_2}$ for all $\ell\ge 0$; (c) follows from the binomial theorem and $|f(x)|\leq x|\log x|\leq 1/e$ for $x\in(0,1]$; (d) follows from $\alpha\geq 1/100$, $i> 400n^{c_1}$, and $t_n = 2n^{-c_1}\log n$; (e) follows from simple algebra; and (f) follows from $c_2=1/2+6c_1< 1$ and $(1-1/x)^{x}\leq e^{-1}$ for $x>1$. 

It remains to analyze the second term on the right-hand side. 
 \begin{align*}
\Abs{f\Paren{\frac{(i+1)\alpha}{n}}-f\Paren{\frac{i\alpha}{n}}}
&\overset{(a)}
= \frac{(i+1)\alpha}{n}\Abs{f\Paren{\frac{(i+1)\alpha}{n}}\frac{n}{(i+1)\alpha}-f\Paren{\frac{i\alpha}{n}}\frac{n}{(i+1)\alpha}}\\
&\overset{(b)}
= \frac{(i+1)\alpha}{n}\Abs{f\Paren{\frac{(i+1)\alpha}{n}}\frac{n}{(i+1)\alpha}-f\Paren{\frac{i\alpha}{n}}\frac{n}{i\alpha}
+f\Paren{\frac{i\alpha}{n}}\frac{n}{i(i+1)\alpha}}\\
&\overset{(c)}
\leq \frac{(i+1)\alpha}{n}\Abs{\log \frac{i+1}{i}}
+\frac{(i+1)\alpha}{n}\Abs{\frac{i\alpha}{n}\Paren{\log\Paren{\frac{i\alpha}{n}}}\frac{n}{i(i+1)\alpha}}\\
&\overset{(d)}\leq \frac{(i+1)\alpha}{n}\frac{1}{i}
+\mathcal{O}\Paren{\frac{\log n}{n}}\overset{(e)}\leq \mathcal{O}\Paren{\frac{\log n}{n}},
 \end{align*} 
where (a), (b) and (e) follows from simple algebra; (c) follows from $|f(x)/x-f(y)/y|\leq |\log(x/y)|$ for all $x,y\in (0,1]$; 
(d) follows from $\log(1+x)\leq x$ for $x\geq 0$ and $x|\log x|\leq 1/e$ for $x\in (0,1]$.

Consolidating the above inequalities and applying Lemma~\ref{lem:sen}, we get the sensitivity bound
\[
s_n(f^\star)\leq \mathcal{O}\Paren{n^{\alpha c_2+ (2-\alpha) c_1-1}\log^3 n}.
\]

\paragraph{Competitiveness of PML}
A property estimator $\hat{f}$ is \emph{profile-based} if there exists a mapping $\hat{g}$ such that $\hat{f}(x^n) = \hat{g}(\varphi(x^n))$ for all $x^n\in \cX^*$. The following lemma~\citep{A12, mmcover, D12} states that the PML estimator is competitive to other profile-based estimators.
\begin{Lemma}\label{lem:pml}
For any positive real numbers $\varepsilon$ and $\delta$, additive symmetric property $f$, and profile-based estimator $\hat{f}$,  the PML-plug-in estimator $f(p_\varphi)$ satisfies
\[
n_f(f(p_\varphi), 2\varepsilon, \delta\cdot \exp(3\sqrt{n}))\leq n_f(\hat{f}, \varepsilon, \delta).
\]
For any $\beta$-approximate PML, a similar result holds with $\delta\cdot \exp(3\sqrt{n})$ replaced by $\delta\cdot \exp(3\sqrt{n})/\beta$.
\end{Lemma}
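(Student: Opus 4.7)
\textbf{Plan for Lemma~\ref{lem:pml}.} Fix a distribution $p\in\Delta_\cX$ and write $n:=n_f(\hat{f},\varepsilon,\delta)$. The strategy is a direct ``bad profile'' split. Let
\[
B:=\Brace{\varphi:\Abs{f(p_\varphi)-f(p)}\ge 2\varepsilon}
\]
be the set of profiles on which the PML plug-in fails at accuracy $2\varepsilon$, and decompose $B=B_1\cup B_2$ where $B_1$ collects the profiles with $|\hat{f}(\varphi)-f(p)|\ge\varepsilon$ and $B_2$ collects the remaining ones, which must then satisfy $|\hat{f}(\varphi)-f(p_\varphi)|\ge\varepsilon$ by the triangle inequality applied to the gap $2\varepsilon$. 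The goal is to bound $p(B)\le\delta\cdot\exp(3\sqrt n)$.

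The first piece is immediate: because $\hat{f}$ is profile-based and errs with probability at most $\delta$ on samples from $p$, we get $p(B_1)\le\delta$. The interesting piece is $B_2$. Here I would invoke the defining property of PML: for every profile $\varphi$, $p_\varphi(\varphi)\ge p(\varphi)$. For any $\varphi\in B_2$, drawing $X^n\sim p_\varphi$ and then observing profile $\varphi$ forces $\hat{f}$ to err on the distribution $p_\varphi$ (since $\hat{f}$ depends only on the profile), so $p_\varphi(\varphi)\le\delta$, and therefore $p(\varphi)\le\delta$ for each individual $\varphi\in B_2$.

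To turn this pointwise bound into a bound on $p(B_2)$, I would use the classical count on the number of distinct length-$n$ profiles, which equals the number of integer partitions of $n$ and by Hardy--Ramanujan is bounded by $\exp(\pi\sqrt{2n/3})\le\exp(3\sqrt n)$. Summing the pointwise bound over $B_2$ yields $p(B_2)\le\exp(3\sqrt n)\cdot\delta$, and combining the two pieces gives $p(B)\le(1+\exp(3\sqrt n))\delta$, which is absorbed into $\delta\cdot\exp(3\sqrt n)$ for any $n\ge 1$ up to an immaterial constant (or by slightly relaxing the exponent).

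The $\beta$-approximate case only modifies the $B_2$ argument: since the approximate estimator satisfies $p_\varphi^\beta(\varphi)\ge\beta\cdot p_\varphi(\varphi)\ge\beta\cdot p(\varphi)$, the same coupling now yields $p(\varphi)\le\delta/\beta$ on $B_2$, producing the stated $\delta\cdot\exp(3\sqrt n)/\beta$ bound after the same partition-count step. The main technical ingredient is really just the partition-count bound; the probabilistic content (the PML inequality $p_\varphi(\varphi)\ge p(\varphi)$ and the case split via triangle inequality) is light, so I don't foresee a genuine obstacle beyond being careful that $\hat{f}$ depends only on the profile so that ``erring on $p_\varphi$ conditional on observing $\varphi$'' is a well-defined event.
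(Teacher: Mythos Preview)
Your proposal is correct and matches the standard argument that the paper cites (the lemma is stated with references rather than proved in the paper; the only in-text justification is the remark that the $\exp(3\sqrt n)$ factor comes from the Hardy--Ramanujan partition bound). The ingredients you use---the triangle-inequality split, the PML inequality $p_\varphi(\varphi)\ge p(\varphi)$, and the partition count---are exactly the ones invoked there.

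One cosmetic point: your $B_1/B_2$ decomposition yields $(1+\exp(3\sqrt n))\delta$ rather than $\exp(3\sqrt n)\delta$. The cited argument avoids this by defining ``bad'' profiles directly as those with $p(\varphi)\le\delta$; their total $p$-mass is at most $\exp(3\sqrt n)\delta$, and for every remaining profile both $p(\varphi)>\delta$ and $p_\varphi(\varphi)\ge p(\varphi)>\delta$, which forces $|\hat f(\varphi)-f(p)|<\varepsilon$ and $|\hat f(\varphi)-f(p_\varphi)|<\varepsilon$ simultaneously, hence $|f(p_\varphi)-f(p)|<2\varepsilon$. This gives the stated bound on the nose, but your version is equivalent up to the harmless constant you already flagged.
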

The factor $\exp(3\sqrt{n})$  directly comes from the well-known result of~\citet{HardyRamanujan} on integer partitions, 
since there is a bijective mapping from profiles of size $n$ to partitions of integer $n$. 

\paragraph{Final analysis}
Finally, we combine the above results and establish Theorem~\ref{thm:est}.

Denote by $\tau(n)$ the previous upper bound on $s_n(f^{\star})$. 
 Let $p$ be a distribution in $\Delta_\cX$ and $X^n\sim p$. 
Let~$\gamma$ be an absolute constant in $(0,1/4)$. Then by Lemma~\ref{lem:sensitivitybound}, 
 \[
 \Pr\Paren{|\hat{f}^\star(X^n)-\EE[\hat{f}^\star(X^n)]|\geq 2 n^{1-\gamma} \tau(n)}\leq 2\exp(-8n^{1-2\gamma}). 
 \]
Let $\varepsilon>0$ be an error parameter. Assume there exists an estimator $\hat{f}$ that, when given a length-$\alpha n$ sample from any distribution $p'\in \Delta_\cX$, estimates $f(p')$ up to an absolute error $\varepsilon$ with probability at least $2/3$. Then according to the results in the upper- and lower-bound sections, with probability at most $o(1/\poly(n))$, the estimate $\hat{f}^\star(X^{n})$ will differ from  $f(p)$ by more than $v (1+o(1))+\mathcal{O}(n^{-c_1/2}\log^2 n)\leq \varepsilon (1+o(1))+\mathcal{O}(n^{-c_1/2}\log^2 n)$. \vspace{-0.3em}
In addition, by the equality $\sum_{i\geq 1}i\cdot \varphi_i(X^n)=n$ and Lemma~\ref{lem:bound}, we surely have $|\hat{f}^\star(X^n)|\leq |\sum_{i\geq 1}(i/m)\beta_{i-1}\cdot \varphi_i (X^n)|\leq \max_{i\geq 0} |\beta_i|\leq \mathcal{O}(n^{\alpha c_2+ (1-\alpha) c_1}\log^3 n)$. Multiplying this bound by $o(1/\poly(n))$ yields a quantity that is negligible comparing to $\mathcal{O}(n^{-c_1/2}\log^2 n)$. 
Therefore, the absolute bias $|\EE[\hat{f}^\star(X^n)]-f(p)|$ is at most $\varepsilon (1+o(1))+\mathcal{O}(n^{-c_1/2}\log^2 n)$. 
The triangle inequality combines this with the tail bound above:
\[
\Pr\Paren{|\hat{f}(X^{n})-f(p)|\geq \varepsilon \Paren{1+o(1)}+\mathcal{O}(n^{-c_1/2}\log^2 n)+2 n^{1-\gamma} \tau(n)}\leq 2\exp\Paren{-8n^{1-2\gamma}}.
\]
Let $\alpha=1/4$. For PML and APML estimators, set $(\gamma,c_1)$ to be $(1/4, 1/31)$ and $(0.166, 1/91)$,~respectively. Combined, the last inequality and Lemma~\ref{lem:pml} imply Theorem~\ref{thm:est}. There is a simple trade-off between $\alpha$ and $c_1$  induced by our proof technique. Specifically, if we increase the value of $c_1$ to achieve a better lower bound on $\varepsilon$, the value of $\alpha$ may need to be reduced accordingly, which enlarges the sample complexity gap between our estimators and the optimal one. For example, reducing $\alpha$ to $1/12$ and $1/22$, we can improve $c_1$ to $1/25$ and $1/20$, respectively, 
 for both PML and APML.

\section{$\boldsymbol{\alpha}$-R\'enyi entropy estimation}\label{sec:renyiproof}

For any $p\in \Delta_\cX$ and non-negative $\alpha\not=1$, the \emph{$\alpha$-R\'enyi entropy}~\citep{renyientropy} of $p$ is 
\[
H_{\alpha} (p):=\frac{1}{1-\alpha}\log P_{\alpha}(p) = \frac{1}{1-\alpha}\log\Paren{\sum_{x} p(x)^\alpha}.
\]
For $\cX$ of finite size $\size$ and any $p\in\Delta_\cX$, it is well-known that $H_{\alpha} (p)\in [0, \log \size]$.

\subsection{Proof of Theorem~\ref{thm:renyi1}: $\alpha\in(3/4, 1)$}
For $\alpha\in(3/4, 1)$, the following theorem characterizes the performance of the PML-plug-in estimator. 

For any distribution $p\in \Delta_\cX$, error parameter $\varepsilon\in(0,1)$, and sampling parameter $n$, draw a sample $X^n\sim p$ and denote its profile by $\varphi$. Then for sufficiently large $\size$,
\setcounter{Theorem}{1} 
\begin{Theorem}\label{thm:renyi1a}
For an $\alpha\in(3/4, 1)$, if $n=\Omega_\alpha(\size^{1/\alpha}/(\varepsilon^{1/\alpha} \log \size))$, 
\[
\Pr\Paren{|H_{\alpha} (p_\varphi)-H_{\alpha} (p)|\geq \varepsilon}\leq \exp(-\sqrt{n}).
\]
\end{Theorem}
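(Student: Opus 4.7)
The plan is to mirror the proof of Theorem~\ref{thm:est}: I would first construct a profile-based \emph{linear} estimator whose sensitivity is small enough for Corollary~\ref{cor: conc} to yield a failure probability of $\exp(-4\sqrt{n})$, and then apply Lemma~\ref{lem:pml} to transfer the guarantee to the PML plug-in at the cost of a factor $\exp(3\sqrt{n})$, producing the claimed $\exp(-\sqrt{n})$ bound. Since $H_\alpha(p) = (1-\alpha)^{-1}\log P_\alpha(p)$ and Jensen's inequality gives $P_\alpha(p)\geq k^{1-\alpha}$ for $\alpha<1$, my first reduction is to the power sum $P_\alpha$: it suffices to build a linear estimator $\hat{P}_\alpha^{\star}$ achieving relative error $(1-\alpha)\varepsilon/2$ with failure probability $\exp(-4\sqrt{n})$, because the logarithm then converts this (up to $1+o(1)$ factors) into an additive-$\varepsilon/2$ estimator $\hat{H}_\alpha^{\star} := (1-\alpha)^{-1}\log \hat{P}_\alpha^{\star}$ for $H_\alpha$.

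\textbf{Construction and bias.} For the linear estimator I would follow the Acharya--Orlitsky blueprint: fix a threshold $\tau = c\log n$, and for $j>\tau$ use a bias-corrected empirical coefficient $\ell_j \approx (j/n)^\alpha$, while for $j\leq\tau$ take $\ell_j$ to be the coefficient (in the Poisson basis) of the best degree-$L$ polynomial approximation of $z^\alpha$ on $[0,\tau/n]$, with $L = \Theta(\log n)$. Define $\hat{P}_\alpha^{\star} := \sum_{j\geq 1}\ell_j\,\varphi_j$. Standard Chebyshev-type bounds would give a polynomial-approximation error of order $(\tau/n)^\alpha L^{-2\alpha}$ on $[0,\tau/n]$, and a split of symbols into ``large'' ($p(x)\geq \tau/n$, handled empirically) and ``small'' ($p(x)<\tau/n$, handled polynomially) should yield $|\EE[\hat{P}_\alpha^{\star}] - P_\alpha(p)| \leq (1-\alpha)\varepsilon\, P_\alpha(p)/4$ whenever $n = \Omega_\alpha(k^{1/\alpha}/(\varepsilon^{1/\alpha}\log k))$, matching the first-moment analysis of~\citep{AO17}.

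\textbf{Sensitivity and the role of $\alpha>3/4$.} The main technical step will be the sensitivity bound $\Delta := \max_j|\ell_j - \ell_{j-1}|$. For $j>\tau$ one immediately has $|\ell_j - \ell_{j-1}| = O(\alpha j^{\alpha-1}n^{-\alpha})$, which is negligible; for $j\leq\tau$ the polynomial coefficients are larger, but I expect standard coefficient bounds for the best polynomial approximant of $z^\alpha$ on a short interval to give $\Delta = O(n^{-\alpha}\polylog n)$. Plugging this $\Delta$ together with $t = (1-\alpha)\varepsilon\, P_\alpha(p)/4 \geq c_\alpha \varepsilon k^{1-\alpha}$ into Corollary~\ref{cor: conc}, the McDiarmid tail exponent would become
\[
\frac{t^2}{2n\Delta^2}\;=\;\tilde{\Omega}\!\Paren{\varepsilon^2\, k^{2(1-\alpha)}\, n^{2\alpha-1}}.
\]
After substituting the target $n = \Theta_\alpha(k^{1/\alpha}/(\varepsilon^{1/\alpha}\log k))$, requiring this exponent to dominate $4\sqrt{n}$ reduces to an inequality of the form $n^{2\alpha - 3/2} = \Omega(\polylog n)$, which holds precisely when $\alpha>3/4$. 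Establishing the polynomial-coefficient sensitivity bound $O(n^{-\alpha}\polylog n)$ is the place I expect to be the main obstacle, and the $3/4$ threshold arises precisely from balancing the McDiarmid $t^2/(n\Delta^2)$ exponent against the $\exp(3\sqrt{n})$ blow-up in Lemma~\ref{lem:pml}.

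\textbf{Conclusion.} Once the bias and deviation bounds are in hand, $\hat{H}_\alpha^{\star}$ will have additive error $\varepsilon/2$ and failure probability $\exp(-4\sqrt{n})$. Applying Lemma~\ref{lem:pml} to the profile-based estimator $\hat{H}_\alpha^{\star}$ then delivers the PML plug-in $H_\alpha(p_\varphi)$ with additive error $\varepsilon$ and failure probability at most $\exp(-4\sqrt{n})\cdot\exp(3\sqrt{n}) = \exp(-\sqrt{n})$, establishing Theorem~\ref{thm:renyi1a}.
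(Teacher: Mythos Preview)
Your proposal is correct in spirit and reaches the right conclusion for the right reason (the $\alpha>3/4$ threshold arises from balancing the McDiarmid exponent $\varepsilon^2 P_\alpha(p)^2 n^{2\alpha-1}/\Delta^2$ against the $\exp(3\sqrt n)$ profile count), but the execution differs from the paper's in a couple of notable ways. The paper does \emph{not} build a single linear estimator: it uses Poisson sampling with two independent samples, the second sample supplying indicators $\indic_{\mu'_x\le\tau}$ that split symbols into ``small'' and ``large'' regimes. The small-probability part carries an additional hard truncation $\indic_{\mu_x\le 4\tau}$ precisely so that McDiarmid applies with sensitivity $O_\alpha(n^{\lambda-\alpha})$; the large-probability part is \emph{not} controlled by McDiarmid at all but by Chebyshev followed by the median trick (since the thresholding indicator would otherwise give $O(1)$ sensitivity), and then a Poisson-to-fixed-sample conversion. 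Finally, the paper does the PML transfer directly on $P_\alpha$ in multiplicative form rather than invoking Lemma~\ref{lem:pml} on $H_\alpha$. Your route is more streamlined---a single linear estimator lets you avoid sample-splitting, the median trick, and the Poissonization overhead---but it buys that simplicity at the cost of a harder bias analysis: the first-moment argument you cite from \citep{AO17} is for the \emph{two-sample} estimator, and for a single-sample linear estimator you would instead need a JVHW-style argument handling symbols whose multiplicity straddles the threshold.

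Two technical points to tighten. First, the sensitivity $\Delta=O(n^{-\alpha}\polylog n)$ is optimistic: the Chebyshev coefficients of the degree-$\Theta(\log n)$ min-max approximant grow like $(\sqrt2+1)^d=n^{\Theta(1)}$, so the honest bound is $\Delta=O(n^{\lambda-\alpha})$ for arbitrarily small $\lambda>0$, exactly as in the paper; this still yields $\alpha>3/4$ after absorbing $\lambda$. Second, Lemma~\ref{lem:pml} as stated in the paper is for \emph{additive} symmetric properties, and $H_\alpha$ is not additive; you should either note that the underlying competitiveness argument needs only symmetry (true, from \citep{mmcover}), or do as the paper does and run the argument on $P_\alpha$ with a relative-error guarantee, which is additive and gives $|P_\alpha(p)-P_\alpha(p_\varphi)|\le 2\varepsilon P_\alpha(p)$ directly.
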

We establish both this theorem and an analogous result for APML in the remaining section. Let $n$ be a sampling parameter and $p\in \Delta_\cX$ be an unknown distribution. For some $\alpha$-dependent positive constants $c_{\alpha, 1}$ and $c_{\alpha, 2}$ to be determined later, let $\tau:=c_{\alpha, 1} \log n$ and $d:=c_{\alpha, 2} \log n$ be threshold and degree parameters, respectively. Let $N, N'$ be independent Poisson random variables with mean $n$. Consider Poisson sampling with two samples drawn from $p$, first of size $N$ and the second $N'$. Suppressing the sample representations, for each $x\in \cX$, we denote{\vspace{-0.20em}} by $\mu_x$ and $\mu_x'$ the multiplicities of symbol $x$ in the first and second samples, respectively. Denote by $q(z):=\sum_{m=0}^d a_m z^m$ be the degree-$d$ min-max polynomial approximation of $z^a$ over $[0,1]$. 
We consider the following variant of the polynomial-based estimator proposed in~\citep{AO17}. 
\[
\hat{P}_\alpha := \sum_{x} \Paren{\sum_{m=0}^d \frac{a_m (2\tau)^{\alpha-m}\mu_x^{\underline{m}}}{n^\alpha}}\indic_{\mu_x\leq 4\tau}\cdot  \indic_{\mu_x'\leq \tau}+\sum_{x} \Paren{\frac{\mu_x}{n}}^\alpha \indic_{\mu_x'> \tau}. 
\]
The smaller the value of $\mu_x'$ is, the smaller we expect the value of $p(x)$ to be.{\vspace{-0.25em}} In view of this, we denote the first and second components of $\hat{P}_\alpha$ by $\hat{P}_\alpha^{(s)}$ and $\hat{P}_\alpha^{(\ell)}$, and refer to them as small- and large-probability estimators, respectively.{\vspace{-0.1em}} Note that our estimator differs from that in~\citep{AO17} only by the additional $\indic_{\mu_x\leq 4\tau}$ term, which for sufficiently large $c_{\alpha, 1}$, only modifies $\EE[\hat{P}_\alpha^{(s)}]$ by at most $n^{-2\alpha}$. 

Note that $\mu'$ naturally induces a partition over $\cX$. For symbols $x$ with $\mu_x\leq 4\tau$, we denote by 
\[
P_{a, \mu'}^{(s)}(p):=\sum_{x: \mu_x\leq 4\tau} p(x)^{\alpha} 
\]
the small-probability power sum. Analogously, for symbols $x$ with $\mu_x> 4\tau$, we denote by 
\[
P_{a, \mu'}^{(\ell)}(p):=\sum_{x: \mu_x> 4\tau} p(x)^{\alpha} 
\]
the large-probability power sum. These are random properties with non-trivial variances and are hard to be analyzed.  To address this, we apply an ``expectation trick'' and denote by $P_{a}^{(s)}(p):=\EE[P_{a, \mu'}^{(s)}(p)]$ and $P_{a}^{(\ell)}(p):=\EE[P_{a, \mu'}^{(\ell)}(p)]$ their expected values, both of which are additive symmetric properties. 

Let $\varepsilon$ be a given error parameter and $n=\Omega_\alpha(\size^{1/\alpha}/(\varepsilon^{1/\alpha} \log \size))$  be a sampling parameter. First we consider the small probability estimator. By the results in~\cite{AO17}, for sufficiently large $c_{\alpha,1}$, the bias of $\hat{P}_\alpha^{(s)}$ in estimating ${P}_\alpha^{(s)}(p)$ satisfies
\[
|\EE[\hat{P}_\alpha^{(s)}]-{P}_\alpha^{(s)}(p)|
\leq \mathcal{O}_\alpha(1)\cdot  P_\alpha(p) \Paren{\frac{\size}{n\log n}}^\alpha+n^{-\alpha}
\leq  \varepsilon P_\alpha(p),
\]
where we have used $n^{-\alpha}=  \mathcal{O}_\alpha(\varepsilon \size^{-1} (\log \size)^{\alpha}) \leq \varepsilon P_\alpha(p)$.
To show concentration, we bound the sensitivity of estimator $\hat{P}_\alpha^{(s)}$. 
For $m\geq 0$, we can bound the coefficients of $q(x)$ as follows.
\[
|a_m|= \mathcal{O}_\alpha((\sqrt{2}+1)^d)= \mathcal{O}_\alpha(n^{c_{\alpha, 2}}). 
\]

Therefore by definition, changing one point
in the sample changes the value of  $\hat{P}_\alpha^{(s)}$ by at most 
\[
2\Paren{\sum_{m=0}^d \frac{|a_m| (2\tau)^{\alpha-m}(4\tau)^{\underline{m}}}{n^\alpha}}
\leq \sum_{m=0}^d \frac{|a_m| (2\tau)^{\alpha}2^{m+1}}{n^\alpha}
=  \mathcal{O}_\alpha\Paren{n^{2c_{\alpha, 2}-\alpha}(\log n)^{\alpha}}.
\]
Let $\lambda\in(0,1/4)$ be an arbitrary absolute constant. For sufficiently small $c_{\alpha, 2}$, the right-hand side is at most $\mathcal{O}_\alpha\Paren{n^{\lambda-\alpha}}$. The McDiarmid's inequality together with the concentration of Poisson random variables implies that 
for all $\varepsilon\geq 0$,
\[
 \Pr\Paren{|\hat{P}_\alpha^{(s)}-\EE[\hat{P}_\alpha^{(s)}]|\geq \varepsilon P_\alpha(p)}\leq 2\exp(-\Omega_\alpha(\varepsilon^{2} P_\alpha^{2}(p) n^{2\alpha-1-2\lambda})).
\]
Note that $n=\Omega_\alpha(\size^{1/\alpha}/(\varepsilon^{1/\alpha} \log \size))$ and $P_\alpha(p)\geq 1$, which follows from the fact that $z^\alpha$ is a concave function over $[0,1]$ for $\alpha\in(0,1)$. Hence we obtain 
\[
\Pr\Paren{|\hat{P}_\alpha^{(s)}-\EE[\hat{P}_\alpha^{(s)}]|\geq \varepsilon P_\alpha(p)}\leq 3
\exp\Paren{-\Omega_\alpha\Paren{\varepsilon^2 n^{2\alpha-1-2\lambda}}}. 
\]
For $\alpha>3/4$, we can set $\lambda=(4\alpha-3)/8$. Direct calculation shows that for sufficiently large $\size$, the right-hand side is no more than $\exp(-8\sqrt{n})$. Analogously, we can show that for $\alpha>5/6$, the probability bound can be improved to $\exp(-\Theta({n}^{2/3}))$.

Second, we consider the large probability estimator. To begin with, we set $n=\Theta_\alpha(\size^{1/3})$. By the results in~\cite{AO17},  for sufficiently large $c_{\alpha,1}$, the bias of $\hat{P}_\alpha^{(\ell)}$ in estimating ${P}_\alpha^{(\ell)}(p)$ satisfies
\[
|\EE[\hat{P}_\alpha^{(\ell)}]-{P}_\alpha^{(\ell)}(p)|
\leq  \mathcal{O}_\alpha\Paren{\frac{P_\alpha(p)}{\tau}}+\frac{1}{n^{\alpha}},
\]
which, for sufficiently large $\size$, is at most $\varepsilon P_\alpha(p)$.
Under the same conditions, the variance of $\hat{P}_\alpha^{(\ell)}$ is at most
\[
\Var(\hat{P}_\alpha^{(\ell)})
\leq  \mathcal{O}_\alpha\Paren{\sum_x \frac{p(x)^{2\alpha}}{\tau}}+\frac{1}{n^{2\alpha}}\leq \frac{(\varepsilon P_\alpha(p))^2}{3}.
\]
Then, the Chebyshev's inequality yields
\[
\Pr\Paren{|\EE[\hat{P}_\alpha^{(\ell)}]-\hat{P}_\alpha^{(\ell)}|\geq \varepsilon P_\alpha(p)}
\leq  \frac{1}{3}.
\]
The triangle inequality combines this tail bound with the above bias bound and implies
\[
\Pr\Paren{|{P}_\alpha^{(\ell)}(p)-\hat{P}_\alpha^{(\ell)}|\geq 2\varepsilon P_\alpha(p)}
\leq  \frac{1}{3}.
\]
Therefore, utilizing the median trick and $\alpha<1$, we can construct another estimator $\hat{P}_\alpha^{(\ell, 1)}$ 
that takes a sample of size $n=\Omega_\alpha(\size^{1/\alpha}/(\varepsilon^{1/\alpha} \log \size))$, and satisfies
\[
\Pr\Paren{|{P}_\alpha^{(\ell)}(p)-\hat{P}_\alpha^{(\ell, 1)}|\geq 2\varepsilon P_\alpha(p)}
\leq 2\exp(-\Omega_\alpha(n/\size^{1/3})))
\leq 2\exp(-\Theta({n}^{2/3})).
\]
Recall that ${P}_\alpha(p)={P}_\alpha^{(s)}(p)+{P}_\alpha^{(\ell)}(p)$. By the union bound and the triangle inequality, under Poisson sampling with parameter $n=\Theta_\alpha(\size^{1/\alpha}/(\varepsilon^{1/\alpha} \log \size))$, 
\[
\Pr\Paren{|{P}_\alpha(p)-(\hat{P}_\alpha^{(s)}+\hat{P}_\alpha^{(\ell, 1)})|\geq 4\varepsilon P_\alpha(p)}
\leq \exp(-8\sqrt{n}).
\]
Since both $N$ and $N'$ are Poisson random variables with mean $n$, we must have $N+N'\sim \Poi(2n)$, implying that $\Pr(N+N'=2n)=e^{-2n} (2n)^{2n}/(2n)!$. A variant of the well-known Stirling's formula states that $m!\geq e m^{m+1/2} e^{-m}$ for all positive integers $m$. We obtain $\Pr(N+N'=2n)\geq e^{-2n} (2n)^{2n}\cdot(e (2n)^{2n+1/2} e^{-2n})^{-1}\geq 1/(e\sqrt{2n})> 1/(4n)$. Hence, under fixed sampling with a sample size of $2n$, the estimator $\hat{P}_{\alpha}^{(1)}:=(\hat{P}_\alpha^{(s)}+\hat{P}_\alpha^{(\ell, 1)})$ satisfies
\[
\Pr\Paren{|{P}_\alpha(p)-\hat{P}_{\alpha}^{(1)}|\geq 4\varepsilon P_\alpha(p)}
\leq 4n\exp(-8\sqrt{n}).
\]
Replacing $n$ with $n/2$ and $\varepsilon$ with $\varepsilon/4$, the \emph{sufficiency of profiles}~\citep{AO17} implies the existence of a profile-based estimator $\hat{P}_{\alpha}^{\star}$ such that for any $p\in \Delta_\cX$,
\[
\Pr_{X^n \sim p}\Paren{|{P}_\alpha(p)-\hat{P}_{\alpha}^{\star}(X^n)|\geq \varepsilon P_\alpha(p)}
\leq 2n\exp(-4\sqrt{2n})<\exp(-4\sqrt{n}).
\]
Let $\delta$ denote the quantity on the right-hand side. 
For any $x^n$ with profile $\varphi$ satisfying both $p(\varphi)>\delta$, we must have
$|\hat{P}_\alpha^\star(x^n)-P_\alpha(p)|\leq \varepsilon P_\alpha(p)$. By definition, we also have $p_\varphi(\varphi)\geq p(\varphi)>\delta$ and hence $|\hat{P}_\alpha^\star(x^n)-P_\alpha(p_\varphi)|\leq \varepsilon P_\alpha(p_\varphi)$. 
For any $\varepsilon\in (0, 1/2)$, simple algebra combines the two property inequalities and yields
\[
|P_\alpha(p)-P_\alpha(p_\varphi)|\leq 2\varepsilon P_\alpha(p).
\]
On the other hand, for a sample $X^n\sim p$ with profile $\varphi'$, the probability that we have $p(\varphi')\leq \delta$ is at most $\delta$ times the cardinality of the set $\Phi^n:=\{\varphi(x^n): x^n\in \cX^n\}$. The latter quantity corresponds to the number of integer partitions of $n$, which, by the well-known result of~\citet{HardyRamanujan}, is at most $\exp(3\sqrt{n})$. Hence, the probability that $p(\varphi')\leq \delta$ is upper bounded by $\exp(-\sqrt{n})$. 
To conclude, we have shown that
\[
\Pr\Paren{|P_\alpha(p)-P_\alpha(p_\varphi)|\geq 2\varepsilon P_\alpha(p)}\leq \exp(-\sqrt{n}).
\]
In terms of R\'enyi entropy values, applying the inequality $e^{z}-1\geq 1-e^{-z} \geq z/2$ for all $z\geq 0$, we establish that for $\alpha>3/4$ and $n=\Omega_\alpha(\size/(\varepsilon^{1/\alpha} \log \size))$,
\[
\Pr\Paren{|H_\alpha(p)-H_\alpha(p_\varphi)|\geq \varepsilon}=\Pr\Paren{P_\alpha(p_\varphi)e^{-(\alpha-1)\varepsilon}\le P_\alpha(p)\leq P_\alpha(p_\varphi)e^{(\alpha-1)\varepsilon}}
\le \exp(-\sqrt{n}).
\]

\subsection{Proof of Theorem~\ref{thm:renyi2}: Non-integer $\alpha>1$}
The proof of the following theorem is essentially the same as that shown in the previous section. However, for completeness, we still include a full-length proof. 

For any distribution $p\in \Delta_\cX$, error parameter $\varepsilon\in(0,1)$, absolute constant $\lambda\in(0,0.1)$, and sampling parameter $n$, draw a sample $X^n\sim p$ and denote its profile by $\varphi$. Then for sufficiently large $\size$,
\setcounter{Theorem}{2} 
\begin{Theorem}\label{thm:renyi2a}
For a non-integer $\alpha>1$, if $n=\Omega_\alpha(\size/(\varepsilon^{1/\alpha} \log \size))$, 
\[
\Pr\Paren{|H_{\alpha} (p_\varphi)-H_{\alpha} (p)|\geq \varepsilon}\leq \exp(-n^{1-\lambda}).
\]
\end{Theorem}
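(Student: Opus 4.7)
The plan is to reuse the pipeline from the proof of Theorem~\ref{thm:renyi1a}, with parameter changes tailored to the regime $\alpha>1$. Under Poisson sampling with two independent halves $N,N'\sim\Poi(n)$ drawn from $p$, I construct the same split estimator $\hat{P}_\alpha=\hat{P}_\alpha^{(s)}+\hat{P}_\alpha^{(\ell)}$, with a multiplicity threshold $\tau=c_{\alpha,1}\log n$ and the degree-$d$ min-max polynomial $q(z)=\sum_m a_m z^m$ approximating $z^\alpha$ on $[0,1]$, where $d=c_{\alpha,2}\log n$. Two structural changes distinguish this case from Theorem~\ref{thm:renyi1a}: first, Jensen applied to the convex function $z^\alpha$ gives $P_\alpha(p)\ge k^{1-\alpha}$ (in place of $P_\alpha(p)\ge 1$); second, the sample-complexity hypothesis loosens to $n=\Omega_\alpha(k/(\varepsilon^{1/\alpha}\log k))$.

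The small-probability estimator $\hat P_\alpha^{(s)}$ inherits from~\cite{AO17} the bias bound $|\EE[\hat P_\alpha^{(s)}]-P_\alpha^{(s)}(p)|\le O_\alpha((k/(n\log n))^\alpha)+n^{-\alpha}$, which combined with the sample-size hypothesis and $P_\alpha(p)\ge k^{1-\alpha}$ is at most $\varepsilon P_\alpha(p)$. For concentration, the coefficient bound $|a_m|=O_\alpha((\sqrt{2}+1)^d)=O_\alpha(n^{c_{\alpha,2}})$ together with the $\indic_{\mu_x\le 4\tau}$ truncation caps the per-sample sensitivity at $O_\alpha(n^{2c_{\alpha,2}-\alpha}(\log n)^\alpha)$, which I force below $n^{\lambda/2-\alpha}$ by choosing $c_{\alpha,2}$ small. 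McDiarmid plus Poissonization of $N$ then yields tail probability $\exp(-\Omega_\alpha(\varepsilon^2 P_\alpha^2(p)\,n^{2\alpha-1-\lambda}))$; substituting $P_\alpha(p)\ge k^{1-\alpha}$ and the sample-size relation forces this to $\exp(-n^{1-\lambda})$ for any prescribed $\lambda\in(0,0.1)$. The large-probability estimator $\hat P_\alpha^{(\ell)}$ is handled as in the previous theorem via Chebyshev at a smaller sample size followed by median-trick amplification; for $\alpha>1$, the same argument comfortably delivers the matching confidence $\exp(-n^{1-\lambda})$ at the target sample size.

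Combining the two estimators by the union bound, converting Poisson to fixed sampling through $\Pr(N+N'=2n)\ge 1/(4n)$, and invoking the sufficiency-of-profiles argument of~\cite{AO17} produce a profile-based estimator with the same concentration. Paying the Hardy--Ramanujan partition factor $\exp(3\sqrt n)=\exp(o(n^{1-\lambda}))$ transfers the bound to $p_\varphi$, giving $\Pr(|P_\alpha(p)-P_\alpha(p_\varphi)|\ge\varepsilon P_\alpha(p))\le\exp(-n^{1-\lambda})$. Finally, $H_\alpha=(1-\alpha)^{-1}\log P_\alpha$ and the elementary inequality $|\log(1\pm\varepsilon)|=O(\varepsilon)$ convert this multiplicative power-sum guarantee into the additive R\'enyi-entropy bound in the theorem.

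The hard part will be calibrating $c_{\alpha,1}$, $c_{\alpha,2}$, and $\lambda$ simultaneously so that the sensitivity budget, the polynomial-approximation bias, and the Hardy--Ramanujan penalty cooperate to give exactly $\exp(-n^{1-\lambda})$ for every $\lambda\in(0,0.1)$, uniformly across non-integer $\alpha>1$. The non-integer assumption enters through the availability of a min-max polynomial approximation of $z^\alpha$ with the requisite degree-error trade-off, mirroring the integer/non-integer dichotomy already present at the level of the base estimator in~\cite{AO17}.
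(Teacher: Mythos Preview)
Your proposal follows the paper's proof essentially line for line: the same split estimator, the same bias/sensitivity/McDiarmid analysis for $\hat P_\alpha^{(s)}$, the same Chebyshev-plus-median-trick treatment of $\hat P_\alpha^{(\ell)}$ (the paper runs that part at base sample size $\Theta_\alpha(k^\lambda)$), and the same Poisson-to-fixed conversion, profile sufficiency, and Hardy--Ramanujan competitiveness.

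One concrete slip to fix: the bias bound you quote from~\cite{AO17} drops a multiplicative $P_\alpha(p)$ from the dominant term; the correct statement is $|\EE[\hat P_\alpha^{(s)}]-P_\alpha^{(s)}(p)|\le \mathcal{O}_\alpha(1)\cdot P_\alpha(p)\cdot(k/(n\log n))^\alpha + n^{-\alpha}$. Without that factor your step ``combined with the sample-size hypothesis and $P_\alpha(p)\ge k^{1-\alpha}$ is at most $\varepsilon P_\alpha(p)$'' would fail, because $(k/(n\log n))^\alpha=\Theta(\varepsilon)$ under the hypothesis while $\varepsilon P_\alpha(p)$ can be as small as $\varepsilon k^{1-\alpha}$; the lower bound $P_\alpha(p)\ge k^{1-\alpha}$ points the wrong way here. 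With the $P_\alpha(p)$ factor restored the bias step is immediate, and the rest of your sketch goes through after the $\lambda$-rescaling you already anticipate (the paper first lands at $\exp(-n^{1-3\lambda})$ and then replaces $\lambda$ by $\lambda/5$ at the end).
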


We establish this theorem in the remaining section. Let $n$ be a sampling parameter and $p\in \Delta_\cX$ be an unknown distribution. For some $\alpha$-dependent positive constants $c_{\alpha, 1}$ and $c_{\alpha, 2}$ to be determined later, let $\tau:=c_{\alpha, 1} \log n$ and $d:=c_{\alpha, 2} \log n$ be threshold and degree parameters, respectively. Let $N, N'$ be independent Poisson random variables with mean $n$. Consider Poisson sampling with two samples drawn from $p$, first of size $N$ and the second $N'$. Suppressing the sample representations, for each $x\in \cX$, we denote{\vspace{-0.20em}} by $\mu_x$ and $\mu_x'$ the multiplicities of symbol $x$ in the first and second samples, respectively. Denote by $q(z):=\sum_{m=0}^d a_m z^m$ be the degree-$d$ min-max polynomial approximation of $z^a$ over $[0,1]$. 
We consider the following variant of the estimator proposed in~\citep{AO17}. 
\[
\hat{P}_\alpha := \sum_{x} \Paren{\sum_{m=0}^d \frac{a_m (2\tau)^{\alpha-m}\mu_x^{\underline{m}}}{n^\alpha}}\indic_{\mu_x\leq 4\tau}\cdot  \indic_{\mu_x'\leq \tau}+\sum_{x} \Paren{\frac{\mu_x}{n}}^\alpha \indic_{\mu_x'> \tau}. 
\]
The smaller the value of $\mu_x'$ is, the smaller we expect the value of $p(x)$ to be.{\vspace{-0.25em}} In view of this, we denote the first and second components of $\hat{P}_\alpha$ by $\hat{P}_\alpha^{(s)}$ and $\hat{P}_\alpha^{(\ell)}$, and refer to them as small- and large-probability estimators, respectively.{\vspace{-0.1em}} Note that our estimator differs from that in~\citep{AO17} only by the additional $\indic_{\mu_y\leq 4\tau}$ term, which for sufficiently large $c_{\alpha, 1}$, only modifies $\EE[\hat{P}_\alpha^{(s)}]$ by at most $n^{-2\alpha}$. 

Note that $\mu'$ naturally induces a partition over $\cX$. For symbols $x$ with $\mu_x\leq 4\tau$, we denote by 
\[
P_{a, \mu'}^{(s)}(p):=\sum_{x: \mu_x\leq 4\tau} p(x)^{\alpha} 
\]
the small-probability power sum. Analogously, for symbols $x$ with $\mu_x> 4\tau$, we denote by 
\[
P_{a, \mu'}^{(\ell)}(p):=\sum_{x: \mu_x> 4\tau} p(x)^{\alpha} 
\]
the large-probability power sum. These are random properties with non-trivial variances and are hard to be analyzed.  To address this, we apply an ``expectation trick'' and denote by $P_{a}^{(s)}(p):=\EE[P_{a, \mu'}^{(s)}(p)]$ and $P_{a}^{(\ell)}(p):=\EE[P_{a, \mu'}^{(\ell)}(p)]$ their expected values, both of which are additive symmetric properties. 

Let $\varepsilon$ be a given error parameter and $n=\Omega_\alpha(\size/(\varepsilon^{1/\alpha} \log \size))$  be a sampling parameter. First we consider the small probability estimator. By the results in~\cite{AO17}, for sufficiently large $c_{\alpha,1}$, the bias of $\hat{P}_\alpha^{(s)}$ in estimating ${P}_\alpha^{(s)}(p)$ satisfies
\[
|\EE[\hat{P}_\alpha^{(s)}]-{P}_\alpha^{(s)}(p)|
\leq \mathcal{O}_\alpha(1)\cdot  P_\alpha(p) \Paren{\frac{\size}{n\log n}}^\alpha+n^{-\alpha}
\leq  \varepsilon P_\alpha(p),
\]
where we have used $n^{-\alpha}=  \mathcal{O}_\alpha(\varepsilon \size^{-\alpha} (\log \size)^{\alpha}) \leq \varepsilon P_\alpha(p)$. 
To show concentration, we bound the sensitivity of estimator $\hat{P}_\alpha^{(s)}$. 
For $m\geq 0$, we can bound the coefficients of $q(x)$ as follows.
\[
|a_m|\leq \mathcal{O}_\alpha((\sqrt{2}+1)^d)= \mathcal{O}_\alpha(n^{c_{\alpha, 2}}). 
\]

Therefore by definition, changing one point
in the sample changes the value of  $\hat{P}_\alpha^{(s)}$ by at most 
\[
2\Paren{\sum_{m=0}^d \frac{|a_m| (2\tau)^{\alpha-m}(4\tau)^{\underline{m}}}{n^\alpha}}
\leq \sum_{m=0}^d \frac{|a_m| (2\tau)^{\alpha}2^{m+1}}{n^\alpha}
\leq  \mathcal{O}_\alpha\Paren{n^{2c_{\alpha, 2}-\alpha}(\log n)^{\alpha}}.
\]
Let $\lambda\in(0,1/4)$ be an arbitrary absolute constant. For sufficiently small $c_{\alpha, 2}$, the right-hand side is at most $\mathcal{O}_\alpha\Paren{n^{\lambda-\alpha}}$. The McDiarmid's inequality together with the concentration of Poisson random variables implies that 
for all $\varepsilon\geq 0$,
\[
 \Pr\Paren{|\hat{P}_\alpha^{(s)}-\EE[\hat{P}_\alpha^{(s)}]|\geq \varepsilon P_\alpha(p)}\leq 2\exp(-\Omega_\alpha(\varepsilon^{2} P_\alpha^{2}(p) n^{2\alpha-1-2\lambda})).
\]
Note that $n=\Omega_\alpha(\size/(\varepsilon^{1/\alpha} \log \size))$ and $P_\alpha(p)\geq \size^{1-\alpha}$. Hence we obtain 
\[
\Pr\Paren{|\hat{P}_\alpha^{(s)}-\EE[\hat{P}_\alpha^{(s)}]|\geq \varepsilon P_\alpha(p)}\leq 3\exp\Paren{-\Omega_\alpha(\varepsilon^{2} \size^{2-2\alpha} n^{2\alpha-1-2\lambda})}. 
\]
By simple algebra, for sufficiently large $\size$, the right-hand side is at most $\exp(-n^{1-3\lambda})$. 

Second, we consider the large probability estimator. To begin with, we set $n=\Theta_\alpha(\size^{\lambda})$. By the results in~\cite{AO17},  for sufficiently large $c_{\alpha,1}$, the bias of $\hat{P}_\alpha^{(\ell)}$ in estimating ${P}_\alpha^{(\ell)}(p)$ satisfies
\[
|\EE[\hat{P}_\alpha^{(\ell)}]-{P}_\alpha^{(\ell)}(p)|
\leq  \mathcal{O}_\alpha\Paren{\frac{P_\alpha(p)}{\tau}}+\frac{1}{n^{4\alpha}},
\]
which, for sufficiently large $\size$, is at most $\varepsilon P_\alpha(p)$.
Under the same conditions, the variance of $\hat{P}_\alpha^{(\ell)}$ is at most
\[
\Var(\hat{P}_\alpha^{(\ell)})
\leq  \mathcal{O}_\alpha\Paren{\sum_x \frac{p(x)^{2\alpha}}{\tau}}+\frac{1}{n^{8\alpha}}\leq \frac{(\varepsilon P_\alpha(p))^2}{3}.
\]
Then, the Chebyshev's inequality yields
\[
\Pr\Paren{|\EE[\hat{P}_\alpha^{(\ell)}]-\hat{P}_\alpha^{(\ell)}|\geq \varepsilon P_\alpha(p)}
\leq  \frac{1}{3}.
\]
The triangle inequality combines this tail bound with the above bias bound and implies
\[
\Pr\Paren{|{P}_\alpha^{(\ell)}(p)-\hat{P}_\alpha^{(\ell)}|\geq 2\varepsilon P_\alpha(p)}
\leq  \frac{1}{3}.
\]
Therefore, utilizing the median trick, we can construct another estimator $\hat{P}_\alpha^{(\ell, 1)}$ 
that takes a sample of size $n=\Omega_\alpha(\size/(\varepsilon^{1/\alpha} \log \size))$, and for sufficiently large $\size$, satisfies
\[
\Pr\Paren{|{P}_\alpha^{(\ell)}(p)-\hat{P}_\alpha^{(\ell, 1)}|\geq 2\varepsilon P_\alpha(p)}
\leq 2\exp(-\Omega_\alpha(n/\size^\lambda))
\leq \exp(-n^{1-2\lambda}).
\]
Recall that ${P}_\alpha(p)={P}_\alpha^{(s)}(p)+{P}_\alpha^{(\ell)}(p)$. By the union bound and the triangle inequality, under Poisson sampling with parameter $n=\Omega_\alpha(\size/(\varepsilon^{1/\alpha} \log \size))$, 
\[
\Pr\Paren{|{P}_\alpha(p)-(\hat{P}_\alpha^{(s)}+\hat{P}_\alpha^{(\ell, 1)})|\geq 4\varepsilon P_\alpha(p)}
\leq \exp(-n^{1-3\lambda}).
\]
Since both $N$ and $N'$ are Poisson random variables with mean $n$, we must have $N+N'\sim \Poi(2n)$, implying that $\Pr(N+N'=2n)=e^{-2n} (2n)^{2n}/(2n)!$. A variant of the well-known Stirling's formula states that $m!\geq e m^{m+1/2} e^{-m}$ for all positive integers $m$. We obtain $\Pr(N+N'=2n)\geq e^{-2n} (2n)^{2n}\cdot(e (2n)^{2n+1/2} e^{-2n})^{-1}\geq 1/(e\sqrt{2n})> 1/(4n)$. Hence, under fixed sampling with a sample size of $2n$, the estimator $\hat{P}_{\alpha}^{(1)}:=(\hat{P}_\alpha^{(s)}+\hat{P}_\alpha^{(\ell, 1)})$ satisfies
\[
\Pr\Paren{|{P}_\alpha(p)-\hat{P}_{\alpha}^{(1)}|\geq 4\varepsilon P_\alpha(p)}
\leq 4n\exp(-n^{1-3\lambda}).
\]
Replacing $\varepsilon$ with $\varepsilon/4$ and $\lambda$ with $\lambda/5$, the sufficiency of profiles implies the existence of a profile-based estimator $\hat{P}_{\alpha}^{\star}$ such that for sufficiently large $\size$ and any $p\in \Delta_\cX$,
\[
\Pr_{X^n \sim p}\Paren{|{P}_\alpha(p)-\hat{P}_{\alpha}^{\star}(X^n)|\geq \varepsilon P_\alpha(p)}
\leq 4n\exp(-n^{1-3\lambda/5})<\exp(-n^{1-4\lambda/5}).
\]
Let $\delta$ denote the quantity on the right-hand side. 
For any $x^n$ with profile $\varphi$ satisfying both $p(\varphi)>\delta$, we must have
$|\hat{P}_\alpha^\star(x^n)-P_\alpha(p)|\leq \varepsilon P_\alpha(p)$. By definition, we also have $p_\varphi(\varphi)\geq p(\varphi)>\delta$ and hence $|\hat{P}_\alpha^\star(x^n)-P_\alpha(p_\varphi)|\leq \varepsilon P_\alpha(p_\varphi)$. 
For any $\varepsilon\in (0, 1/2)$, simple algebra combines the two property inequalities and yields
\[
|P_\alpha(p)-P_\alpha(p_\varphi)|\leq 2\varepsilon P_\alpha(p).
\]
On the other hand, for a sample $X^n\sim p$ with profile $\varphi'$, the probability that we have $p(\varphi')\leq \delta$ is at most $\delta$ times the cardinality of the set $\Phi^n:=\{\varphi(x^n): x^n\in \cX^n\}$. The latter quantity corresponds to the number of integer partitions of $n$, which, by the well-known result of~\citet{HardyRamanujan}, is at most $\exp(3\sqrt{n})$. Hence, the probability that $p(\varphi')\leq \delta$ is upper bounded by $\exp(-n^{1-\lambda})$. 
To conclude, we have shown that
\[
\Pr\Paren{|P_\alpha(p)-P_\alpha(p_\varphi)|\geq 2\varepsilon P_\alpha(p)}\leq \exp(-n^{1-\lambda}).
\]
In terms of R\'enyi entropy values, applying the inequality $e^{z}-1\geq 1-e^{-z} \geq z/2$ for all $z\geq 0$, we establish that for $n=\Omega_\alpha(\size/(\varepsilon^{1/\alpha} \log \size))$,
\[
\Pr\Paren{|H_\alpha(p)-H_\alpha(p_\varphi)|\geq \varepsilon}=\Pr\Paren{P_\alpha(p_\varphi)e^{-(\alpha-1)\varepsilon}\le P_\alpha(p)\leq P_\alpha(p_\varphi)e^{(\alpha-1)\varepsilon}}
\le \exp(-n^{1-\lambda}).
\]

\subsection{Proof of Theorem~\ref{thm:renyi3}: Integer $\alpha>1$}\label{sec:thm4proof}
For an integer $\alpha>1$, the following theorem characterizes the performance of the PML-plug-in estimator. For any $p\in \Delta_\cX$, $\varepsilon\in(0, 1)$, and a sample $X^n\sim p$ with profile $\varphi$,
\setcounter{Theorem}{3} 
\begin{Theorem}
If $n=\Omega_\alpha(\size^{1-1/\alpha} (\varepsilon^{2}|\log{\varepsilon}|)^{-(1+\alpha)})$ and $H_{\alpha} (p)\leq (\log n)/4$, 
\[
\Pr(|H_{\alpha} (p_\varphi)-H_{\alpha} (p)|\geq \varepsilon)\leq {1}/{3}.
\]
\end{Theorem}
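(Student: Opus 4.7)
The plan is to parallel the proofs of Theorems~\ref{thm:renyi1a} and~\ref{thm:renyi2a}, but exploit the fact that when $\alpha$ is a positive integer the power sum $P_\alpha(p)=\sum_x p(x)^\alpha$ admits an \emph{exactly} unbiased estimator $\sum_x \mu_x^{\underline{\alpha}}/n^\alpha$ under Poisson sampling, so no best-polynomial approximation of $z^\alpha$ is required and the entire bias budget can be spent on truncation. As in both preceding proofs, once the power sum is controlled with multiplicative error $O(\varepsilon)$, the conversion to an additive-$\varepsilon$ bound on $H_\alpha(p)=(1-\alpha)^{-1}\log P_\alpha(p)$ follows from the identity and the inequality $e^z-1\ge z/2$ for $z\ge 0$; the hypothesis $H_\alpha(p)\le(\log n)/4$, equivalently $P_\alpha(p)\ge n^{-(\alpha-1)/4}$, is exactly what is needed to keep the logarithmic derivative bounded.

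First I would Poissonize the sample, draw an independent companion sample of the same size, and pick a threshold $\tau=\Theta_\alpha(\log(1/\varepsilon))$. Symbols are split into small- and large-probability classes by $\mu_x'\le\tau$ versus $\mu_x'>\tau$; on the small class use $\hat P_\alpha^{(s)}:=\sum_x (\mu_x^{\underline{\alpha}}/n^\alpha)\indic_{\mu_x\le 4\tau}\indic_{\mu_x'\le\tau}$, and on the large class use the empirical plug-in $\hat P_\alpha^{(\ell)}:=\sum_x(\mu_x/n)^\alpha\indic_{\mu_x'>\tau}$. The large-probability component is handled verbatim as in Theorem~\ref{thm:renyi2a}: its bias is $O_\alpha(P_\alpha(p)/\tau)$ and its variance is $O_\alpha(\sum_x p(x)^{2\alpha}/\tau)$, and Chebyshev followed by a median-trick boost reduces the failure probability to $\exp(-\Omega(\sqrt n))$ under the stated lower bound on $n$. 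For $\hat P_\alpha^{(s)}$ the only bias comes from the truncation $\indic_{\mu_x\le 4\tau}$, which a standard Chernoff bound controls by $n^{-c}$ for any constant $c$. For concentration I would apply the sensitivity/McDiarmid machinery of Section~\ref{sec:trueproof}: capping $\mu_x$ at $4\tau$ and using $|(j{+}1)^{\underline{\alpha}}-j^{\underline{\alpha}}|\le\alpha(4\tau)^{\alpha-1}$ gives per-sample sensitivity $O_\alpha(\tau^{\alpha-1}/n^\alpha)$, so the de-Poissonized McDiarmid tail is $\exp(-\Omega_\alpha(\varepsilon^2P_\alpha(p)^2 n^{2\alpha-1}/\tau^{2\alpha-2}))$; combined with $P_\alpha(p)\ge k^{1-\alpha}$ and the assumed lower bound on $n$, this drops below $\exp(-3\sqrt n)/3$. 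A union bound merges the two pieces into a profile-based estimator whose failure probability meets the hypothesis of the PML-competitiveness Lemma~\ref{lem:pml}, which then transfers the guarantee to the plug-in $P_\alpha(p_\varphi)$ with probability at least $2/3$.

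The main obstacle is the joint tuning of $\tau$ with the multiplicity cap $4\tau$: the truncation bias forces $\tau\gtrsim\log(1/\varepsilon)$, while matching the McDiarmid tail to $\exp(-3\sqrt n)$ requires $n^{2\alpha-3/2}\gtrsim k^{2\alpha-2}\tau^{2\alpha-2}/\varepsilon^2$. Solving this inequality for $n$ at the minimal feasible $\tau$ is precisely what produces the exponent $(1+\alpha)$ attached to $\varepsilon^2|\log\varepsilon|$ in the claimed sample complexity. The optimal $k^{1-1/\alpha}$ dependence is then forced not by the McDiarmid step (which in isolation yields a worse $k$-exponent) but by the variance of the large-probability empirical piece $\sum_x p(x)^{2\alpha}/\tau$, so the two halves of the estimator are in tension through the choice of $\tau$ and the architecture has to be balanced carefully to hit the stated rate.
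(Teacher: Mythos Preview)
Your proposal has a genuine gap: the McDiarmid-plus-Hardy--Ramanujan template from Theorems~\ref{thm:renyi1a} and~\ref{thm:renyi2a} cannot produce the stated rate. With sensitivity $O_\alpha(\tau^{\alpha-1}/n^\alpha)$ and target tail $\exp(-3\sqrt n)$, your own inequality $n^{2\alpha-3/2}\gtrsim k^{2\alpha-2}\tau^{2\alpha-2}/\varepsilon^2$ (using $P_\alpha(p)\ge k^{1-\alpha}$) forces $n\gtrsim k^{(2\alpha-2)/(2\alpha-3/2)}$, and one checks $(2\alpha-2)/(2\alpha-3/2)>(\alpha-1)/\alpha$ for every integer $\alpha>1$. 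So the small-probability piece already demands strictly more samples than $k^{1-1/\alpha}$; the large-probability piece cannot ``force'' a better bound, because an upper-bound argument is governed by its worst component. Your claim that this calculation yields the exponent $(1+\alpha)$ on $\varepsilon^2|\log\varepsilon|$ is also incorrect: solving for $n$ gives an $\varepsilon$-exponent of $2/(2\alpha-3/2)$, not $2(1+\alpha)$.

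The paper's proof takes a different route that you are missing. It uses the single unbiased estimator $\hat P_\alpha=\sum_x \mu_x^{\underline{\alpha}}/n^{\underline{\alpha}}$ without any small/large split, boosts via the median trick to failure probability $\exp(-\Omega_\alpha(\varepsilon^2 n(P_\alpha(p))^{1/\alpha}))$, and then---crucially---does \emph{not} pay the generic $\exp(3\sqrt n)$ profile count from Lemma~\ref{lem:pml}. Instead it observes that any profile $\varphi$ with $\hat P_\alpha^\star(\varphi)\le \tfrac32 P_\alpha(p)$ satisfies $\sum_j (j^{\underline{\alpha}}/n^{\underline{\alpha}})\varphi_j\le \tfrac32 P_\alpha(p)$, which (via the telescoping identity for falling factorials) bounds the number of nonzero prevalences by $T=O_\alpha((n^\alpha P_\alpha(p))^{1/(1+\alpha)})$; hence only $(n+1)^{2T}$ profiles are relevant. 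Balancing $\exp(-\Omega_\alpha(\varepsilon^2 n(P_\alpha(p))^{1/\alpha}))$ against $(n+1)^{2T}$ is exactly what produces $n\gg (\varepsilon^2|\log\varepsilon|)^{-(1+\alpha)}(P_\alpha(p))^{-1/\alpha}$ and hence the $k^{1-1/\alpha}$ dependence. A second ingredient you omit is Lemma~\ref{lem:pmlcon}: one must show $P_\alpha(p_\varphi)\gtrsim P_\alpha(p)$ before transferring the concentration bound from $p$ to $p_\varphi$, and this step is precisely where the hypothesis $H_\alpha(p)\le(\log n)/4$ is consumed.
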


Due to the lower bounds in~\citep{AO17}, for all possible values of $\alpha$, the sample complexity of the PML plug-in estimator has the optimal dependency in $\size$. 
The remaining section is devoted to proving the above theorem. Note that estimating the R\'enyi entropy $H_{\alpha} (p)$ to an additive error is equivalent to estimating the power sum $P_{\alpha}(p)$ to a corresponding multiplicative error. Given this fact, we consider the estimator $\hat{P}_\alpha$ in~\citep{AO17} that maps each sequence $x^n\in \cX^*$ to 
\[
\hat{P}_\alpha(x^n):= \sum_x \frac{\mu_x(x^n)^{\underline{\alpha}}}{n^{\underline{\alpha}}},
\]
where for any real number $z$, the expression $z^{\underline{\alpha}}$ denotes the falling factorial of $z$ to the power $\alpha$. 
For a sample $X^n\sim p$, we have $\EE[\hat{P}_\alpha(X^n)]=P_{\alpha}(p)$. The following lemma~\citep{OS17, AO17} states that 
$\hat{P}_\alpha(X^n)$ often estimates $P_{\alpha}(p)$ to a small multiplicative error when $n$ is large.
\begin{Lemma}
Under the above conditions, for any $\varepsilon, n> 0$, 
\[
\Pr\Paren{|\hat{P}_\alpha(X^n)-P_\alpha(p)|\geq \varepsilon P_\alpha(p)}= 
\mathcal{O}_\alpha( \varepsilon^{-2}n^{-1}(P_\alpha(p))^{-1/\alpha}). 
\]

For sufficiently large $n= \Omega_\alpha(\size^{(\alpha-1)/\alpha})$, this inequality together with $P_\alpha(p)\leq \size^{1-\alpha}$ implies that
\[
\Pr\Paren{|\hat{P}_\alpha(X^n)-P_\alpha(p)|\geq \frac{1}{2}\cdot P_\alpha(p)}\leq \frac{1}{4}.
\]
\end{Lemma}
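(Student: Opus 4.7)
The plan is to prove the lemma by a direct Chebyshev argument: the estimator $\hat P_\alpha$ is unbiased, so the whole proof reduces to the variance bound $\Var(\hat P_\alpha(X^n)) = \mathcal{O}_\alpha(n^{-1} P_\alpha(p)^{2-1/\alpha})$, after which Chebyshev gives
\[
\Pr\Paren{|\hat{P}_\alpha(X^n)-P_\alpha(p)|\geq \varepsilon P_\alpha(p)}
\leq \frac{\Var(\hat{P}_\alpha(X^n))}{\varepsilon^2 P_\alpha(p)^2}
= \mathcal{O}_\alpha\Paren{\varepsilon^{-2} n^{-1} P_\alpha(p)^{-1/\alpha}}.
\]
The subsequent ``$\le 1/4$'' corollary then follows from the standard power-mean lower bound $P_\alpha(p) \geq k^{1-\alpha}$ for $\alpha>1$, which yields $P_\alpha(p)^{-1/\alpha} \leq k^{(\alpha-1)/\alpha}$ and makes the bound at most $1/4$ once $n$ is a sufficiently large constant times $k^{(\alpha-1)/\alpha}$.

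Unbiasedness is immediate: each count $\mu_x(X^n) \sim \Bin(n, p(x))$, and the classical binomial factorial-moment identity $\EE[\mu_x^{\underline{\alpha}}] = n^{\underline{\alpha}} p(x)^\alpha$ (seen for instance by writing $\mu_x^{\underline{\alpha}}$ as the number of ordered $\alpha$-tuples of distinct indices all equal to $x$) gives $\EE[\mu_x^{\underline{\alpha}}/n^{\underline{\alpha}}]=p(x)^\alpha$, and summing over $x$ recovers $P_\alpha(p)$.

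For the variance, my main tool is the negative association of the multinomial counts $(\mu_x)_{x\in\cX}$. Since $t\mapsto t^{\underline{\alpha}}$ is non-decreasing on $\mathbb{Z}_{\ge 0}$ for integer $\alpha\ge 1$, negative association gives $\mathrm{Cov}(\mu_x^{\underline{\alpha}}/n^{\underline{\alpha}}, \mu_y^{\underline{\alpha}}/n^{\underline{\alpha}})\le 0$ for $x\neq y$ (one can also see this directly from the multinomial identity $\EE[\mu_x^{\underline{\alpha}}\mu_y^{\underline{\alpha}}]=n^{\underline{2\alpha}}p(x)^\alpha p(y)^\alpha$ together with $n^{\underline{2\alpha}}\le(n^{\underline{\alpha}})^2$). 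Hence $\Var(\hat P_\alpha)\le \sum_x \Var(\mu_x^{\underline{\alpha}}/n^{\underline{\alpha}})$. For a single binomial $Y\sim\Bin(n,p(x))$, the algebraic identity $(Y^{\underline{\alpha}})^2=\sum_{k=0}^{\alpha}\binom{\alpha}{k}^2 k!\,Y^{\underline{2\alpha-k}}$ (a Vandermonde-type expansion) combined with $\EE[Y^{\underline{j}}]=n^{\underline{j}} p(x)^j$ and a short bookkeeping step gives
\[
\Var(Y^{\underline{\alpha}}/n^{\underline{\alpha}})
=\mathcal{O}_\alpha\Paren{\frac{p(x)^{2\alpha-1}}{n}+\frac{p(x)^{\alpha}}{n^\alpha}},
\]
so $\Var(\hat P_\alpha)=\mathcal{O}_\alpha\bigl(n^{-1}\sum_x p(x)^{2\alpha-1}+n^{-\alpha}P_\alpha(p)\bigr)$.

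The final step, and the one I expect to carry the analytic weight of the proof, is the power-sum inequality $\sum_x p(x)^{2\alpha-1}\le P_\alpha(p)^{\,2-1/\alpha}$. I would prove it in two moves. First, Cauchy-Schwarz applied to the vector $(p(x)^\alpha)_x$ gives $P_{2\alpha}(p)=\sum_x(p(x)^\alpha)^2\le(\sum_x p(x)^\alpha)^2=P_\alpha(p)^2$. Second, introducing the tilted weights $q_x:=p(x)^\alpha/P_\alpha(p)$ and using concavity of $t\mapsto t^{(\alpha-1)/\alpha}$ on $[0,\infty)$ (valid since $\alpha>1$), Jensen's inequality yields
\[
\sum_x p(x)^{2\alpha-1}=P_\alpha(p)\,\Exp_q\bigl[p(X)^{\alpha-1}\bigr]\le P_\alpha(p)^{1/\alpha}\,P_{2\alpha}(p)^{(\alpha-1)/\alpha}.
\]
Substituting $P_{2\alpha}(p)\le P_\alpha(p)^2$ collapses this to $\sum_x p(x)^{2\alpha-1}\le P_\alpha(p)^{2-1/\alpha}$. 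The residual $n^{-\alpha}P_\alpha(p)$ term in the variance is then dominated by $n^{-1}P_\alpha(p)^{2-1/\alpha}$ once $n\ge P_\alpha(p)^{-1/\alpha}$, which is implicit in the companion sample-size regime, and Chebyshev closes the argument.
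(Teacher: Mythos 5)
The paper does not prove this lemma internally: it is cited from~\citep{OS17, AO17} and used as a black box in Section~\ref{sec:thm4proof}. So there is no in-paper derivation to compare against; your task amounts to supplying a self-contained proof, and the one you give is essentially correct and follows the natural Chebyshev-plus-variance route that the cited works also use.

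A few specific observations. Your variance argument is clean: the $\mathrm{Cov}\le 0$ step is correct (either via negative association of multinomial counts, or via the identity $\EE[\mu_x^{\underline\alpha}\mu_y^{\underline\alpha}]=n^{\underline{2\alpha}}p(x)^\alpha p(y)^\alpha$ together with $n^{\underline{2\alpha}}\le (n^{\underline\alpha})^2$), the Vandermonde expansion of $(Y^{\underline\alpha})^2$ is the right tool, and the bound $\Var(Y^{\underline\alpha}/n^{\underline\alpha})=\mathcal{O}_\alpha(q^{2\alpha-1}/n + q^\alpha/n^\alpha)$ falls out because for each $1\le k\le\alpha$ the term $n^{-k}q^{2\alpha-k}$ is sandwiched (as a function of $k$, depending on whether $nq\ge 1$) between the $k=1$ and $k=\alpha$ extremes. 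The power-sum inequality $\sum_x p(x)^{2\alpha-1}\le P_\alpha(p)^{2-1/\alpha}$ via the tilted measure and Jensen is exactly right; note only that $P_{2\alpha}\le P_\alpha^2$ is just nonnegativity of the off-diagonal terms in $(\sum_x p(x)^\alpha)^2$ rather than Cauchy--Schwarz, but the inequality is correct. You also correctly use $P_\alpha(p)\ge k^{1-\alpha}$ for $\alpha>1$; the paper's statement ``$P_\alpha(p)\le \size^{1-\alpha}$'' is a typo, since minimizing $P_\alpha$ over $\Delta_\cX$ at the uniform distribution gives the reverse direction, and only the lower bound on $P_\alpha$ is useful for upper-bounding $P_\alpha^{-1/\alpha}$.

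The one genuinely soft spot is the final clause ``implicit in the companion sample-size regime.'' Your variance bound naturally produces the two terms $n^{-1}P_\alpha^{2-1/\alpha}+n^{-\alpha}P_\alpha$, and collapsing to the first term alone needs $n\ge P_\alpha(p)^{-1/\alpha}$, whereas the first display of the lemma claims the bound ``for any $\varepsilon,n>0$.'' You should close this by observing that for $n<P_\alpha(p)^{-1/\alpha}$ the claimed right-hand side exceeds $\varepsilon^{-2}$, so for $\varepsilon\le 1$ it is $\ge 1$ and the bound is vacuous, while for $\varepsilon$ large the estimator is bounded (for $n\ge 2\alpha$ one has $\hat P_\alpha\le n^\alpha/n^{\underline\alpha}\le 2^\alpha$, so the deviation event is empty). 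With that remark, the ``for any $n$'' claim holds; as is, the proof technically only establishes the regime where $n\ge P_\alpha(p)^{-1/\alpha}$, which does cover the second display.
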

The following corollary is a consequence of the above lemma, the sufficiency of profiles, and the standard median trick. 
\begin{Corollary}\label{cor:2}
Under the above conditions, there is an estimator $\hat{P}_\alpha^\star$ such that for any $\varepsilon, n> 0$,
\[
\Pr\Paren{|\hat{P}_\alpha^\star(X^n)-P_\alpha(p)|\geq\varepsilon P_\alpha(p)}\leq 
2\exp\Paren{-\Omega_\alpha(\varepsilon^{2}n(P_\alpha(p))^{1/\alpha})}. 
\]
In addition, the estimator $\hat{P}_\alpha^\star$ is profile-based.
\end{Corollary}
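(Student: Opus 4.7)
The plan is to boost the polynomial tail of the preceding lemma to the claimed exponential tail by \emph{median of means}, and then apply the \emph{sufficiency of profiles} to make the resulting estimator depend on $\varphi(X^n)$ alone. Both ingredients are already flagged in the hint that follows the corollary statement.

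First, I partition $X^n$ into $B$ disjoint blocks of equal length $m = \lfloor n/B\rfloor$, evaluate $\hat{P}_\alpha$ on each block, and output the median $\tilde{P}_\alpha$ of the $B$ block estimates. If $m$ is chosen so that by the preceding lemma each block estimate deviates from $P_\alpha(p)$ by more than $\varepsilon P_\alpha(p)$ with probability at most $1/4$---which requires $m = \Theta_\alpha(\varepsilon^{-2}(P_\alpha(p))^{-1/\alpha})$---then the median can only fail if at least $B/2$ of the (independent, by block disjointness and i.i.d.\ sampling) block estimates fail, and a Chernoff bound on $\Bin(B,1/4)$ dominates this by $\exp(-\Omega(B))$. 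Setting $B = n/m = \Omega_\alpha(\varepsilon^2 n(P_\alpha(p))^{1/\alpha})$ produces exactly the target tail $2\exp(-\Omega_\alpha(\varepsilon^2 n (P_\alpha(p))^{1/\alpha}))$. The sufficiency of profiles then yields a profile-based $\hat{P}_\alpha^\star$ with no worse tail: because $P_\alpha$ is a symmetric property of $p$ and the sampling distribution is invariant under relabeling of $\cX$, replacing $\tilde{P}_\alpha(X^n)$ by its orbit median under symbol permutations produces a function of the profile that inherits the deviation bound.

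The main obstacle is that the prescribed block size $m$ depends on the \emph{unknown} $P_\alpha(p)$, so a single partition does not yield a universal estimator. I would resolve this by running the median-of-means construction in parallel on a dyadic grid of block sizes $m_j=2^j$ for $1\leq j\leq \lfloor\log_2 n\rfloor$ and selecting the output by a Lepski-style self-consistency rule: the smallest $j$ for which $\tilde{P}_\alpha^{(j)}$ matches every coarser $\tilde{P}_\alpha^{(j')}$, $j'>j$, to relative error $\varepsilon$. Union-bounding over the $O(\log n)$ grid values only multiplies the failure probability by $\log n$, which is absorbed into the $\Omega_\alpha(\cdot)$ exponent. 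Verifying this absorption and checking that the selector indeed triggers at the correct scale $m_{j^*}\asymp \varepsilon^{-2}(P_\alpha(p))^{-1/\alpha}$ is the one piece of bookkeeping that warrants care, though it follows the standard template for adaptive bandwidth selection.
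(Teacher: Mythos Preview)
Your core plan—median-of-means on disjoint blocks to boost the Chebyshev-type tail of the preceding lemma, followed by sufficiency of profiles to make the estimator a function of $\varphi(X^n)$—is exactly what the paper does. Its entire proof is the single sentence ``consequence of the above lemma, the sufficiency of profiles, and the standard median trick.''

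The adaptivity issue you raise is a genuine subtlety that the paper's one-liner does not address: a block length $m\asymp \varepsilon^{-2}(P_\alpha(p))^{-1/\alpha}$ depends on the unknown $P_\alpha(p)$, so a single fixed partition cannot achieve the stated bound uniformly in $p$. Your Lepski-type selector is a correct fix and would yield a truly $p$-free estimator. However, for the paper's purposes this machinery is unnecessary. The estimator $\hat P_\alpha^\star$ is never computed; it is a proof device used to analyze the PML plug-in for a \emph{fixed} $p$, and so may depend on $P_\alpha(p)$. The only place where the same $\hat P_\alpha^\star$ is invoked for a second distribution is in Section~\ref{sec:thm4proof}, where Corollary~\ref{cor:2} is applied to $p_\varphi$; but that application is preceded by the inequality $P_\alpha(p_\varphi)\ge P_\alpha(p)/2^{2+\alpha}$, so a block size chosen from this common lower bound makes the per-block failure probability at most a constant for both $p$ and $p_\varphi$, and the median tail $\exp(-\Omega_\alpha(B))$ with $B=n/m$ gives the claimed exponent for both. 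Thus the paper is implicitly allowing $\hat P_\alpha^\star$ to depend on a known lower bound for $P_\alpha(\cdot)$, which suffices for every downstream use; your Lepski layer buys a literally uniform statement at the cost of extra bookkeeping the paper does not need.
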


For simplicity, suppress $X^n$ in $p_{\mu}(X^n)$. Since the profile probability $p(\varphi)$ is invariant to symbol permutation, for our purpose, we can assume that $p_{\mu}(y)\leq p_{\mu}(z)$ iff $p_{\varphi}(x)\leq p_{\varphi}(y)$, for all $x, y\in \cX$.
Under this assumption, the following lemma~\citep{O11,A17} relates $p_{\varphi}$ to $p_{\mu}$. 
\begin{Lemma}\label{lem:pmlcon}
For a distribution $p$ and sample $X^n\sim p$ with profile $\varphi$,
\[
\Pr\Paren{\max_x|p_{\varphi}(x)-p_\mu(x)|>\frac{2\log n}{n^{1/4}}}= \mathcal{O}\Paren{\frac{1}{n}}.
\]
\end{Lemma}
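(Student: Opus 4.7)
The plan is to exploit the PML's defining maximality $p_\varphi(\varphi)\ge p_\mu(\varphi)$, the Hardy--Ramanujan bound $|\Phi^n|\le e^{3\sqrt n}$ on the number of distinct profiles, and the order-$n$ curvature of the multinomial log-likelihood around its unconstrained maximizer $p_\mu$. The exponent $1/4$ in the target bound is precisely $\sqrt{O(\sqrt n)/n}$: it matches the $O(\sqrt n)$ partition-function slack against the quadratic well of the log-likelihood at $p_\mu$.

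First I would lower bound $p_\mu(\varphi)$ by the single multinomial term $\binom{n}{\mu_1,\ldots,\mu_k}\prod_x p_\mu(x)^{\mu_x}$ corresponding to the actually observed multiplicity pattern. For the upper bound on $p_\varphi(\varphi)$, I would sum over sequences with profile $\varphi$ and fold the combinatorics into a single dominant multinomial term by WLOG aligning the symbol labels of $p_\varphi$ to those of $p_\mu$ (using the symmetry of PML under relabeling) and paying an $e^{O(\sqrt n)}$ Hardy--Ramanujan factor for the remaining summation over partitions. Combining with $p_\varphi(\varphi)\ge p_\mu(\varphi)$ yields the single-line consequence
\[
\sum_x\mu_x\log\frac{p_\varphi(x)}{p_\mu(x)}\ge -O(\sqrt n).
\]

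Second, since $p_\mu$ is the unconstrained maximizer of the concave functional $q\mapsto\sum_x\mu_x\log q(x)$ on $\Delta_\cX$, a second-order Taylor expansion at $p_\mu$ gives
\[
\sum_x\mu_x\log\frac{p_\varphi(x)}{p_\mu(x)}=-\frac{n}{2}\sum_x\frac{(p_\varphi(x)-p_\mu(x))^2}{p_\mu(x)}+R,
\]
where $R$ collects higher-order terms. Combined with the previous inequality this gives the chi-squared bound $\chi^2(p_\varphi,p_\mu)\le O(1/\sqrt n)$, hence pointwise $(p_\varphi(x)-p_\mu(x))^2\le p_\mu(x)\cdot O(1/\sqrt n)$ and therefore $\Abs{p_\varphi(x)-p_\mu(x)}\le O\Paren{\sqrt{p_\mu(x)}\cdot n^{-1/4}}\le O(n^{-1/4})$. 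The $O(1/n)$ failure probability is controlled by the event that some $\mu_x$ deviates atypically from $np(x)$, which is handled by standard binomial tail bounds and a union bound over the at most $n$ observed symbols; the extra $\log n$ factor in the statement absorbs constants arising from those tail bounds and from the Taylor remainder $R$.

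The main obstacle is controlling $R$ uniformly over symbols: when $p_\varphi(x)$ differs from $p_\mu(x)$ by a large multiplicative factor, which can occur for symbols with $\mu_x=O(\log n)$, the quadratic expansion of $\log(p_\varphi/p_\mu)$ is inaccurate. In that regime, however, $p_\mu(x)=O(\log n/n)$ and the chi-squared bound forces $p_\varphi(x)=O(\log n/n)$ too, so their absolute difference is automatically below $2\log n/n^{1/4}$. A careful split between small- and large-multiplicity symbols, together with the chi-squared control above, then delivers the uniform sup-norm bound.
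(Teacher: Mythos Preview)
The paper does not prove this lemma; it is quoted from the consistency results in the cited references. So there is no in-paper proof to compare against, and the relevant question is whether your sketch reproduces the argument of those references.

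Your overall architecture --- PML maximality, an $O(\sqrt n)$ slack in the log-likelihood, and a quadratic expansion around $p_\mu$ to convert that slack into an $\ell_\infty$ bound of order $n^{-1/4}$ --- is exactly the skeleton used in the PML consistency literature, and your identification of where the exponent $1/4$ comes from is correct.

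There is, however, a genuine gap in the combinatorial step. The Hardy--Ramanujan bound $e^{O(\sqrt n)}$ counts integer partitions of $n$, i.e.\ the number of distinct \emph{profiles}. But the sum you need to control,
\[
p_\varphi(\varphi)=\binom{n}{\mu}\sum_{\mu':\,\varphi(\mu')=\varphi}\prod_x p_\varphi(x)^{\mu'_x},
\]
ranges over \emph{label assignments} (types) compatible with $\varphi$, and their number is $k!/\prod_i\varphi_i!$, which can be $e^{\Theta(n\log k)}$. Bounding the sum by (number of terms)$\times$(maximal term) therefore does not deliver an $e^{O(\sqrt n)}$ factor. Concretely, for the uniform distribution on $k=n$ symbols with each symbol appearing once, the sum-to-max ratio is $n!$, not $e^{O(\sqrt n)}$. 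The cited proofs close this gap not via Hardy--Ramanujan but by exploiting PML-specific structure: $p_\varphi$ is majorized by $p_\mu$ and has at most $O(\sqrt n)$ distinct probability values, and one compares the \emph{ratio} $p_\varphi(\varphi)/p_\mu(\varphi)$ directly rather than bounding numerator and denominator separately.

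A smaller issue: once the inequality $nD(p_\mu\|p_\varphi)\le O(\sqrt n)$ is established, it is a deterministic relation between $p_\varphi$ and $p_\mu$ for every sample, so your appeal to binomial tails to explain the $O(1/n)$ failure probability is misplaced. In the referenced arguments the probabilistic slack enters elsewhere (through exceptional profiles for which the structural lemmas degrade), not through concentration of $\mu_x$ around $np(x)$.
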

Consider $\varepsilon\in(0,1/2)$ and $x^n$ satisfying $|\hat{P}_\alpha^\star(x^n)-P_\alpha(p)|\leq \varepsilon P_\alpha(p)$. If we further have $P_\alpha(p)\geq 2(n^{1/4}(4\log n)^{-1})^{1-\alpha}$ and $\max_y|p_\varphi(y)-p_\mu(y)|\leq {2(\log n)n^{-1/4}}$, then,
\begin{align*}
\frac{P_\alpha(p)}{2}
\overset{(a)}{\leq} \hat{P}_\alpha(x^n)
\overset{(b)}{\leq}  P_\alpha(p_\mu)
\overset{(c)}{\leq}  2^{1+\alpha} P_\alpha(p_\varphi),
\end{align*}
where $(a)$ follows from the above assumptions; $(b)$ follows from $A^{\underline{B}}\leq A^B$ for any $A, B\geq 0$; 
and $(c)$ follows from the reasoning below. 
\begin{itemize}
\item Let $S$ denote the the collection of symbols $x$ such that $p_\mu(x)\leq {4(\log n)n^{-1/4}}$. Then a convexity argument yields $\sum_{x\in S}  \Paren{p_\mu(x)}^{\alpha}\leq (n^{1/4}(4\log n)^{-1})^{1-\alpha}$. 
\item Using $(a)$, $(b)$, and $P_\alpha(p)\geq 4(n^{1/4}(4\log n)^{-1})^{1-\alpha}$, we immediately obtain 
$P_\alpha(p_\mu)\geq 2(n^{1/4}(4\log n)^{-1})^{1-\alpha}$ and thus 
$2\sum_{x\in S}  \Paren{p_\mu(x)}^{\alpha}\leq P_\alpha(p_\mu) \leq 2\sum_{x\not\in S}  \Paren{p_\mu(x)}^{\alpha}$.
\item For any symbol $x\not\in S$, we have $p_\mu(x)> {4(\log n)n^{-1/4}}$. This together with the assumption that $\max_x|p_\varphi(x)-p_\mu(x)|\leq {2(\log n)n^{-1/4}}$ implies $p_\mu(x) \leq 2 p_\varphi(x)$. 
\item Therefore, the inequality  $\sum_{x\not \in S}  \Paren{p_\mu(x)}^{\alpha}\leq 2^{\alpha}\sum_{x\not \in S} (p_\varphi(x))^{\alpha}\leq 2^{\alpha} P_\alpha(p_\varphi)$ holds.
\item Consequently, we establish $P_\alpha(p_\mu(x))\leq 2\sum_{x\not\in S}  \Paren{p_\mu(x)}^{\alpha}\leq 2^{1+\alpha}P_\alpha(p_\varphi)$.
\end{itemize} 
By the inequality ${P_\alpha(p)}/{2}\leq 2^{1+\alpha} P_\alpha(p_\varphi)$ and Corollary~\ref{cor:2}, if $|\hat{P}_\alpha^\star(x^n)-P_\alpha(p_\varphi)|\geq \varepsilon  P_\alpha(p_\varphi)$, 
\[
p_\varphi(\varphi)
\leq 2\exp\Paren{-\Omega_\alpha(\varepsilon^{2}n(P_\alpha(p_\varphi))^{1/\alpha})}
\leq 2\exp\Paren{-\Omega_\alpha(\varepsilon^{2}n(P_\alpha(p))^{1/\alpha})}.
\]
Let $\delta_p$ denote the quantity on the right-hand side. 
If we further have $p(\varphi)>\delta_p$, then by definition, $p_\varphi(\varphi)\geq p(\varphi)>\delta_p$. 
Hence for any $x^n$ with profile $\varphi$ satisfying both $p(\varphi)>\delta_p$ and $|\hat{P}_\alpha^\star(x^n)-P_\alpha(p)|\leq \varepsilon P_\alpha(p)$, we must have $|\hat{P}_\alpha^\star(x^n)-P_\alpha(p_\varphi)|\leq \varepsilon P_\alpha(p_\varphi)$. Simple algebra combines the last two inequalities and yields
\[
|P_\alpha(p)-P_\alpha(p_\varphi)|\leq 4\varepsilon P_\alpha(p).
\]
On the other hand, for a sample $X^n\sim p$ with profile $\varphi'$, the probability that we have both $p(\varphi')\leq \delta_p$ and $|\hat{P}_\alpha^\star(X^n)-P_\alpha(p)|\leq \varepsilon P_\alpha(p)$ is at most $\delta_p$ 
times the cardinality of the set $\Phi_{\alpha, \varepsilon}^n(p):=\{\varphi(x^n): x^n\in \cX^n \text{ and } |\hat{P}_\alpha^\star(x^n)-P_\alpha(p)|\leq \varepsilon P_\alpha(p)\}$. Below we complete this argument by finding a tight upper bound on $|\Phi_{\alpha, \varepsilon}^n(p)|$ in terms of its parameters. 

For any sequence $x^n$ such that $\varphi(x^n)\in \Phi_{\alpha, \varepsilon}^n(p)$, let $N_\varphi(x^n)$ denote the number of prevalences $\varphi_j(x^n)$ that are non-zero. Then by definition, we obtain
\begin{align*}
\sum_{j=0}^{N_\varphi(x^n)} \frac{j^{\underline{\alpha}}}{n^{\underline{\alpha}}}
\leq \sum_{j} \frac{j^{\underline{\alpha}}}{n^{\underline{\alpha}}}\cdot \varphi_j(x^n)
=\hat{P}_\alpha^\star(x^n)
\leq \frac{3}{2} P_\alpha(p).
\end{align*}
Using the standard falling-factorial identity $((j+1)^{\underline{1+\alpha}}-j^{\underline{1+\alpha}})/{(1+\alpha)}=j^{\underline{\alpha}}$, we can further simplify the expression on the left-hand side:
\[
\sum_{j=0}^{N_\varphi(x^n)} \frac{j^{\underline{\alpha}}}{n^{\underline{\alpha}}}
=\frac{(N_\varphi(x^n)+1)^{\underline{1+\alpha}}}{(1+\alpha) n^{\underline{\alpha}}}.
\]
This together with the inequality above yields $N_\varphi(x^n)\leq T^n_{\alpha}(p):=(3(1+\alpha)n^\alpha \cdot P_\alpha(p)/2)^{1/(1+\alpha)}$.
 Further note that each prevalence in $\varphi(x^n) = (\varphi_1(x^n),\ldots, \varphi_n(x^n))$ can only take values in $\lceil n\rfloor:=\{0,1,\ldots, n\}$. Therefore, $|\Phi_{\alpha, \varepsilon}^n(p)|$ is at most the number of $T^n_{\alpha}(p)$-sparse vectors over $\lceil n\rfloor^n$, which admits the following upper bound
\[
\binom{n}{T^n_{\alpha}(p)} \Abs{\lceil n\rfloor}^{T^n_{\alpha}(p)}\leq (n+1)^{2T^n_{\alpha}(p)}.
\]
Therefore, for $\delta_p\cdot |\Phi_{\alpha, \varepsilon}^n(p)|$ to be small, it suffices to have
\[
\Omega_\alpha(\varepsilon^{2}n(P_\alpha(p))^{1/\alpha})\gg 2T^n_{\alpha}(p)\log (n+1)=2(3(1+\alpha)n^\alpha \cdot P_\alpha(p)/2)^{1/(1+\alpha)} \log (n+1),
\]
which in turn simplifies to
\[
\varepsilon^{2}n^{1/(1+\alpha)}(P_\alpha(p))^{1/(\alpha(1+\alpha))}\gg \Theta_\alpha(\log n).
\]
Following this and $P_\alpha(p)\geq 4(n^{1/4}(4\log n)^{-1})^{1-\alpha}$, we obtain the following lower bound on $n$. 
\[
n\gg \Theta_\alpha((\varepsilon^{2}|\log{\varepsilon}|)^{-(1+\alpha)} (P_\alpha(p))^{-1/\alpha}).
\]
In this case, the probability bound $\delta_p\cdot |\Phi_{\alpha, \varepsilon}^n(p)|$ is no larger than $1/6$.

Finally, let $C$ denote the collection of sequences $x^n$ with profile $\varphi$ that do not satisfy $|\hat{P}_\alpha^\star(x^n)-P_\alpha(p)|\leq \varepsilon P_\alpha(p)$ or $\max_x|p_\varphi(x)-\mu_x(x^n)/n|\leq {2(\log n)n^{-1/4}}$. By Corollary~\ref{cor:2}, Lemma~\ref{lem:pmlcon}, and the union bound, 
\[
\Pr_{X^n\sim p}(X^n\in C)\leq 2\exp\Paren{-\Omega_\alpha(\varepsilon^{2}n(P_\alpha(p))^{1/\alpha})}+\mathcal{O}\Paren{\frac{1}{n}}.
\]
For $n$ satisfying the lower-bound inequality above, the right-hand side is again no larger than $1/6$.
This completes the proof of the theorem.

\section{Distribution estimation}\label{sec:distproof}
\subsection{Sorted $\boldsymbol{\ell_1}$ distance and Wasserstein duality}\label{sec:Wassproof}
For convenience, we first restate the theorem.
\begin{Theorem}
If $n=\Omega( n(\varepsilon))=\Omega\Paren{{\size}/{(\varepsilon^2\log \size)}}$ and $\varepsilon\geq n^{-c}$, 
\[
\Pr(\ell_1^{\text{\tiny{<}}}(p_\varphi,p)\geq \varepsilon)\leq \exp(-\Omega(n^{1/11})).
\] 
\end{Theorem}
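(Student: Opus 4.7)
The plan is to reduce the sorted-$\ell_1$ claim to a uniform bound over a finite class of additive $1$-Lipschitz properties, to which Theorem~\ref{thm:est} applies. As observed right after the definition of $\ell_1^{\text{\tiny{<}}}$, this distance coincides with the $1$-Wasserstein distance between the counting measures on the probability multisets $\{p\}$ and $\{p_\varphi\}$ (padded with zeros to match sizes), so Kantorovich--Rubinstein duality gives
\[
\ell_1^{\text{\tiny{<}}}(p_\varphi, p) = \sup_{g}\Abs{\sum_x g(p(x)) - \sum_x g(p_\varphi(x))},
\]
where $g:[0,1]\to\mathbb{R}$ ranges over $1$-Lipschitz functions with $g(0)=0$. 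Each witness defines an additive symmetric property $f_g(p):=\sum_x g(p(x))$, and a short check (essentially the one used in \citep{VV11O} to certify that $H$ and $D$ are $R$-Lipschitz) confirms that every such $f_g$ is $O(1)$-Lipschitz on $(\Delta_\cX, R)$, placing it within the hypothesis of Theorem~\ref{thm:est}.

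Next, I would discretize this infinite supremum by a finite net. Since $\{p\}$ and $\{p_\varphi\}$ together have at most $2k$ atoms, it suffices to pin down $g$ at a discrete set of probability values; combined with the $1$-Lipschitz constraint and a value-quantization step, this yields a net $\mathcal{G}$ of admissible witnesses whose log-cardinality is controlled by a polynomial in $k/\varepsilon$. Applying Theorem~\ref{thm:est} to each $f_g\in\mathcal{G}$ separately yields $|f_g(p)-f_g(p_\varphi)|\le (2+o(1))\varepsilon'$ with failure probability $\exp(-4\sqrt{n'})$ at accuracy $\varepsilon'$ and sample size $4n'$; a union bound over $\mathcal{G}$, combined with the $\varepsilon$-cover error, then turns the per-witness guarantee into a uniform one and yields the desired sorted-$\ell_1$ bound. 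The execution mirrors the final argument for Theorem~\ref{thm:est}: the concentration of a suitable linear estimator for each $f_g$ is established via McDiarmid's inequality (Lemma~\ref{lem:sensitivitybound}) and the PML plug-in inherits this concentration through Lemma~\ref{lem:pml}.

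The main obstacle will be calibrating four parameters simultaneously: the net granularity, the sample-size-to-accuracy ratio implicit in Theorem~\ref{thm:est}, the permitted range $\varepsilon\ge n^{-c}$, and the target confidence exponent. Because $\varepsilon$ is allowed to be as small as $n^{-c}$, the net size and the per-witness failure probability $\exp(-\Omega(\sqrt n))$ are in tight competition, and the exponent $n^{1/11}$ of the restated theorem essentially encodes the optimal trade-off between these quantities after consolidating with the extra $\exp(3\sqrt n)$ profile-partition factor in Lemma~\ref{lem:pml}. A secondary subtlety is the behavior of the reduction near probability zero: the rescaling $g(y)\mapsto g(y)/y$ used to view $f_g$ as an $R$-Lipschitz property is singular at $y=0$, so symbols with $p(x)=O(1/n)$ must be handled by a dedicated missing-mass argument (or by padding with the empirical distribution) rather than through the net itself.
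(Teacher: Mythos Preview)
Your high-level plan---Kantorovich--Rubinstein duality, reduction to additive $R$-Lipschitz properties, a finite net, then a union bound over Theorem~\ref{thm:est}---matches the paper's route in spirit, but there is a genuine quantitative gap that would make the execution fail as written.

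The problem is the competition between net size and per-property failure probability. To cover $1$-Lipschitz $g$ on $[0,1]$ to $L^\infty$-accuracy $\delta$ costs $\log|\mathcal{G}|=\Theta((1/\delta)\log(1/\delta))$, and the induced property error is $|f_g(p)-f_{\tilde g}(p)|\le k\delta$, so you need $\delta\lesssim\varepsilon/k$. With $k=\Theta(\varepsilon^2 n\log n)$ and $\varepsilon\ge n^{-c}$ this forces $\log|\mathcal{G}|\gtrsim n^{1-c}$, which swamps the $\exp(-4\sqrt{n})$ failure probability of Theorem~\ref{thm:est} used as a black box. The paper does \emph{not} invoke Theorem~\ref{thm:est} directly; it goes back to the proof in Section~\ref{sec:trueproof} and re-runs it with a free concentration parameter $\gamma$, obtaining failure probability $\exp(-4n^{1-2\gamma})$ per property, with $1-2\gamma$ tunable well above $1/2$. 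That is what makes the union bound survive.

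The second missing ingredient is the truncation. The paper splits each witness $f$ into $f_\eta(z)=f(z)\indic_{z\le\eta}+f(\eta)\indic_{z>\eta}$ and $\bar f_\eta=f-f_\eta$. Only $f_\eta$ is discretized on a grid over $[0,\eta]$, which shrinks the net log-cardinality to $O(s\log s)$ with $s\approx k\eta/\varepsilon$; choosing $\eta\approx n^{-1/4+}$ brings this below $n^{1-2\gamma}$. The complement $\bar f_\eta$---which concerns \emph{large} probabilities, not small ones---is handled not by a missing-mass argument but by the pointwise consistency of PML (Lemma~\ref{lem:pmlcon}): there are at most $2/\eta$ symbols with probability exceeding $\eta$, and for each of them $|p_\varphi(x)-p(x)|\le n^{-1/4+o(1)}$ with high probability. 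Your diagnosis that the difficulty lies at $y\to 0$ is inverted; the $R$-Lipschitz property $|f_g(p)-f_g(q)|\le \ell_1^{\text{\tiny{<}}}(p,q)\le R(p,q)$ holds without any singularity, and it is the large-probability tail that requires the separate treatment.
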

In this section, we relate the estimation of sorted distributions to that of distribution properties through a dual definition of the $1$-Wasserstein distance. 

Recall that we let $\{p\}$ denote the multiset of probability values of 
a distribution $p\in\Delta_\cX$. 
The sorted $\ell_1$ distance between two distributions 
$p,q\in\Delta_\cX$ is
\[
\ell_1^{\text{\tiny{<}}}(p,q):=\min_{q'\in\Delta_\cX: \{q'\}=\{q\}}\norm{p-q'}_1,
\]
which is invariant under domain-symbol permutations on either $p$ or $q$.

For two distributions $\omega, \nu$ over the unit interval $[0,1]$, 
let $\Gamma'_{\omega, \nu}$ be the collection of distributions over
$[0,1]\times[0,1]$ with marginals $\omega$ and $\nu$ on the first and second factors respectively. 
The \emph{$1$-Wasserstein distance}, also known as the \emph{earth-mover distance}, between $\omega$ and
$\nu$ is
\[
\mathcal{W}_1(\omega, \nu)
:=
\inf_{\gamma\in\Gamma'_{\omega, \nu}}\;\Exp_{(X,Y)\sim\gamma}
\Abs{X-Y}.
\]
Equivalently, let $\mathcal{L}_1$ denote the collection of real functions that are $1$-Lipschitz on $[0,1]$. 
Through duality, one can also define the $1$-Wasserstein distance~\cite{duality} as
\[
\mathcal{W}_1(\omega, \nu)= \sup_{f\in\mathcal{L}_1} \Paren{\Exp_{X\sim \omega} f(X)-\Exp_{Y\sim \nu} f(Y)}.
\]
For any $p\in\Delta_\cX$, let $u_{\{p\}}$ denote the distribution induced by the uniform measure on $\{p\}$. 
For any distributions $p, q\in\Delta_\cX$, one can verify~\cite{instdist, remd, jnew} that
\[
\ell_1^{\text{\tiny{<}}}(p,q) =\size\cdot \mathcal{W}_1(u_{\{p\}}, u_{\{q\}})\leq R(p,q). 
\]
Combining this with the dual definition of $\mathcal{W}_1$, we obtain
\[
\ell_1^{\text{\tiny{<}}}(p,q) = \size\cdot \sup_{f\in\mathcal{L}_1} \Paren{\Exp_{X\sim u_{\{p\}}} f(X)-\Exp_{Y\sim u_{\{q\}}} f(Y)}=\sup_{f\in\mathcal{L}_1} \Paren{\sum_x f(p(x))-\sum_x f(q(x))}.
\]

\subsection{Proof of Theorem~\ref{thm:dist}}\label{sec:hardtrun}
For a real function $f\in \mathcal{L}_1$, we denote by $f(p):=\sum_x f(p(x))$  the corresponding additive symmetric property. 
The previous reasoning also shows that for any $p, q\in\Delta_\cX$, 
\[
R(p,q)\geq\ell_1^{\text{\tiny{<}}}(p,q)\geq |f(p)-f(q)|.
\]
Therefore, property $f$ is {$1$-Lipschitz} on $(\Delta_\cX, R)$. 

Set $n:=\sup_{f\in\mathcal{L}_1} n_f(\varepsilon)$. The results in~\cite{jnew} imply that if $\varepsilon> n^{-0.3}$, 
\[
n=\Theta\Paren{\frac{\size}{\varepsilon^2\log \size}}.
\]
Clearly, we only need to consider $\varepsilon\le 2$, implying $k=\mathcal{O}(n\log n)$. Let~$\alpha, \gamma$ be absolute constants in $[1/100,1/6)$ and $\varepsilon>0$ be an error parameter. 

By the proof of Theorem~\ref{thm:est} in Section~\ref{sec:trueproof}, for any distribution $p\in\Delta_\cX$ and $X^{n/\alpha}\sim p$, with probability at least $1-2\exp\Paren{-4n^{1-2\gamma}}$, the PML (or APML) plug-in estimator will satisfy
\[
|f(p)-f(p_{\varphi(X^{n/\alpha})})|< \varepsilon \Paren{2+o(1)}+\mathcal{O}(n^{-c_1/2}\log^2 n)+4 n^{1-\gamma} \tau(n),
\]
where $c_1\in(1/100,1/32]$, $c_2=1/2+6c_1$, and $\tau(n)=\mathcal{O}\Paren{n^{\alpha c_2+ (2-\alpha) c_1-1}\log^3 n}$.
Additionally, in the previous section, we have proved that
\[
\ell_1^{\text{\tiny{<}}}(p,q) = \sup_{f\in\mathcal{L}_1} \Paren{f(p)- f(q)}=\sup_{f\in\mathcal{L}_1} |f(p)- f(q)|.
\]
Though it seems that the above inequality and equation imply the optimality of PML (since $f$ is chosen arbitrarily), such direct  implication actually does not hold. The reason is a little bit subtle: The inequality on $|f(p)- f(p_{\varphi(X^{n/\alpha})})|$ holds for any fixed function $f$ and $p\in\Delta_\cX$, while the function that achieves the corresponding supremum in
\[
 \sup_{f\in\mathcal{L}_1} \Abs{f(p)- f(p_{\varphi(X^{n/\alpha})})}=\ell_1^{\text{\tiny{<}}}\Paren{p,p_{\varphi(X^{n/\alpha})}}
\]
depends on both $p$ and $X^{n/\alpha}$, and hence is a random function. To address this discrepancy, we provide a more involved argument below. 

Let $f$ be a function in $\mathcal{L}_1$. Without loss of generality, we also assume that $f(0)=0$.  Let $\eta\in(0,1)$ be a threshold parameter to be determined later. An \emph{$\eta$-truncation} of $f$ is a function
\[
f_\eta(z):= f(z)\indic_{z\leq \eta}+f(\eta) \indic_{z> \eta}. 
\]
One can easily verify that $f_\eta\in\mathcal{L}_1$. 
Next, we find a finite subset of $\mathcal{L}_1$ so that the $\eta$-truncation of any $f\in\mathcal{L}_1$ is close to at least one of the functions in this subset. 

For a parameter $s>3$ to be chosen later. Partition the interval $[0,\eta]$ into $s$ disjoint sub-intervals of equal length, and define the sequence of end points as $z_j:=\eta\cdot j/s, j\in \lceil s\rfloor$ where $\lceil s\rfloor:=\{0,1,\ldots, s\}$. Then, for each $j\in\lceil s\rfloor$, 
we find the integer $j'$ such that $|f_\eta(z_j)-z_{j'}|$ is minimized and denote it by $j^*$. \vspace{-0.1em} Since $f_\eta$ is 1-Lipschitz, we must have $|j^*|\in\lceil j\rfloor$. Finally, we connect the points $Z_j:=(z_j, z_{j^*})$ sequentially. This curve is continuous and corresponds to a particular $\eta$-truncation $\tilde{f}_\eta\in \mathcal{L}_1$, which we refer to as the \emph{discretized $\eta$-truncation} of $f$. Intuitively, we have constructed an $(s+1)\times(s+1)$ grid and ``discretized'' function $f$ by finding its closest approximation in $\mathcal{L}_1$ whose curve only consists of edges and diagonals of the grid cells. By construction, 
\[
\max_{z\in[0,1]}|f_\eta(z)-\tilde{f}_\eta(z)|\leq{\eta}/{s}.
\]\par\vspace{-1em}
Therefore, for any $p\in\Delta_\cX$, the corresponding properties of $f_\eta$ and $\tilde{f}_\eta$ satisfy
\[
|f_\eta(p)-\tilde{f}_\eta(p)|\leq k\cdot{\eta}/{s}.
\]
Note that $|j^*|\in\lceil j\rfloor$ for all $j\in\lceil s\rfloor$, and $\tilde{f}_\eta(z)=z_{s^*}$ for $z\geq \eta$. While there are infinitely many $\eta$-truncations, the cardinality of the discretized $\eta$-truncations of functions in $\mathcal{L}_1$ is at most 
\[
\prod_{j=0}^{s}(2j+1)=(s+1) \prod_{j=0}^{s-1}(2j+1)(2s-2j+1)\leq {(s+1)}^{2s+1}=e^{(2s+1)\log (s+1)}\leq e^{3s\log s}.
\]
Consider any $p\in\Delta_\cX$ and $X^{n/\alpha}\sim p$ with a profile $\varphi$. Consolidate the previous results, and apply the union bound and triangle inequality. With probability at least $1-2\exp\Paren{3s\log s-4n^{1-2\gamma}}$, the PML plug-in estimator will satisfy
\begin{align*}
|f_\eta(p)-f_\eta(p_{\varphi})|
&\leq  |f_\eta(p)-\tilde{f}_\eta(p)|+|\tilde{f}_\eta(p)-\tilde{f}_\eta(p_{\varphi})|+ |\tilde{f}_\eta(p_{\varphi})-f_\eta(p_{\varphi})|\\
&\leq 2k\cdot{\eta}/{s}+\varepsilon \Paren{2+o(1)}+\mathcal{O}(n^{-c_1/2}\log^2 n)+4 n^{1-\gamma} \tau(n),
\end{align*}
for \emph{all} functions $f$ in $\mathcal{L}_1$.

Next we consider the ``second part'' of a function $f\in\mathcal{L}_1$, namely,
\[
\bar{f}_{\eta}(z):=f(z)-f_{\eta}(z)=(f(z)-f(\eta)) \indic_{z> \eta}.
\]
Again, we can verify that $\bar{f}_{\gamma}\in \mathcal{L}_1$. To establish the corresponding guarantees, we make use of the following result. Since the profile probability $p(\varphi)$ is invariant to symbol permutation, for our purpose, we can assume that $p(y)\leq p(z)$ iff $p_{\varphi}(x)\leq p_{\varphi}(y)$, for all $x, y\in \cX$. Under this assumption, the following lemma, which follows from the consistency results in~\citep{O11,A17}, relates~$p_{\varphi}$~to~$p$. Let~$\gamma'\in(0,1/4)$ be an absolute constant to be determined later. Then, 
\begin{Lemma}\label{lem:pmlcon}
For any distribution $p$ and sample $X^m\sim p$ with profile $\varphi$,
\[
\Pr\Paren{\max_x|p_{\varphi}(x)-p(x)|>m^{\gamma'-1/4}}= \mathcal{O}\Paren{m^{1/4}\exp(-\Omega(m^{1/2+2\gamma'}))}.
\]
\end{Lemma}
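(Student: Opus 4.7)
The plan is to prove the lemma by combining the PML competitive property (in the spirit of Lemma~\ref{lem:pml}) with a tight per-coordinate concentration bound for the sorted empirical distribution, then closing off with the Hardy--Ramanujan partition-counting bound on the number of distinct profiles of length $m$.

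Setting $\eta := m^{\gamma'-1/4}$, I would first exhibit a profile-based distribution estimator $\hat q : \cX^* \to \Delta_\cX$ -- most naturally a mild denoising of the sorted empirical distribution $p_\mu(X^m)$ -- satisfying, under the matched sorted pairing with $p$,
\[
\Pr\Paren{\max_x |\hat q(X^m)(x) - p(x)| > \eta/2} \leq m^{1/4}\exp\Paren{-\Omega(m^{1/2+2\gamma'})}.
\]
This would follow from Bernstein's inequality applied to each binomial multiplicity $\mu_x \sim \Bin(m, p(x))$, split into two regimes. When $p(x) \leq \eta$ the small-mean tail gives a deviation probability $\exp(-\Omega(m\eta)) = \exp(-\Omega(m^{3/4+\gamma'}))$, and when $p(x) > \eta$ the variance-dominated tail gives $\exp(-\Omega(m\eta^2/p(x))) \leq \exp(-\Omega(m^{1/2+2\gamma'}))$. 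A union bound over the relevant distinct sorted ranks absorbs the $m^{1/4}$ prefactor, using the fact that sorting tightens the coupling so that only $\mathcal{O}(m^{1/4})$ ranks contribute nontrivially above the noise floor~$\eta$.

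I would then transfer this concentration to $p_\varphi$ through the PML competitive property. Since $p_\varphi(\varphi)\geq p(\varphi)$ by definition of the PML estimator, and since the number of profiles of size $m$ is at most $\exp(3\sqrt m)$ by Hardy--Ramanujan, any profile on which $\hat q$ is accurate can be ``purchased'' for $p_\varphi$ at a multiplicative loss of $\exp(3\sqrt m)$ in the tail probability, with $\eta/2$ doubling to $\eta$ by the triangle inequality applied under the matched sorted pairing. Choosing $\gamma' \in (0, 1/4)$ ensures that $m^{1/2+2\gamma'}$ strictly dominates $3\sqrt m$, so the final tail bound retains the form $\mathcal{O}(m^{1/4}\exp(-\Omega(m^{1/2+2\gamma'})))$, exactly matching the statement.

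The main obstacle is the first step: establishing the per-symbol concentration with the precise exponent $m^{1/2+2\gamma'}$ simultaneously for all symbols. This requires the two-regime Bernstein analysis above together with a careful coupling between the sorted orderings of $p$, $p_\mu$, and $p_\varphi$, particularly for symbols whose probabilities lie close to the threshold $\eta$ where the orderings may swap. A secondary but delicate technicality is verifying that the denoising/tie-breaking used to define $\hat q$ preserves its profile-based character, which is essential for the PML competitive step to apply; the cited consistency results in \citep{O11,A17} supply the needed framework.
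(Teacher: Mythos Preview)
Your approach is sound and in fact follows the paper's own paradigm: exhibit a profile-based surrogate with strong concentration, then transfer to PML via the competitiveness argument, paying the Hardy--Ramanujan $\exp(3\sqrt m)$ factor. The paper itself supplies no independent proof for this lemma---it simply states that the result ``follows from the consistency results in~\citep{O11,A17}''---so your proposal is at least as explicit as what the paper provides.

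Two refinements are worth noting. First, no denoising is needed: the sorted empirical distribution $p_\mu^{<}$ is already profile-based, since the profile determines the multiset $\{\mu_x\}$, so you may take $\hat q = p_\mu^{<}$ directly. The extension of Lemma~\ref{lem:pml} from scalar properties to distribution estimators under the sorted $\ell_\infty$ metric is immediate---the argument only uses that $p_\varphi(\varphi)\geq p(\varphi)$ together with a triangle inequality, both of which hold here. Second, your union-bound justification is slightly imprecise. Saying ``only $\mathcal{O}(m^{1/4})$ sorted ranks contribute'' captures the right intuition but does not by itself control the possibly unbounded number of small-probability symbols. A clean route is to bound the \emph{unsorted} deviation $\max_x|\mu_x/m - p(x)|$ and then use that sorting is $1$-Lipschitz in $\ell_\infty$. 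For symbols with $p(x)\geq\eta/4$ (at most $4/\eta=\mathcal{O}(m^{1/4-\gamma'})$ of them) Bernstein gives the exponent $\Omega(m\eta^2)=\Omega(m^{1/2+2\gamma'})$ you stated. For symbols with $0<p(x)<\eta/4$, the multiplicative Chernoff form $\Pr(\mu_x\geq m\eta)\leq (ep(x)/\eta)^{m\eta}$ combined with $\sum_x p(x)=1$ lets you sum over \emph{all} such symbols to obtain a contribution of order $(1/\eta)(e/4)^{m\eta}=\mathcal{O}(m^{1/4}\exp(-\Omega(m^{3/4+\gamma'})))$, which is subdominant. This removes any dependence on the alphabet size and completes the first step cleanly.
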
\vspace{-0.75em}
Simply follow the proofs in~\citep{O11,A17}, we obtain: Changing $1/4$ to any (fixed) number greater than $1/6$, the above lemma also holds for APML with $m^{1/2+2\gamma'}$ replaced by $m^{2/3+2\gamma'}$.

Set $m=n/\alpha$ in this lemma. With probability at least $1-\mathcal{O}\Paren{(n/\alpha)^{1/4}\exp(-\Omega((n/\alpha)^{1/2+2\gamma'}))}$,
\begin{align*}
|\bar{f}_\eta(p)-\bar{f}_\eta(p_{\varphi})|
&=|\sum_{x}\bar{f}_\eta(p(x))-\bar{f}_\eta(p_{\varphi}(x))|\\
&\leq \sum_{x: p(x)>\eta \text{ or } p_{\varphi}(x)>\eta}|\bar{f}_\eta(p(x))-\bar{f}_\eta(p_{\varphi}(x))|\\
&\leq \sum_{x: p(x)>\eta \text{ or } p_{\varphi}(x)>\eta}|p(x)-p_{\varphi}(x)|\\
&\leq ({2}/{\eta}) (n/\alpha)^{\gamma'-1/4},
\end{align*}
for \emph{all} functions $f$ in $\mathcal{L}_1$.

Consolidate the previous results. By the triangle inequality and the union bound, with probability at least $1-2\exp\Paren{3s\log s-4n^{1-2\gamma}}-\mathcal{O}\Paren{(n/\alpha)^{1/4}\exp(-\Omega((n/\alpha)^{1/2+2\gamma'}))}$,
\begin{align*}
|f(p)-f(p_{\varphi})|
&\leq |f_\eta(p)-f_\eta(p_{\varphi})|+|\bar{f}_\eta(p)-\bar{f}_\eta(p_{\varphi})|\\
&\leq 2k{\eta}/{s}+\varepsilon \Paren{2+o(1)}+\mathcal{O}(n^{-c_1/2}\log^2 n)+4 n^{1-\gamma} \tau(n)+({2}/{\eta}) (n/\alpha)^{\gamma'-1/4},
\end{align*}
for \emph{all} functions $f$ in $\mathcal{L}_1$. Now we can conclude that $\ell_1^{\text{\tiny{<}}}\Paren{p,p_{\varphi}}$ is also at most the error bound on the right-hand side. The reason is straightforward: Since with high probability, the above guarantee holds for all functions in $\mathcal{L}_1$, it must also hold for the function that achieves the supremum in
\[
 \sup_{f\in\mathcal{L}_1} \Abs{f(p)- f(p_{\varphi})}=\ell_1^{\text{\tiny{<}}}\Paren{p,p_{\varphi}}.
\]
It remains to make sure that all the quantities in the error bound except $\varepsilon \Paren{2+o(1)}$ vanish with $n$, and the probability bound converges to $1$ as $n$ increases. Recall that $k=\mathcal{O}(n\log n)$, $c_1\in(1/100,1/25]$, $c_2=1/2+6c_1$, and $\tau(n)=\mathcal{O}\Paren{n^{\alpha c_2+ (2-\alpha) c_1-1}\log^3 n}$.

By direct computation, we can choose $\alpha=1/100$, $c_1=1/26$, $\gamma'=1/200$, $\gamma=(5/2+5\alpha)c_1+\alpha/2$, 
$s=n^{\gamma'+3/4+c_1}$, and $\eta=n^{\gamma'-1/4+c_1/2}$. Note that this is just one possible set of parameters. 
Given this choice, we have
\[
\ell_1^{\text{\tiny{<}}}\Paren{p,p_{\varphi}}\leq \varepsilon \Paren{2+o(1)} +\mathcal{O}(n^{-c_1/2}\log^3 n),
\]
with probability at least $1-\exp(-\Omega(n^{1/2}))$. Additionally, the equation 
\[
\sup_{f\in\mathcal{L}_1} \Abs{f(p)- f(p_{\varphi})}=\ell_1^{\text{\tiny{<}}}\Paren{p,p_{\varphi}}
\]
clearly yields that $n(\varepsilon)\geq \sup_{f\in\mathcal{L}_1} n_f(\varepsilon)$. Hence for $\varepsilon\geq \mathcal{O}(n^{-c_1/2}\log^4 n)$, 
\[
n(p_\varphi, (2+o(1))\varepsilon)\leq 100n(\varepsilon).
\]
\section{Truncated PML}\label{sec:TPML}

The idea appearing in the last section also applies to other tasks. One of the extensions is to compute a \emph{truncated/partial PML} and use the corresponding plug-in estimator to approximate certain properties. 

Recall that the \emph{profile} of a sequence $x^n$ is $\varphi(x^n)=(\varphi_1(x^n), \ldots, \varphi_n(x^n))$,
the vector of all the positive prevalences. 
We naturally define the \emph{$t$-truncated profile} of $x^n$ as 
\[
\varphi^t(x^n):=(\varphi_1(x^n), \ldots, \varphi_t(x^n)),
\]
the profile vector truncated at location $t$. 
Analogous to the definition of profile probability, for a distribution $p$, we define the \emph{probability of a truncated profile $\varphi^t$} as 
\[
p(\varphi^t):=\sum_{y^n: \varphi^t(y^n)=\varphi^t}p(y^n),
\]
the probability of observing a size-$n$ sample from $p$ with truncated profile $\varphi^t$.
For a set $\cP\subseteq \Delta_\cX$, the \emph{truncated profile maximum likelihood (TPML)} estimator over $\cP$ maps each $\varphi^t$ to a distribution 
\[
p_{\varphi^t}:=\arg\max_{p\in \cP} p(\varphi^t)
\]
that maximizes the truncated profile probability. In the subsequent discussion, we will assume that $\cP=\Delta_\cX$ unless otherwise specified. The following lemma states that the TPML plug-in estimator is competitive to other truncated-profile-based estimators. 
\begin{Lemma}\label{lem:tpmlcomp}
Let $f$ be a symmetric distribution property. If for samples of size $n$, there exists an estimator $\hat{f}$ 
over $t$-truncated profiles such that for any $p\in \cP$ and $\varphi^t\sim p$,
\[
\Pr(|f(p)-\hat{f}(\varphi^t)|>\varepsilon)<\delta,
\]
then
\[
\Pr(|f(p)-f(p_{\varphi^t})|>2\varepsilon)<\delta\cdot e n^t.
\]
\end{Lemma}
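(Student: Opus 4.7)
The plan is to mimic the proof of the standard PML competitiveness lemma (Lemma~\ref{lem:pml}), replacing the Hardy--Ramanujan bound on the number of profiles of a sample of size $n$ by a direct counting bound on the number of $t$-truncated profiles, which turns out to be at most $(n+1)^t \le en^t$ for $t\le n$. This is exactly where the factor $en^t$ enters the statement, in place of the $\exp(3\sqrt n)$ factor of the untruncated version.

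For any $q\in\cP$ let $B_q:=\Brace{\varphi^t : |f(q)-\hat f(\varphi^t)|>\varepsilon}$ denote the set of ``bad'' $t$-truncated profiles for $q$. By the hypothesis of the lemma, $\Pr_{X^n\sim q}(\varphi^t(X^n)\in B_q) = q(B_q)<\delta$ for every $q\in\cP$. I would then consider any $\varphi^t$ with $|f(p)-f(p_{\varphi^t})|>2\varepsilon$. The triangle inequality applied to $\hat f(\varphi^t)$ forces either $\varphi^t\in B_p$ or $\varphi^t\in B_{p_{\varphi^t}}$, since otherwise both $|f(p)-\hat f(\varphi^t)|\le\varepsilon$ and $|f(p_{\varphi^t})-\hat f(\varphi^t)|\le\varepsilon$, contradicting the assumed gap.

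The contribution of the first case is immediate: $p(B_p)<\delta$. For the second case I would invoke the defining property of the TPML, namely $p_{\varphi^t}(\varphi^t)\ge p(\varphi^t)$. If $\varphi^t\in B_{p_{\varphi^t}}$, then in particular
\[
p(\varphi^t)\le p_{\varphi^t}(\varphi^t)\le \sum_{\psi\in B_{p_{\varphi^t}}}\!\! p_{\varphi^t}(\psi)=p_{\varphi^t}(B_{p_{\varphi^t}})<\delta,
\]
so the $p$-probability of each such $\varphi^t$ is at most $\delta$. To finish, I would sum over all such truncated profiles and use the counting bound: every $t$-truncated profile $\varphi^t=(\varphi_1,\ldots,\varphi_t)$ arising from a sample of size $n$ satisfies $\varphi_i\in\{0,1,\ldots,n\}$, so there are at most $(n+1)^t\le en^t$ such vectors whenever $t\le n$. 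Combining the two cases through the union bound gives the claimed $\delta\cdot en^t$ upper bound.

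The only real subtlety is making the above case analysis airtight (the set $B_{p_{\varphi^t}}$ depends on $\varphi^t$, so one has to be careful that the bound $p(\varphi^t)<\delta$ is being applied pointwise and not summed incorrectly) and choosing the right combinatorial bound on the number of $t$-truncated profiles; neither step looks to pose a genuine obstacle, so the proof should be quite short and essentially parallel to the classical PML argument.
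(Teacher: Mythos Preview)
Your proposal is correct and is exactly the argument the paper has in mind (it only cites Theorem~3 of~\citep{mmcover} and the counting bound $(n+1)^t\le en^t$). One small bookkeeping point: as you wrote it, bounding Case~1 globally by $p(B_p)<\delta$ and Case~2 by a pointwise-then-sum argument yields $\delta+\delta\cdot en^t$ via the union bound, not $\delta\cdot en^t$. The fix is the one you already anticipated: in Case~1, $\varphi^t\in B_p$ also gives the pointwise bound $p(\varphi^t)\le p(B_p)<\delta$, so in \emph{both} cases every bad $\varphi^t$ satisfies $p(\varphi^t)<\delta$, and summing over at most $(n+1)^t\le en^t$ truncated profiles gives exactly $\delta\cdot en^t$.
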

The proof essentially follows from Theorem 3 in~\citep{mmcover}. Note that the term $e n^t$ in the upper bound is sub-optimal for large $t$ values. For $t=\Omega(\sqrt{n}/\log n)$, one should replace $e n^t$ by $\exp(\sqrt{3n})$.

\subsection{TPML and Shannon-entropy estimation}
Below we consider Shannon entropy estimation using the TPML estimator.

Letting $h(z):=-z\log z$, the Shannon entropy of a distribution $p$ is
\[
H(p):=\sum_{x} h(p(x)).\vspace{-0.4em}
\]
Following the derivations in Section~\ref{sec:hardtrun}, we partition $H(p)$ into two parts: One part corresponds to the partial entropy of small probabilities, and the other corresponds to that of large ones. 

For simplicity of consecutive arguments, we assume that $n$ is an even integer. Let $c_1,c_2, c_3$, and $c_4$ be positive absolute constants to be determined later. 

Since $p$ is unknown, we perform a ``soft truncation'' (instead of the ``hard truncation'' performed in Section~\ref{sec:hardtrun}) and partition $H(p)$ into
\[
H^{s}(p):=\sum_{x} h(p(x)) \cdot \Pr_{Z^{n/2}\sim p}\Paren{\mu_x(Z^{\frac{n}{2}})\leq c_1\log \frac{n}{2}}
\]
and 
\[
H^{\ell}(p):=H(p)-H^{s}(p).
\]
To estimate $H^{s}(p)$, we make use of an estimator similar to that in~\citep{mmentro}. Let $d:=c_2\log n$ be a degree parameter. Let $g(z):=\sum_{i=0}^{d}a_i\cdot z^{i}$ denote the degree-$d$ min-max polynomial approximation of $h(z)$ over $I_n:=[0, c_3(\log n)/n]$.
For a sample $X^n$ from $p$, denote by $X^{n/2}$ and $X_{n/2}^{n}$ its first and second halves. Denote by $A^{\underline{B}}$ the order-$B$ falling factorial of $A$.  
Consider the following estimator.
\[
\hat{H}^{s}(X^n):=\sum_{x} \Paren{\sum_{i=0}^{d}a_i\cdot \frac{\mu_x(X^{n/2})^{\underline{i}}}{(n/2)^{\underline{i}}}}  
\indic_{\mu_x(X^{\frac n2})\leq c_4\log \frac n2}\cdot \indic_{\mu_x(X_{n/2}^{n})\leq c_1\log \frac{n}{2}}.
\]

Choose $c_4\gg c_3\gg c_1$ and $1\gg c_2$. Following the derivations in~\cite{mmentro}, and utilizing the Chernoff bound
and $\max_{z\in I_n}|g(z)-h(z)|=\mathcal{O}\Paren{{1}/{(n\log n)}}$, we bound the bias of $\hat{H}^{s}$ by $\mathcal{O}(k/(n\log n))$.
Furthermore, since $|a_i|=\mathcal{O}(2^{3d}(n/\log n)^{i-1})$, for any absolute constant $\lambda\in(0,1/2)$, we can choose a sufficiently small $c_2$ so that the $n$-sensitivity of $\hat{H}^{s}$ is at most $\mathcal{O}(n^\lambda/n)$.  
.

Estimator $\hat{H}^{s}$ is not a profile-based estimator as the sample partitioning creates asymmetry. 
Therefore Lemma~\ref{lem:tpmlcomp} does not directly apply here.  To close this gap, we present two different approaches:
 one is to modify the definition of TPML and redefine it as the probability-maximizing distribution for sequence partitions, 
 the other is to modify the estimator so that it is profile-based without changing the estimator's bias and sensitivity too much. 
 Below we present the first approach.

For any sequence pair $(x^m,y^m)$, define the prevalence $\mu_{i,j}(x^m, y^m)$ of an integer pair $(i,j)$ as the number of symbols $x$ satisfying both $\mu_x(x^m)=i$ and $\mu_x(y^m)=j$. 
We re-define the $t$-truncated profile of $(x^m, y^m)$ as the $t\times t$ matrix 
\[
\varphi^t(x^m, y^m):=[\mu_{i,j}(x^m, y^m)]_{i,j\in[t]}. 
\]
In the same way we define the TPML estimator and derive a result similar to Lemma~\ref{lem:tpmlcomp}. 
\begin{Lemma}\label{lem:tpmlcomp1}
Let $f$ be a symmetric distribution property. If for samples of size $2m$, there exists an estimator $\hat{f}$ 
over $t$-truncated profiles such that for any $p\in \Delta_\cX$ and $\varphi^t\sim p$,
\[
\Pr(|f(p)-\hat{f}(\varphi^t)|>\varepsilon)<\delta,
\]
then
\[
\Pr(|f(p)-f(p_{\varphi^t})|>2\varepsilon)<\delta\cdot (m+1)^{2t^2}.
\]
\end{Lemma}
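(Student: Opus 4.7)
The argument will mirror the proof of Lemma~\ref{lem:tpmlcomp} (and Theorem~3 of~\citep{mmcover}), with the only new ingredient being a cardinality bound on the space of matrix-valued $t$-truncated profiles. Fix $p\in\Delta_\cX$ and let $B:=\{\psi^t:|f(p)-f(p_{\psi^t})|>2\varepsilon\}$ denote the set of truncated profiles on which the TPML plug-in estimator errs by more than $2\varepsilon$. The aim is to upper bound $\sum_{\psi^t\in B}p(\psi^t)$ by $\delta\cdot(m+1)^{2t^2}$.

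First, I would apply the triangle inequality: for each $\psi^t\in B$, at least one of $|f(p)-\hat{f}(\psi^t)|>\varepsilon$ or $|f(p_{\psi^t})-\hat{f}(\psi^t)|>\varepsilon$ must hold, so $B=B_1\cup B_2$ decomposes accordingly. The hypothesis of the lemma applied to $p$ itself immediately yields $\sum_{\psi^t\in B_1}p(\psi^t)<\delta$. For the second part, I would instead apply the hypothesis to $p_{\psi^t}$: since $\{\psi^t\}$ lies in the failure event under $p_{\psi^t}$, its individual mass satisfies $p_{\psi^t}(\psi^t)<\delta$. Combined with the defining property of TPML, $p_{\psi^t}(\psi^t)\ge p(\psi^t)$, this gives $p(\psi^t)<\delta$ for every single $\psi^t\in B_2$.

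It then remains to bound $|B_2|$ by the total number $|\mathcal{F}|$ of realizable $t$-truncated profile matrices $[\mu_{i,j}(x^m,y^m)]_{i,j\in[t]}$. Each entry is a non-negative integer that counts distinct alphabet symbols sharing a given pair of multiplicities, and in particular is at most $m+1$. Since the matrix has $t^2$ entries, $|\mathcal{F}|\le(m+1)^{t^2}\le(m+1)^{2t^2}$. Summing $p(\psi^t)<\delta$ over $B_2$ gives $\sum_{\psi^t\in B_2}p(\psi^t)<\delta\cdot(m+1)^{2t^2}$, and adding the contribution from $B_1$ (which is absorbed into the same product bound) completes the estimate.

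The main obstacle is not the inequality chain itself, which is a near-verbatim adaptation of the PML competitiveness argument, but rather fixing conventions so that everything is internally consistent: the probability $p(\psi^t)$ must be defined via i.i.d.\ sampling of $2m$ points partitioned canonically into two halves of length $m$, and the TPML optimum $p_{\psi^t}=\arg\max_{q}q(\psi^t)$ must be taken with respect to this same scheme. Once these conventions are pinned down, the key inequality $p_{\psi^t}(\psi^t)\ge p(\psi^t)$ is immediate from the maximization, and no further properties of $f$ or the competing estimator $\hat{f}$ are needed. Notably, the extra factor of $t$ in the exponent (compared with the naive $(m+1)^{t^2}$) is harmless slack reflecting the fact that in the intended applications (such as the Shannon-entropy estimator of the previous subsection) $t=\mathcal{O}(\log n)$, so $(m+1)^{2t^2}$ is still a subexponential blow-up relative to $\delta$.
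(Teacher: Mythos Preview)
Your proposal is correct and follows essentially the same route as the paper, which simply states that Lemma~\ref{lem:tpmlcomp1} is derived ``in the same way'' as Lemma~\ref{lem:tpmlcomp} (itself following Theorem~3 of~\citep{mmcover}); the only new ingredient is indeed the cardinality bound on $t\times t$ matrix profiles with entries in $\{0,\ldots,m\}$, and your observation that $(m+1)^{t^2}$ already suffices and the stated $(m+1)^{2t^2}$ carries harmless slack is accurate.
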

This creates a new version of TPML but does not change the nature of the approach. 
Later in this section, we provide an \emph{alternative argument} employing the original TPML.

Due to the two indicator functions in the definition of $\hat{H}^{s}$, 
we can view $\hat{H}^{s}(X^n)$ as an estimator over $(c_4\log (n/2))$-truncated profiles. 
Then for any $\tau\geq 0$, together with the $n$-sensitivity 
bound $\mathcal{O}(n^\lambda/n)$ for $\hat{H}^{s}$, Lemma~\ref{lem:sensitivitybound} yields that
\[
 \Pr\Paren{\Abs{\hat{H}^{s}(X^n)-\EE[\hat{H}^{s}(X^n)]}\geq \tau}\leq 2\exp(-2\tau^2\cdot \Omega(n^{1-2\lambda})).
\]
The triangle inequality combines this with the previous bias bound, 
\[
 \Pr\Paren{|\hat{H}^{s}(X^n)-H^s(p)|\geq \tau+\mathcal{O}\Paren{\frac{k}{n\log n}}}\leq 2\exp(-2\tau^2\cdot \Omega(n^{1-2\lambda})).
\]
Applying Lemma~\ref{lem:tpmlcomp1} to $\hat{H}^{s}$ with $t=c_4\log (n/2)$ further implies that 
\[
\Pr\Paren{|H^{s}(p)-H^{s}(p_{\varphi^t})|\ge 2\tau+\mathcal{O}\Paren{\frac{k}{n\log n}}}\le 2\exp(-2\tau^2\cdot \Omega(n^{1-2\lambda}))\cdot (n/2+1)^{2t^2}.
\]
The right-hand side vanishes as fast as $2\exp(-\log^3 n)$ for $\tau=\Omega((\log n)^{1.5} /n^{1/2-\lambda})$. 

It remains to estimate the partial entropy of the large probabilities:
\[
H^{\ell}(p)=H(p)-H^{s}(p)=\sum_{x} h(p(x)) \cdot \Exp_{X^{n/2}\sim p}\left[\indic_{\mu_x(Y^\frac{n}{2})> c_1\log \frac{n}{2}}\right].
\]
We can estimate $H^{\ell}(p)$ by a simple variation of the Miller-Mallow estimator~\citep{emiller}:
\[
\hat{H}^{\ell}(X^n):=\sum_{x} \Paren{h\Paren{\frac{2\mu_x(X^{\frac n2})}{n}}+\frac{1}{n}} \cdot \Paren{1-\indic_{\mu_x(X^{\frac n2})\leq c_4\log \frac n2}\cdot \indic_{\mu_x(X_{n/2}^{n})\leq c_1\log \frac n2}}.
\]
For $c_4\gg c_1\gg 1$, derivations in~\citep{mmentro} bound the estimator's bias as
\[
\Abs{\EE[\hat{H}^{\ell}(X^n)]-H^{\ell}(p)|}\le \mathcal{O}\Paren{\frac{k}{n\log n}}.
\]
The $n$-sensitivity of $\hat{H}^{\ell}$ is $\mathcal{O}((\log^2 n)/n)$. 
The same rationale as the previous argument yields
\[
 \Pr\Paren{|\hat{H}^{\ell}(X^n)-H^\ell(p)|\geq \mathcal{O}\Paren{\frac{\log^3 n}{\sqrt{n}}+\frac{k}{n\log n}}}\leq 2\exp(-\log^2 n).
\]
Shown in~\cite{mmentro}, for $k=\tilde{\Omega}(n^{1/2})$, the sample complexity of estimating $H$ is $n_H(\varepsilon)=\Theta(k/(\varepsilon\log k))$. 
Under this condition, the following theorem summarizes our results.
\setcounter{Theorem}{6} 
\begin{Theorem}\label{thm:entro}
Entropy estimator $H^{s}(p_{\varphi_t})+\hat{H}^{\ell}$
is sample-optimal for $\varepsilon= \tilde{\Omega}(1/n^{1/2-\lambda})$.
\end{Theorem}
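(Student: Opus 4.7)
The plan is a triangle-inequality combination of the two concentration results established in the paragraphs immediately preceding the theorem, followed by a parameter check against the known $n_H(\varepsilon)=\Theta(k/(\varepsilon\log k))$. Starting from
\[
\left|H(p) - H^s(p_{\varphi^t}) - \hat{H}^{\ell}(X^n)\right| \leq \left|H^s(p) - H^s(p_{\varphi^t})\right| + \left|H^{\ell}(p) - \hat{H}^{\ell}(X^n)\right|,
\]
I would invoke the TPML-competitiveness bound on the first term (an application of Lemma~\ref{lem:tpmlcomp1} with $t = c_4\log(n/2)$), which holds with probability at least $1 - 2\exp(-\log^3 n)$ once $\tau=\Omega((\log n)^{1.5}/n^{1/2-\lambda})$, together with the McDiarmid-based bound on $\hat H^{\ell}$, which holds with probability at least $1-2\exp(-\log^2 n)$. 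A union bound merges these into a single $1-o(1)$ guarantee that the total estimation error is at most $\mathcal{O}((\log n)^{1.5}/n^{1/2-\lambda}) + \mathcal{O}(k/(n\log n))$.

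Then I would verify that both error terms are $\tilde{\mathcal{O}}(\varepsilon)$ when $n = \Theta(n_H(\varepsilon))$. The hypothesis $k = \tilde\Omega(n^{1/2})$ yields $\log k = \Theta(\log n)$, so $k/(n\log n) = \tilde\Theta(\varepsilon)$ for the bias contribution; and the concentration contribution is $\mathcal{O}(\varepsilon)$ exactly in the claimed range $\varepsilon=\tilde\Omega(1/n^{1/2-\lambda})$. Combined with the matching lower bound $n_H(\varepsilon) = \Omega(k/(\varepsilon\log k))$ recalled just before the theorem, this establishes sample-optimality up to polylog factors.

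The only genuine obstacle is already implicit in the setup: the TPML competitiveness factor $(n/2+1)^{2t^2}=\exp(\mathcal{O}(\log^3 n))$ inflates the raw McDiarmid tail by a super-polynomial factor and hence forces $\tau=\Omega((\log n)^{1.5}/n^{1/2-\lambda})$, which is precisely the source of the lower threshold on $\varepsilon$ in the statement. Beyond this, the work is bookkeeping: checking the bias–variance split, verifying that $\hat H^{\ell}$'s bias remains within $\mathcal{O}(k/(n\log n))$, and applying the union bound. If one preferred to use the original (vector) TPML of Lemma~\ref{lem:tpmlcomp} rather than the matrix variant of Lemma~\ref{lem:tpmlcomp1}, an alternative route would be to symmetrize $\hat H^s$ in advance by random partitioning of the sample into halves, at the cost of absorbing an additional $\exp(\sqrt{3n})$ factor from the integer-partition bound of~\citep{HardyRamanujan}, which is still dominated by the concentration factor for the same choice of $\tau$.
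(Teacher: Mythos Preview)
Your main argument is correct and matches the paper's proof essentially line for line: the triangle-inequality split, the invocation of Lemma~\ref{lem:tpmlcomp1} with $t=c_4\log(n/2)$ yielding the inflation factor $(n/2+1)^{2t^2}=\exp(\mathcal{O}(\log^3 n))$ and hence the threshold $\tau=\Omega((\log n)^{1.5}/n^{1/2-\lambda})$, the McDiarmid bound on $\hat H^\ell$, the union bound, and the comparison with $n_H(\varepsilon)=\Theta(k/(\varepsilon\log k))$.

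Your parenthetical alternative, however, contains a genuine error. If you symmetrize $\hat H^s$ over all half-half partitions, the resulting estimator is still a \emph{truncated}-profile estimator (over $(c_1+c_4)\log(n/2)$-truncated profiles), so Lemma~\ref{lem:tpmlcomp} applies with inflation factor $en^t=\exp(\mathcal{O}(\log^2 n))$, not the Hardy--Ramanujan factor $\exp(\sqrt{3n})$; the latter is for full profiles (Lemma~\ref{lem:pml}). More importantly, your claim that an $\exp(\sqrt{3n})$ factor ``is still dominated by the concentration factor for the same choice of $\tau$'' is false: for $\tau=\Theta((\log n)^{1.5}/n^{1/2-\lambda})$ the McDiarmid exponent is only $\Theta(\log^3 n)$, which is crushed by $\sqrt{3n}$. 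Had the inflation truly been $\exp(\sqrt{3n})$, you would be forced back to $\tau=\Omega(n^{-1/4+\lambda})$, destroying the improvement over the ordinary PML range $\varepsilon=\Omega(n^{-0.2})$ that the theorem is advertising. The paper's alternative argument avoids this precisely because the symmetrized estimator remains truncated-profile-based.
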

Note that we hide the estimator's dependence on $n$. Since $\lambda$ is an arbitrary absolute constant in $(0,1/2)$, the range of $\varepsilon$ where the estimator is sample-optimal is near-optimal (e.g., set $\lambda=0.01$) and better than the $\varepsilon= \Omega(n^{-0.2})$ range established in~\cite{mmcover} for the PML plug-in estimator. 

We can view the estimator in Theorem~\ref{thm:entro} as a joint plug-in estimator of two distribution estimates: $p_{\varphi_t}$ and $p_\mu$. Effectively, we decompose the original property into smooth and non-smooth parts. As~is the case with PML and APML, for $\beta\in(0,1)$, we can define the \emph{$\beta$-approximate TPML}
estimator as a mapping from each truncated profile $\varphi^t$ to a distribution
$p_{\varphi^t}^\beta$ satisfying
${p}^\beta_{\varphi^t}({\varphi^t})\ge \beta\cdot p_{\varphi^t}({\varphi^t})$. Via the same reasoning, one can verify that Theorem~\ref{thm:entro} also holds for any $\exp(-\polylog n)$-approximate TPML, which we refer to as ATPML.

\paragraph{Alternative argument} The above derivation utilizes a modified TPML. We sketch an alternative argument~\citep{percomm} using the original version by modifying the estimator $\hat{H}^s$ instead of TPML.

For a sample $X^n\sim p$, consider all its permuted versions. Applying $\hat{H}^{s}$ to each permutation of $X^n$ yields an estimate. We define $\hat{H}^{S}$ as an estimator that maps $X^n$ to the \emph{average} of all such estimates. Averaging explicitly removes the estimator's dependency on the ordering of sample points and makes it profile-based. In fact, this new estimator is over $(c_1+c_2)\log (n/2)$-truncated profiles due to the two indicator functions in the definition of $\hat{H}^s$. 

By symmetry and the linearity of expectation, the bias of $\hat{H}^S$ in estimating $H^s(p)$ is exactly equal to that of $\hat{H}^s$. In addition, any bounds on the  sensitivity of $\hat{H}^s$ also applies to $\hat{H}^S$. In particular, for any absolute constant $\lambda\in(0,1/2)$, we can choose a sufficiently small $c_2$ so that the $n$-sensitivity of $\hat{H}^{S}$ is at most $\mathcal{O}(n^\lambda/n)$.  
Utilizing Lemma~\ref{lem:sensitivitybound}, Lemma~\ref{lem:tpmlcomp}, and the same rationale as the previous argument, we establish Theorem~\ref{thm:entro} for the original version of TPML. 

\vfill

\subsection{TPML and support- coverage and size estimation}

We can apply TPML and ATPML to approximate other symmetric properties having smoothness attributes similar to those of Shannon entropy. \vspace{-0.5em}

\paragraph{Normalized support coverage} For example, consider estimating the normalized support coverage $\tilde{C}_m(p)=\sum_{x}(1-(1-p(x))^m)/m$ of an unknown distribution $p\in \Delta_{\cX}$. Similar to the previous argument, for a positive absolute constant $c_1$ to be determined, we can partition $\tilde{C}_m(p)$ into
\[
\tilde{C}_m^{s}(p):=\frac{1}{m}\cdot \sum_{x} (1-(1-p(x))^m)\cdot\Pr_{Z^n\sim p}(\mu_x(Z^n)\leq c_1\log n)
\]
and 
\[
\tilde{C}_m^{\ell}(p):=\tilde{C}_m(p)-\tilde{C}_m^{s}(p).
\]
Let $X^n$ and $Y^n$ be two independent samples from $p$, and denote $c_m(p(x)):=(1-(1-p(x))^m)/m$.

By the results in~\citep{mmcover}, for any positive absolute constant $\alpha$, error parameter $\varepsilon\ge {6n^\alpha}/{n}$, and parameters $m, n$ such that  
$2n\le m\le \alpha\frac{ n\log (n/2^{1/\alpha})}{\log (3/\varepsilon)}$, there is a linear estimator $\hat{C}_m:=\sum_{i\geq 1} \ell_i\cdot \varphi_i$ satisfying 
\[
\Abs{\Exp_{X^n\sim p}[\hat{C}_m(X^n)]-\tilde{C}_m(p)}\le \sum_{x}\Abs{\EE[\ell_{\mu_x(X^n)}]-c_m(p(x))}\le \frac{3n^\alpha}{m}+ \frac{\varepsilon}{3}\cdot\frac{\min\{m,k\}}{m} 
\]
and $\max_{i\ge 1} |\ell_i| \le {n^\alpha}/{n}$. Utilizing $Y^n$ and letting $c_2:=10 c_1$, we estimate $\tilde{C}_m^{s}(p)$ by 
\[
\hat{C}_m^{s}(X^n, Y^n) := \sum_x \sum_{i=1}^{c_2 \log n}\ell_{\mu_x(X^n)=i}\cdot \indic_{\mu_x(Y^n)\le c_1\log n}.
\]
We bound the bias of this estimator as follows. 
\begin{align*}
\Abs{\EE[\hat{C}_m^{s}(X^n, Y^n)]-\tilde{C}_m^{s}(p)}
& \le \sum_{x}\Abs{\EE[\ell_{\mu_x(X^n)}]-c_m(p(x))}\cdot \EE[\indic_{\mu_x(Y^n)\le c_1\log n}]\\
& + \sum_{x}\Abs{\EE\left[\sum_{i>c_2 \log n}\ell_{\mu_x(X^n)=i}\right]\cdot \EE[\indic_{\mu_x(Y^n)\le c_1\log n}]}\\
&\le \frac{3n^\alpha}{m}+ \frac{\varepsilon}{3}\cdot\frac{\min\{m,k\}}{m} +\max_{i\ge 1} |\ell_i|\cdot \sum_{x}\EE[\indic_{\mu_x(X^n)> c_2\log n}]\cdot\EE[\indic_{\mu_x(Y^n)\le c_1\log n}]\\
&\le \frac{3n^\alpha}{n}+ \frac{\varepsilon}{3}  + \frac{n^\alpha}{n}\cdot \sum_{x}np(x)\cdot \EE[\indic_{\mu_x(X^{n-1})\ge c_2\log n}]\cdot\EE[\indic_{\mu_x(Y^n)\le c_1\log n}]\\
&\le \varepsilon,
\end{align*}
where in the last step, we assumed that $c_1$ is sufficiently large and applied the Chernoff bound for binomial random variables. 
Also note that changing one point in $X^n$ or $Y^n$ changes the value of $\hat{C}_m^{s}(X^n, Y^n)$ by at most $4n^\alpha/n$. Viewing $Z^{2n}:=(X^n, Y^n)$ as a single sample, we can apply $\hat{C}_m^{s}$ to all the equal-size partitions of $Z^{2n}$ and denote by $\hat{C}_m^{S}(Z^{2n})$ the average of all the corresponding estimates. The resulting estimator $\hat{C}_m^{S}$ is over $(c_1+c_2)\log n$-truncated profiles,  and has the same bias and sensitivity bound as $\hat{C}_m^{s}$.  Finally, we substitute $2n$ with $n$. 

For any $\tau\geq 0$, Lemma~\ref{lem:sensitivitybound} and the $n$-sensitivity bound $\mathcal{O}(n^\alpha/n)$ for $\hat{C}_m^{S}$ yield that
\[
 \Pr_{Z^n\sim p}\Paren{|\hat{C}_m^{S}(Z^n)-\EE[\hat{C}_m^{S}(Z^n)]|\geq \tau}\leq 2\exp(-2\tau^2\cdot \Omega(n^{1-2\alpha})).
\] 
Applying Lemma~\ref{lem:tpmlcomp} and letting $t=(c_1+c_2)\log (n/2)$, we establish a similar guarantee for the TPML plug-in estimator.
\[
\Pr\Paren{|\tilde{C}_m^{s}(p)-\tilde{C}_m^{s}(p_{\varphi^t})|\ge 2\tau+2\varepsilon}\le 2\exp(-2\tau^2\cdot \Omega(n^{1-2\alpha}))\cdot en^t.
\]
The right-hand side vanishes as fast as $2\exp(-\log^2 n)$ for $\tau=\Omega((\log n) /n^{1/2-\alpha})$. 

Next we construct an estimator for
\[
\tilde{C}_m^{\ell}(p)=\tilde{C}_m(p)-\tilde{C}_m^{s}(p)=\sum_{x} c_m(p(x))\cdot\Exp_{Z^{n/2}\sim p}\left[\indic_{\mu_x(Z^{\frac n2})> c_1\log\frac n2}\right]
.
\]
We simply split the sample $Z^n$ into two parts of equal size, and refer to the first and second parts as $Z^{\frac n2}$ and $Z_{n/2}^n$, respectively. Then, we estimate $\tilde{C}_m^{\ell}(p)$ by 
\[
\hat{C}_m^{\ell}(Z^n) := 
\frac{1}{m}\sum_{x}\indic_{\mu_x(Z^{\frac n2})> 0} \cdot\indic_{\mu_x(Z_{n/2}^n)> c_1\log\frac n2}.
\]
The bias of this estimator satisfies
\begin{align*}
\Abs{\Exp[\hat{C}_m^{\ell}(Z^{n})]-\tilde{C}_m^{\ell}(p)}
&=\Abs{\frac{1}{m}\sum_x((1-p(x))^{\frac n2}-(1-p(x))^{m})\cdot\Exp_{Z^{n/2}\sim p}\left[\indic_{\mu_x(Z^{\frac n2})> c_1\log\frac n2}\right]}\\
&\le \sum_x p(x)(1-p(x))^{\frac n2}\cdot\Exp_{Z^{n/2}\sim p}\left[\indic_{\mu_x(Z^{\frac n2-1})\ge c_1\log\frac n2}\right]\\
&\le  \sum_{x: np(x)< c_1\log \frac{n}{2}} p(x)\cdot\Exp_{Z^{n/2}\sim p}\left[\indic_{\mu_x(Z^{\frac n2-1})\ge c_1\log\frac n2}\right]\\
&+ \sum_{x: np(x)\ge c_1\log \frac{n}{2}} p(x)\Paren{1-\frac{c_1\log \frac{n}{2}}{n}}^{\frac n2}\\
&\le 2\exp(-\Omega(c_1\log n)),
\end{align*}
where the last step follows from the Chernoff bound. The bias is $\mathcal{O}(1/n)$ for sufficiently large $c_1$. Furthermore, the $n$-sensitivity of $\hat{C}_m^{\ell}$ is exactly $1/m<1/n$.

By the McDiarmid's inequality, with probability at least $1-2\exp(-\log^2 n)$, 
\[
|\hat{C}_m^{\ell}(Z^n)-\tilde{C}_m^{\ell}(p)|\le \frac{\log n}{\sqrt{n}}.
\]
Consolidating the previous results yields
\begin{Theorem}\label{thm:suppc}
Support-coverage estimator $\tilde{C}_m^{s}(p_{\varphi^t})+\hat{C}_m^{\ell}$
is sample-optimal for $\varepsilon= \tilde{\Omega}(1/n^{1/2-\alpha})$.
\end{Theorem}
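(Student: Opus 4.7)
The plan is to assemble the two pieces already built in the lead-up to the theorem, namely the TPML-plug-in estimator for the smooth small-probability part $\tilde{C}_m^{s}(p_{\varphi^t})$ and the direct empirical-type estimator $\hat{C}_m^{\ell}(Z^n)$ for the rough large-probability part, and show that their sum tracks $\tilde{C}_m(p)=\tilde{C}_m^{s}(p)+\tilde{C}_m^{\ell}(p)$ up to additive error $\mathcal{O}(\varepsilon)$ with high probability, whenever $\varepsilon=\tilde\Omega(n^{-(1/2-\alpha)})$. Since the known sample-complexity lower bound of~\citet{mmcover} for $\tilde{C}_m$ is $n_{\tilde{C}_m}(\varepsilon)=\Theta(m/(\varepsilon\log m))$ (matching the scaling achieved by our $n,m$ regime), sample-optimality will follow.

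First, I would invoke the two high-probability guarantees derived above. For the small-probability part, the bias bound $|\EE[\hat C_m^{S}]-\tilde C_m^{s}(p)|\le\varepsilon$ combined with the McDiarmid bound for sensitivity $\mathcal{O}(n^{\alpha}/n)$ gives, via Lemma~\ref{lem:sensitivitybound}, concentration of $\hat C_m^{S}$ within $\tau$ of its mean except with probability $2\exp(-\Omega(\tau^2 n^{1-2\alpha}))$. Since $\hat C_m^{S}$ is profile-symmetric and depends only on the $t$-truncated profile with $t=(c_1+c_2)\log(n/2)$, Lemma~\ref{lem:tpmlcomp} transfers this to the TPML plug-in $\tilde C_m^{s}(p_{\varphi^t})$, at the cost of an extra factor $en^{t}=\exp(\mathcal{O}(\log^2 n))$. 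Taking $\tau=\Theta((\log n)/n^{1/2-\alpha})$ makes the whole failure probability decay as $2\exp(-\log^2 n)$, as asserted. For the large-probability part, the $\mathcal{O}(1/n)$ bias and $1/m<1/n$ sensitivity of $\hat C_m^{\ell}$ already yield $|\hat C_m^{\ell}(Z^n)-\tilde C_m^{\ell}(p)|\le (\log n)/\sqrt n$ with probability at least $1-2\exp(-\log^2 n)$ by McDiarmid.

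Next, I would apply a union bound and the triangle inequality to the two events. On the good event (which holds with probability at least $1-4\exp(-\log^2 n)$) we obtain
\[
\bigl|\tilde C_m^{s}(p_{\varphi^t})+\hat C_m^{\ell}(Z^n)-\tilde C_m(p)\bigr|\le 2\tau+2\varepsilon+\tfrac{\log n}{\sqrt n}=\mathcal{O}(\varepsilon),
\]
as soon as $\varepsilon$ dominates $(\log n)/n^{1/2-\alpha}$ and $(\log n)/\sqrt n$, i.e. $\varepsilon=\tilde\Omega(n^{-(1/2-\alpha)})$. Finally I would invoke the lower bound $n_{\tilde C_m}(\varepsilon)=\Omega(m/(\varepsilon\log m))$ to conclude that the sample size $n=\Theta(m/(\varepsilon\log m))$ used by our procedure is optimal up to constants within the stated $\varepsilon$-range; the constant $\alpha\in(0,1/2)$ can be taken arbitrarily small (e.g. $\alpha=0.01$), pushing the range of optimality to $\varepsilon\ge n^{-0.49}$ essentially.

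The only delicate step is the symmetrization used to make $\hat C_m^{s}$ profile-based so that Lemma~\ref{lem:tpmlcomp} applies, but this is handled exactly as in the alternative argument used for Theorem~\ref{thm:entro}: average $\hat C_m^{s}$ over all permutations of $Z^{2n}$, preserving both bias (by linearity of expectation) and the sensitivity bound (since the average of $1$-shift changes is still bounded by the worst one), and observe that the two indicator thresholds confine the resulting estimator to depend only on the $(c_1+c_2)\log(n/2)$-truncated profile. The rest is bookkeeping on constants to verify that choosing $c_4\gg c_3\gg c_2\gg c_1\gg 1$ and $\alpha$ small satisfies all intermediate inequalities simultaneously.
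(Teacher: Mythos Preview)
Your proposal is correct and follows essentially the same route as the paper: combine the TPML plug-in bound for $\tilde C_m^{s}$ (via the bias/sensitivity bounds, McDiarmid, and Lemma~\ref{lem:tpmlcomp} with $t=(c_1+c_2)\log(n/2)$) with the McDiarmid bound for $\hat C_m^{\ell}$, then union-bound and triangle-inequality. Two small slips: the sample complexity of $\tilde C_m$ is $\Theta\!\bigl(\tfrac{m}{\log m}\log\tfrac{1}{\varepsilon}\bigr)$, not $\Theta(m/(\varepsilon\log m))$; and the constants $c_3,c_4$ you mention at the end belong to the entropy section---here only $c_1$ and $c_2=10c_1$ appear.
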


Note that $t=\Theta(\log n)$ and we replaced $n$ with $n/2$ in the definition of $\tilde{C}_m^{s}$. As in the case of entropy estimation, the range of $\varepsilon$ where the estimator is sample-optimal is again near-optimal (e.g., set $\alpha=0.01$) and better than the $\varepsilon= \Omega(n^{-0.2})$ range established in~\cite{mmcover} for the PML plug-in estimator. 

\paragraph{Normalized support size} Following the previous discussion, we consider estimating the normalized support size $\tilde{S}(p)=\sum_x \indic_{p(x)>0}/k$ of an unknown distribution $p\in \Delta_{\ge 1/k}$. Again, for a positive absolute constant $c_1$ to be determined, we can partition $\tilde{S}(p)$ into \vspace{-0.2em}
\[
\tilde{S}^{s}(p):=\frac{1}{k}\cdot \sum_{x} \indic_{p(x)>0} \cdot\Pr_{Z^{n/2}\sim p}\Paren{\mu_x(Z^{\frac n2})\leq c_1\log \frac n2}
\]
and 
\[
\tilde{S}^{\ell}(p):=\tilde{S}(p)-\tilde{S}^{s}(p).
\]
We proceed by relating $\tilde{S}^{s}(p)$ to $\tilde{C}_{m}^{s}(p)$. Note that we replaced $2n$ with $n$ in $\tilde{C}_{m}^{s}(p)$. For any error parameter $\varepsilon$, choose $m=k\log (1/\varepsilon)$, then
\begin{align*}
|\tilde{C}_{m}^{s}(p)\cdot \log(1/\varepsilon)-\tilde{S}^{s}(p)|
& = \Abs{\frac{1}{k} \sum_{x} (\indic_{p(x)>0}-(1-(1-p(x))^m)) \cdot\Pr_{Z^{n/2}\sim p}\Paren{\mu_x(Z^{\frac n2})\leq c_1\log \frac n2}}\\
& \le \frac{1}{k} \sum_{x} (1-p(x))^m \le \frac{1}{k}\sum_{x} \Paren{1-\frac 1k}^{k\log \frac{1}{\varepsilon}}\le \varepsilon.
\end{align*}
Hence by the previous results, for $Z^n\sim p$, $\varepsilon\ge 12n^\alpha/n$, and $k$, $n$ such that $n\le k\log (1/\varepsilon)\le \alpha\frac{ n\log (n/2^{1+1/\alpha})}{2\log (3/\varepsilon)}$, the bias of $\hat{C}_m^{S}(Z^{n})\cdot \log (1/\varepsilon)$ in estimating $\tilde{S}^{s}(p)$ satisfies
\begin{align*}
|\EE[\hat{C}_m^{S}(Z^{n})]\cdot \log (1/\varepsilon)-\tilde{S}^{s}(p)|
&\le |\EE[\hat{C}_m^{S}(Z^{n})]-\tilde{C}_{m}^{s}(p)|\cdot \log(1/\varepsilon)+|\tilde{C}_{m}^{s}(p)\cdot \log(1/\varepsilon)-\tilde{S}^{s}(p)|\\
&\le \Paren{\frac{3n^\alpha}{m}+ \frac{\varepsilon}{3}\cdot\frac{\min\{m,k\}}{m}} \cdot \log(1/\varepsilon)+\varepsilon\le \frac{n^\alpha}{n}\log n+\frac{4\varepsilon}{3}.
\end{align*}

In addition, changing one sample point in $Z^n$ modifies $\hat{C}_m^{S}(Z^{n})\cdot \log (1/\varepsilon)$ by at most $(8n^\alpha \log n)/n$. Hence by Lemma~\ref{lem:sensitivitybound}, for any $\tau\geq 0$, 
\[
 \Pr_{Z^n\sim p}\Paren{\Abs{\hat{C}_m^{S}(Z^n)\cdot \log (1/\varepsilon)-\EE[\hat{C}_m^{S}(Z^n)\cdot \log (1/\varepsilon)]}\geq \tau}\leq 2\exp(-2\tau^2\cdot \Omega(n^{1-2\alpha}/\log^2 n)).
\] 
Applying Lemma~\ref{lem:tpmlcomp} and letting $t=(c_1+c_2)\log (n/2)$, we obtain
\[
\Pr\Paren{|\tilde{C}_m^{s}(p)-\tilde{C}_m^{s}(p_{\varphi^t})|\ge 2\tau+\frac{8\varepsilon}{3}+\frac{2n^\alpha}{n}\log n}\le 2\exp(-2\tau^2\cdot \Omega(n^{1-2\alpha}/\log^2 n))\cdot en^t,
\]
where the TPML estimate $p_{\varphi^t}$ is computed over $\cP = \Delta_{\ge 1/k}$. 
For any $\tau=\Omega((\log n)^2 /n^{1/2-\alpha})$,
the right-hand side vanishes as fast as $2\exp(-\log^2 n)$. It remains to construct an estimator for 
\[
\tilde{S}^{\ell}(p)=\tilde{S}(p)-\tilde{S}^{s}(p)=\frac{1}{k}\cdot \sum_{x} \Pr_{Z^{n/2}\sim p}\Paren{\mu_x(Z^{\frac n2})> c_1\log \frac n2}
.
\]
A natural choice is the unbiased estimator 
\[
\hat{S}^{\ell}(Z^n):=\frac{1}{k}\cdot \sum_{x} \indic_{\mu_x(Z^{\frac n2})> c_1\log \frac n2}
.
\]
The $n$-sensitivity of this estimator is exactly $1/k\le (\log n)/n$. 
Hence by the McDiarmid's inequality, with probability at least $1-2\exp(-2\log^2 n)$, 
\[
|\hat{S}^{\ell}(Z^n)-\tilde{S}^{\ell}(p)|\le \frac{\log^2 n}{\sqrt{n}}.
\]
Consolidating the previous results yields
\begin{Theorem}\label{thm:supp}
Support-size estimator $\tilde{S}^{s}(p_{\varphi^t})+\hat{S}^{\ell}$
is sample-optimal for $\varepsilon= \tilde{\Omega}(1/n^{1/2-\alpha})$.
\end{Theorem}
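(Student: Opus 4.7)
The plan is to mimic the support-coverage argument that culminated in Theorem~\ref{thm:suppc}, exploiting Lemma~\ref{lem:sc} to reduce the smooth part of support size to a rescaled truncated support coverage. First I would introduce a soft truncation threshold at multiplicity $c_1\log(n/2)$ and split $\tilde{S}(p)=\tilde{S}^s(p)+\tilde{S}^\ell(p)$, where $\tilde{S}^s$ collects the expected contribution of symbols whose multiplicity in an independent half-sample stays below the threshold. The high-probability half $\tilde{S}^\ell$ is easily handled by the obvious plug-in indicator, so the core of the effort lies on $\tilde{S}^s$.

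For the smooth part, I would take $m:=k\log(1/\varepsilon)$ so that the truncated analogue of Lemma~\ref{lem:sc} yields $|\tilde{S}^s(p)-\tilde{C}_m^s(p)\log(1/\varepsilon)|\le \varepsilon$ for $p\in \Delta_{\ge 1/k}$, via the direct estimate $k^{-1}\sum_x(1-p(x))^m\le (1-1/k)^{k\log(1/\varepsilon)}\le \varepsilon$. I then recycle the support-coverage estimator $\hat{C}_m^{S}$ constructed in the proof of Theorem~\ref{thm:suppc} and rescale it by $\log(1/\varepsilon)$. Its bias inherits an extra $\log(1/\varepsilon)$ factor and becomes $\mathcal{O}((n^\alpha/n)\log n+\varepsilon)$, while its $n$-sensitivity becomes $\mathcal{O}(n^{\alpha-1}\log n)$. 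Lemma~\ref{lem:sensitivitybound} then gives concentration with rate $n^{1-2\alpha}/\log^2 n$. Restricting the TPML optimisation to $\cP=\Delta_{\ge 1/k}$, which is necessary for $\tilde{S}$ to be $1$-Lipschitz, Lemma~\ref{lem:tpmlcomp} lifts this to the plug-in estimate $\tilde{S}^s(p_{\varphi^t})$ with $t=(c_1+c_2)\log(n/2)$, at the cost of a union factor $en^t=\exp(\mathcal{O}(\log^2 n))$ that is absorbed once the concentration radius is taken to be $\tau=\tilde\Omega(n^{-1/2+\alpha})$.

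The large part is straightforward: the natural unbiased estimator $\hat{S}^\ell(Z^n):=k^{-1}\sum_x \indic_{\mu_x(Z^{n/2})>c_1\log(n/2)}$ has $n$-sensitivity exactly $1/k\le(\log n)/n$ on $\Delta_{\ge 1/k}$, so McDiarmid gives $|\hat{S}^\ell(Z^n)-\tilde{S}^\ell(p)|=\tilde{\mathcal{O}}(n^{-1/2})$ with probability at least $1-2\exp(-2\log^2 n)$. Combining the two pieces via the triangle inequality and recalling that the sample complexity of estimating $\tilde{S}$ on $\Delta_{\ge 1/k}$ is $\Theta(k/(\varepsilon^2\log k))$, one obtains sample optimality for all $\varepsilon$ in the stated range.

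The main obstacle I anticipate is the bookkeeping of the extra $\log(1/\varepsilon)$ factor introduced by Lemma~\ref{lem:sc}: multiplying the support-coverage estimator by $\log(1/\varepsilon)$ inflates both its bias and its sensitivity, and one must verify that the resulting $\log^2 n$ loss in the McDiarmid exponent still leaves $\tau=\tilde\Omega(n^{-1/2+\alpha})$ enough headroom to dominate both the $\exp(\mathcal{O}(\log^2 n))$ lifting factor from Lemma~\ref{lem:tpmlcomp} and the $(n^\alpha/n)\log n$ bias. One must also check that the admissible parameter window of $\hat{C}_m^{S}$, namely $n\le k\log(1/\varepsilon)\le \alpha n\log(n/2^{1+1/\alpha})/(2\log(3/\varepsilon))$, is compatible with the optimal-sample-complexity regime $k=\Theta(n\varepsilon^2\log k)$ and $\varepsilon=\tilde\Omega(n^{-1/2+\alpha})$ on which the theorem is asserted.
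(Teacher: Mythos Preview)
Your proposal follows essentially the same route as the paper: soft-truncation split of $\tilde{S}(p)$, reduction of $\tilde{S}^s$ to $\tilde{C}_m^s\cdot\log(1/\varepsilon)$ with $m=k\log(1/\varepsilon)$ via the pointwise bound $k^{-1}\sum_x(1-p(x))^m\le\varepsilon$, reuse of the averaged estimator $\hat{C}_m^{S}$ scaled by $\log(1/\varepsilon)$ with the resulting $\mathcal{O}(n^{\alpha-1}\log n)$ sensitivity, TPML lifting via Lemma~\ref{lem:tpmlcomp} over $\cP=\Delta_{\ge 1/k}$ with $t=(c_1+c_2)\log(n/2)$, and the unbiased indicator estimator $\hat{S}^\ell$ for the large part with sensitivity $1/k\le(\log n)/n$.

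One correction, however: the sample complexity you invoke at the end is wrong. The quantity $\Theta(k/(\varepsilon^2\log k))$ is the sorted-$\ell_1$ distribution-estimation complexity, not the support-size complexity. The tight bound for $\tilde{S}$ over $\Delta_{\ge 1/k}$ is $n_{\tilde{S}}(\Delta_{\ge 1/k},\varepsilon)=\Theta\bigl(\tfrac{k}{\log k}\log^2\tfrac{1}{\varepsilon}\bigr)$. This matters precisely for the compatibility check you flag in your last paragraph: the admissible window $k\log(1/\varepsilon)\le \alpha\, n\log(n/2^{1+1/\alpha})/(2\log(3/\varepsilon))$ rearranges to $n\gtrsim (k/\log n)\log^2(1/\varepsilon)$, which matches the correct lower bound, whereas your stated regime ``$k=\Theta(n\varepsilon^2\log k)$'' does not line up. With this fix the argument goes through exactly as in the paper.
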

The estimator's optimality follows from $k\log \frac 1\varepsilon \le \alpha\frac{ n\log (n/2^{1+1/\alpha})}{2\log (3/\varepsilon)}$, which matches\vspace{-0.25em} with the tight~\citep{W19} lower bound $n=\Omega\Paren{\frac{k}{\log k}\log^2 \frac 1\varepsilon }$. Note that\vspace{-0.1em} we compute the TPML estimate $p_{\varphi^t}$ over $\cP=\Delta_{\ge 1/k}$, where $t=\Theta(\log n)$. As in the case of support-coverage estimation, the range of $\varepsilon$ where the estimator is sample-optimal is again near-optimal (e.g., set $\alpha=0.01$) and better than the $\varepsilon= \Omega(n^{-0.2})$ range established in~\cite{mmcover} for the PML plug-in estimator. 

\subsection{TPML and distribution estimation}\label{sec:TPMLdist}
This section revisits distribution estimation. Write $\max\{a, b\}$ as $a\lor b$. For any $\tau\in[0,1]$, the \emph{$\tau$-truncated relative earth-mover distance}~\citep{instdist}, between $p$ and
$q$ is
\[
R_\tau(p,q)
:=
\inf_{\gamma\in\Gamma_{p,q}}\;\Exp_{(X,Y)\sim\gamma}
\Abs{\log\frac{p(X)\lor \tau}{q(Y)\lor \tau}}.
\]

Define $\alpha_n:=n^{.03}+n^{.01}$, $\beta_n:=n^{.03}+2n^{.01}$,  and $\gamma_n:=\alpha_n/ n$. 
The distribution estimator $\hat{p}_{\text{\scalebox{.7}{TPML}}}$ shown in Figure~\ref{fig:111} is a simple combination of the TPML and empirical estimators, and satisfies
\begin{Theorem}\label{TPMLdist}
For any discrete distribution $p$, draw a sample $X^n\sim p$ and denote its profile by $\varphi$. Then, with probability at least $1-\exp(-n^{\Omega(1)})$ and for any $w\in[1,\log n]$,
\[
R_{\frac{w}{n\log n}}(\hat{p}_{\text{\scalebox{.7}{TPML}}}(X^n),p) = \mathcal{O}\Paren{\frac{1}{\sqrt{w}}}.
\]
\end{Theorem}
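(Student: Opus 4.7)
The plan is to adapt the dual-Lipschitz framework of Section~\ref{sec:Wassproof}--\ref{sec:hardtrun} from sorted $\ell_1$ to the truncated relative earth-mover distance. The starting point is a Kantorovich-type identity
\[
R_\tau(\hat p, p) = \sup_{f \in \mathcal{L}_\tau}\Bigl(\sum_x f(\hat p(x)) - \sum_x f(p(x))\Bigr),
\]
where $\mathcal{L}_\tau$ denotes the class of real functions satisfying $|f(a)-f(b)| \le |\log(a\lor\tau) - \log(b\lor\tau)|$; these are constant on $[0,\tau]$ and bounded in total oscillation by $\log(1/\tau) = \mathcal{O}(\log n)$. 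It therefore suffices to show that, for a single sample $X^n\sim p$ and for every $f \in \mathcal{L}_\tau$ and every $\tau = w/(n\log n)$ with $w \in [1, \log n]$ simultaneously, the additive functional $\sum_x f(\hat p(x))$ approximates $\sum_x f(p(x))$ to within $\mathcal{O}(1/\sqrt w)$.

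Because $\hat p_{\text{\scalebox{.7}{TPML}}}$ uses the empirical estimate on symbols with multiplicity exceeding $\beta_n$ and the TPML output $p_{\varphi^{\beta_n}}$ on the rest, I would decompose each $f \in \mathcal{L}_\tau$ as $f = f^s + f^\ell$, where $f^s$ is the part of $f$ supported on probability arguments below the cutoff $\gamma_n = \alpha_n/n$ and $f^\ell$ the part above. The large-probability contribution $|f^\ell(p_\mu) - f^\ell(p)|$ is controlled by Chernoff: with probability $1-\exp(-n^{\Omega(1)})$, every symbol with $p(x) \ge \gamma_n$ has $\mu_x(X^n)/n$ within a factor of $1 \pm n^{-\Omega(1)}$ of $p(x)$, and on this event $|f^\ell(p_\mu)-f^\ell(p)| = \mathcal{O}(1/\sqrt w)$ uniformly in $f^\ell \in \mathcal{L}_\tau$ since $f^\ell$ has $\log$-Lipschitz modulus $1$ on a window where the log-scale width is $\mathcal{O}(\log n)$.

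For the small-probability contribution, I would build for each fixed $f^s$ a truncated-profile-based linear estimator $\hat f^s = \sum_{i=1}^{\beta_n} \ell_i \varphi_i$ whose expectation is within $\mathcal{O}(1/\sqrt w)$ of $f^s(p)$, using the polynomial-approximation construction of~\citep{VDoc12} on the log-scale window $[\tau,\gamma_n]$. A degree-$\mathcal{O}(\log n)$ approximation delivers the stated bias together with coefficients satisfying $\max_i |\ell_i - \ell_{i-1}| = \mathcal{O}(n^{-1+\lambda})$ for any fixed small $\lambda > 0$. Lemma~\ref{lem:sen} and Lemma~\ref{lem:sensitivitybound} then give $\exp(-n^{\Omega(1)})$ concentration of $\hat f^s(X^n)$ around $f^s(p)$, and Lemma~\ref{lem:tpmlcomp}, applied to the truncated profile $\varphi^{\beta_n}$, transfers this concentration to $|f^s(p_{\varphi^{\beta_n}}) - f^s(p)| = \mathcal{O}(1/\sqrt w)$, at the cost of the sub-exponential factor $en^{\beta_n} = \exp(n^{0.03+o(1)})$. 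A grid-discretization of $\mathcal{L}_\tau$ as in Section~\ref{sec:hardtrun} (producing $\exp(n^{o(1)})$ representative functions) together with a union bound over the dyadic values of $w \in [1, \log n]$ lifts pointwise concentration to uniform concentration.

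The main obstacle is the joint tuning of the three scales $\gamma_n, \alpha_n, \beta_n$: the truncation level $\beta_n$ must be large enough that, with high probability, it correctly separates the large-probability regime (where the empirical estimator is used) from the small-probability regime (where TPML governs), yet small enough that the $en^{\beta_n}$ loss in Lemma~\ref{lem:tpmlcomp} is absorbed by the $\exp(-n^{\Omega(1)})$ McDiarmid tail. The specified values $\alpha_n = n^{.03} + n^{.01}$ and $\beta_n = n^{.03} + 2n^{.01}$ are calibrated so that the gap $\beta_n - \alpha_n = n^{.01}$ dwarfs the Chernoff fluctuations of multiplicities around $\gamma_n n = \alpha_n$, while both the discretization error from the net over $\mathcal{L}_\tau$ and the TPML competitiveness loss remain within the $\mathcal{O}(1/\sqrt w)$ budget uniformly in $w$.
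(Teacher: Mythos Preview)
Your approach is genuinely different from the paper's, and it has a real gap. The paper does \emph{not} go through the dual Lipschitz class. Instead it adapts the linear program of~\citep{instdist} directly on the histogram level: it exhibits a feasible point close to the true (small-probability) histogram, invokes the explicit earth-moving scheme of~\citep{instdist} (their Proposition~5) to show that \emph{every} LP solution with small objective is within $\mathcal{O}(1/\sqrt{w})$ truncated earth-mover cost of the truth, and then applies TPML competitiveness once to the LP. The $\mathcal{O}(1/\sqrt{w})$ rate comes from that earth-moving construction, not from any polynomial approximation.

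Your proposal asserts that ``a degree-$\mathcal{O}(\log n)$ approximation delivers the stated bias'' via the~\citep{VDoc12} machinery, but that machinery produces bias $\approx v$, the objective of the~\citep{VDoc12} linear program, and you give no argument that $v=\mathcal{O}(1/\sqrt{w})$ uniformly over $f^s\in\mathcal{L}_\tau$ and, crucially, \emph{uniformly in the alphabet size}. Theorem~\ref{TPMLdist} is stated for arbitrary discrete $p$ with no bound on $k$; the Section~\ref{sec:hardtrun} argument you are mimicking used $k=\mathcal{O}(n\log n)$ in an essential way (both in the discretization error $k\eta/s$ and in bounding $v$). The only known route to a $k$-free $\mathcal{O}(1/\sqrt{w})$ bound is precisely the~\citep{instdist} earth-moving scheme that the paper invokes, so your plan either collapses onto the paper's proof or leaves the central estimate unproved. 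A smaller issue: your Kantorovich identity is misformulated. Since $R_\tau$ is a transport cost on the \emph{mass} measure $\sum_x p(x)\delta_{p(x)}$, the dual reads $\sup_g \bigl(\sum_x p(x)g(p(x))-\sum_x q(x)g(q(x))\bigr)$ with $g$ log-Lipschitz, i.e.\ the additive property is $y\mapsto y\,g(y)$, not $g$ itself; this is exactly the class that is $1$-Lipschitz on $(\Delta_\cX,R)$, so the fix is easy, but it matters for the bias and discretization bookkeeping.
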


\begin{figure}[ht]
\begin{center}
\boxed{
\begin{aligned}
& \text{ Compute } p_{\varphi_{\alpha_n}} \text{and replace the entries} >\beta_n \text{ by those of } p_\mu \\
& \text{ While the total value} <1:\text{append an entry of value } \gamma_n \\
& \text{ While the total value} >1:\text{remove the largest entry}\le \gamma_n\\
& \text{ Append one entry to make the total value}=1
\end{aligned}
}
\end{center}\vspace{-0.3em}
\caption{Distribution estimator $\hat{p}_{\text{\protect\scalebox{.7}{TPML}}}$}
\label{fig:111}
\end{figure}

\subsubsection*{Comparisons and implications}
The estimator's guarantee stated in Theorem~\ref{TPMLdist} is essentially the same as that presented in~\citep{instdist}. The algorithms are different as our estimator is based on TPML, while the estimator in~\citep{instdist} mainly relies on a linear program. Unlike the latter, our approach additionally has the following desired attribute. For numerous symmetric properties, the single TPML estimator yields estimators that are sample-optimal over nearly all ranges of accuracy parameters. On the other hand, even just for support coverage, the method in~\citep{instdist} is known to offer sample-optimal estimators only when the desired accuracy is a constant. 

Theorem~\ref{TPMLdist} provides an estimation guarantee stronger than those appear in~\citep{VV11,ventro}, since the latter results degrade as the alphabet size increases. It is also of interest to derive a result similar to Theorem~\ref{thm:dist}, which shows that both PML and APML are sample-optimal for learning sorted distributions. For any $\tau\in[0,1]$, define the \emph{$\tau$-truncated sorted $\ell_1$ distance} between two distributions~$p,q\in\Delta_\cX$ as
\[
\ell_{\tau}^{\text{\tiny{<}}}(p,q):=\min_{p'\in\Delta_\cX: \{p'\}=\{p\}}\sum_x \Abs{p'(x)\lor \tau- q(x)\lor \tau}.
\]
By Fact 1 in~\citep{instdist}, given distributions $p, q\in\Delta_\cX$,
 $
 \ell_{\tau}^{\text{\tiny{<}}}(p,q)\le 2R_\tau(p, q),
 $
implying 
\begin{Corollary}
Under the same conditions as Theorem~\ref{TPMLdist},
\[
\ell_{\tau}^{\text{\tiny{<}}}(\hat{p}_{\text{\scalebox{.7}{TPML}}}(X^n),p) = \mathcal{O}\Paren{\frac{1}{\sqrt{w}}}.
\]
\end{Corollary}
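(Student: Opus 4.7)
The plan is to deduce the corollary as a one-line consequence of Theorem~\ref{TPMLdist} together with the cited inequality $\ell_\tau^<(p,q)\le 2R_\tau(p,q)$ from Fact~1 of~\citep{instdist}. First I would invoke Theorem~\ref{TPMLdist}: on a single event $\mathcal{E}$ of probability at least $1-\exp(-n^{\Omega(1)})$, the bound
\[
R_{w/(n\log n)}\Paren{\hat p_{\text{\scalebox{.7}{TPML}}}(X^n),\,p} = \mathcal{O}\Paren{\tfrac{1}{\sqrt w}}
\]
holds simultaneously for every $w\in[1,\log n]$, so no additional union bound over $w$ is required.

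Second, on the event $\mathcal{E}$ I would apply Fact~1 pointwise with $\tau=w/(n\log n)$ to obtain
\[
\ell_{\tau}^{\text{\tiny{<}}}\Paren{\hat p_{\text{\scalebox{.7}{TPML}}}(X^n),\,p}\;\le\; 2\,R_\tau\Paren{\hat p_{\text{\scalebox{.7}{TPML}}}(X^n),\,p}\;=\;\mathcal{O}\Paren{\tfrac{1}{\sqrt w}},
\]
where the factor of $2$ is absorbed into the $\mathcal{O}(\cdot)$ notation, giving the corollary.

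The hard part is essentially non-existent here: all of the substantive work -- constructing $\hat p_{\text{\scalebox{.7}{TPML}}}$, proving the TPML competitiveness lemma, combining it with the empirical tail on large probabilities, and controlling the relative earth-mover distance -- is already done in Theorem~\ref{TPMLdist} and its proof. The only items that warrant any care are (i) confirming that the high-probability event of Theorem~\ref{TPMLdist} indeed holds uniformly in $w\in[1,\log n]$ (as stated), so that the uniform-in-$w$ quantifier inherited by the corollary is well-posed, and (ii) verifying that the thresholding $a\lor\tau$ used in both $R_\tau$ and $\ell_\tau^{\text{\tiny{<}}}$ lines up so that Fact~1 of~\citep{instdist} applies at the same parameter $\tau=w/(n\log n)$. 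No new concentration inequality, polynomial-approximation estimator, or combinatorial counting argument is needed.
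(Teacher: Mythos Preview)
Your proposal is correct and matches the paper's own approach exactly: the corollary is stated immediately after the sentence ``By Fact~1 in~\citep{instdist}, given distributions $p, q\in\Delta_\cX$, $\ell_{\tau}^{\text{\tiny{<}}}(p,q)\le 2R_\tau(p, q)$, implying'', so the paper, like you, deduces it as a one-line consequence of Theorem~\ref{TPMLdist} plus that cited inequality.
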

\subsection{Proof of Theorem~\ref{TPMLdist}}
The proof essentially follows the proof of Theorem~2 in~\citep{instdist}. 
The original reasoning is not sufficient for our purpose as the error probability derived 
is too large to invoke the competitiveness of TPML.  
To~address this issue, we slightly modify the linear program used in the paper, carefully separate the analysis 
of the estimators for large and small probabilities, and provide a finer analysis with tighter probability bounds by reducing the use of the union bound.  
To proceed, we first define histograms and the relative earth-moving cost, and give an operational meaning to $R_\tau$. 

For a distribution $p$, the \emph{histogram} of a multiset $\mathcal{A}\subseteq \{p\}$ is a mapping, denoted by $h_{{}_{\mathcal{A}}}:\! (0,1]\rightarrow \mathbb{Z}_{\ge 0}$, that maps each number $y\in (0,1]$ to the number of times it appears in $\mathcal{A}$. Note that every $y$ corresponds to a probability mass of $y\cdot h_{{}_{\mathcal{A}}}(y)$. More generally, we also allow \emph{generalized histograms} $h$ with non-integral values $h(y)\in \mathbb{R}_{\ge 0}$. For any $y_1,y_2\in (0,1]$, generalized histogram $h$, and nonnegative $m<y_1\cdot h(y_1)$, we can \emph{move} a probability mass from location $y_1$ to $y_2$ by reassigning $h(y_1)-m/y_1$ to $y_1$, and $h(y_2)+m/y_2$ to $y_2$. 
Given $\tau\in[0,1]$, 
we define the \emph{cost} associated with this operation as 
\[
c_{\tau, m}(y_1, y_2):=m\cdot \Abs{\log \frac{y_1\lor \tau}{y_2\lor \tau}},
\]
and term it as \emph{$\tau$-truncated earth-moving cost}. The cost of multiple operations is additive. 
Under such formulation, $R_\tau(p,q)$ is exactly the minimal total $\tau$-truncated earth-moving cost associated with any operation schemes of moving $h_{\{p\}}$ to yield $h_{\{q\}}$. One can verify that $c_{\tau, m}(y_1, y_2)=c_{\tau, m}(y_2, y_1)$ and $R_\tau(p,q)=R_\tau(q,p)$, for any $y_1,y_2\in (0,1]$ and $p,q\in\Delta_\cX$, respectively.

For notational convenience, denote the binomial- and Poisson-type probabilities by $\Bin(n, x, i):=\binom{n}{i}x^i (1-x)^{n-i}$ and $\Poi(mu, j):=e^{-\mu} \frac{\mu^j}{j!}$, and suppress $X^n$ in $\varphi_i(X^n)$ and $\mu_s(X^n)$, 

For any absolute constants $B$ and $C$ satisfying $0.1>B>C>\frac B2>0$, define $x_n:=\frac{n^{B}+n^{C}}{n}$ and $S := \{\frac{1}{n^2}, \frac{2}{n^2}, \ldots, x_n\}$.
Consider the following linear program. 

\begin{figure}[h]
\begin{center}
\boxed{
\begin{aligned}
&\text{For each } x\in S, \text{ define the associated variable } v_x\\
& \text{Minimize }\sum_{i=1}^{n^B}\Abs{\varphi_{i} - \sum_{x\in S} \Bin(n, x, i)\cdot v_x}\\
&\text{s.t. } \sum_{x\in S} x\cdot v_x = \sum_{i\le n^B+2n^C} \frac{i}{n}\cdot \varphi_{i}\\ 
&\text{and }\forall x\in S, v_x\geq 0
\end{aligned}
}
\end{center}\vspace{-0.5em}
\caption{Linear program (LP)}
\label{fig:1111}
\end{figure}

\subsubsection*{Existence of a good feasible point}
 
Let $p$ be the underlying distribution and $h$ be its histogram. First we show that with \emph{high} probability, the linear program LP has a feasible point $(v_x)$ that is \emph{good} in the following sense: 1)  the corresponding objective value is relatively \emph{small}; 2) for $\tau\geq n^{-3/2}$, the generalized histogram $h_0: x\rightarrow v_x$ is \emph{close} to $h_n: y \rightarrow h(y)\cdot \indic_{y\leq x_n}$ under the $\tau$-truncated earth-mover cost.  

For each $y\le x_n$ satisfying $h(y)>0$, find $x = \min\{x' \in S : x' \ge y\}$ and set $v_{x}=h(y)\cdot \frac{y}{x}$. 

Denote $\mathcal{F}:=\sum_{i\le n^B+2n^C} \frac{i}{n}\cdot \varphi_{i}$. By construction, 
\begin{align*}
T_n(h):= \sum_{y:\ y\le x_n, h(y)>0} h(y)\cdot y =\sum_x x\cdot v_x.
\end{align*}
By the Chernoff bound, the expectation of estimator $\mathcal{F}$ satisfies
\begin{align*}
\EE[\mathcal{F}]
 = \sum_{i\le n^B+2n^C} \frac{i}{n}\cdot \EE[\varphi_{i}]
 \ge T_n(h) -\exp(-\Omega(n^{2C-B})).
\end{align*}
Since changing one observation changes the estimator's value by at most $n^{-1}$, we bound its tail probability using the McDiamid's inequality, 
\begin{align*}
\Pr(|\mathcal{F}-\EE[\mathcal{F}]|> n^{-0.4} )&\leq 2\exp(-2 n^{0.2} ).
\end{align*}
Henceforth we assume $|\mathcal{F}-\EE[\mathcal{F}]|\le  n^{-0.4}$, which holds with probability at least $1-2\exp(-2 n^{0.2} )$. To ensure that $(v_x)$ is a feasible point of the linear program LP, we may need to modify its entries. 

For $y\in(0,1]$, let $f_{i}(y) := \frac{\Bin(n, y, i)}{y}$. For $i\geq 1$, we can verify that $|f_{i}(y)|\leq n$ and $|f_{i}'(y)|\leq n^2$.

Without any modifications, for $i\leq n^B$, the difference between $\EE[\varphi_i]=\sum_{y: h(y)>0} \Bin(n, y, i)\cdot h(y)$ and $\sum_{x\in S} \Bin(n, x, i)\cdot v_x$ is at most $n^{-2} \cdot \sup_{y\in[0,1]} |f_{i}'(y)| +n\exp(-\Omega(n^{2C-B}))=\mathcal{O}(1)$. Furthermore, by the McDiarmid's inequality, 
\[
\Pr(|\varphi_i-\EE[\varphi_i]|\ge n^{0.6})\le 2\exp(-2n^{0.2}).
\]
Define $m=\mathcal{F}(X^n)- \sum_x x\cdot v_x$ and consider two cases. If $m>0$, we choose $x=x_n$ and increase $v_x$ by $m/x$. For any $i$ satisfying $1\le i \le n^B$, this modifies the value of $\sum_{x\in S} \Bin(n, x, i)\cdot v_x$ by at most $\Bin(n, x_n, n^B)\cdot x_n^{-1}\leq \exp(-\Omega(n^{2C-B}))$. 

By the assumption that $|\mathcal{F}-\EE[\mathcal{F}]|\le  n^{-0.4}$, 
\[
\mathcal{F}
\ge \sum_x x\cdot v_x-n^{-0.4}-\exp(-\Omega(n^{2C-B}))
\ge \sum_x x\cdot v_x-\mathcal{O}(n^{-0.4}).\vspace{-0.25em}
\]
If $m<0$, we remove a total probability mass of at most $\mathcal{O}(n^{-0.4})$ by decreasing the entries of $(v_x)$.
Since $|f_{i}(y)|\leq n$, this operation modifies the value of $\sum_{x\in S} \Bin(n, x, i)\cdot v_x$ by at most $\mathcal{O}(n^{0.6})$. 

By the union bound, with probability at least $1-\exp(-n^{0.2})$, the objective value of the feasible point $(v_x)$ is at most $n^B\cdot \mathcal{O}(n^{0.6}+1)=\mathcal{O}(n^{B+0.6})$. 

Finally, for any $\tau\geq n^{-3/2}$, the minimal $\tau$-truncated earth-moving cost of 
moving the generalized histogram $h_0$ corresponding to $(v_x)$, and the 
histogram $h_n: y \rightarrow h(y)\cdot \indic_{y\leq x_n}$,  
so that they differ from each other only at $x=x_n$, 
is at most 
\[
\log \left(\frac{n^{-3/2}+n^{-2}}{n^{-3/2}}\right) + \mathcal{O}\Paren{\frac{\log n}{n^{0.4}}} =\mathcal{O}(n^{-0.3}).
\]

\subsubsection*{All solutions are good solutions}

Let $(v_x)$ be the solution described above. We show that for any solution $(v_x')$ to LP whose objective value is $\mathcal{O}(n^{B+0.6})$, the generalized histogram $h_1$ corresponding to 
$(v_x')$ is close to $h_0$. 

Consider the earth-moving scheme described in~\citep{instdist} that moves all the probability mass to a sequence $\{c_i\}$ of center points satisfying $c_i= \Omega(1/(n\log n))$. We apply this scheme to $h_0$ and $h_1$ with the following modification: For any probability mass that should be moved to a center $c_i$ with $c_i>x_n$ under the original earth-moving scheme, we move it to $x_n$. Since $x_n = \max S$, this modification only reduces the cost of the scheme. By Proposition 5 in~\citep{instdist}, for any $w\in[1,\log n]$ and $\tau = \frac w{n\log n}$,  the corresponding $\tau$-truncated earth-moving cost is at most $\mathcal{O}(1/{\sqrt{w}})$ . 

We first consider $h_0$. After applying the modified earth-moving scheme, the probability mass at each center $c_i<x_n$ is 
$\sum_{j\ge 0} \alpha_{i, j}\sum_{x\in S} \Poi(nx, j) x v_x$ for some set of coefficients $\{\alpha_{i, j}\}$ satisfying:  $\sum_{j\ge 0} |\alpha_{i, j}|\le 2n^{0.3}$ for all $i$; $\alpha_{i, j}=0$ for $i\le 0.2\log n\le j/2$; and $\alpha_{i, j}=\indic_{i-1=j}$ for $i> 0.2\log n$. As for $h_1$, the probability mass at each center $c_i<x_n$ is $\sum_{j\ge 0} \alpha_{i, j}\sum_{x\in S} \Poi(nx, j) x v_x'$, which differs from that of $h_0$ by 
\begin{align*}
\Abs{\sum_{j\ge 0} \alpha_{i, j}\sum_{x\in S} \Poi(nx, j) x (v_x'-v_x)}
&\le \sum_{j \ge 1 } \alpha_{i, j-1}\frac{j}{n}\Abs{\sum_{x\in S} \Poi(nx, j) (v_x'-v_x)}
.
\end{align*}
By our assumption on the corresponding objective values of LP, for any positive integer $i\le n^B$, 
\[
\Abs{\varphi_{i} - \sum_{x\in S} \Bin(n, x, i)\cdot v_x}\text{\large$\lor$} \Abs{\varphi_{i} - \sum_{x\in S} \Bin(n, x, i)\cdot v_x'} = \mathcal{O}(n^{B+0.6}),
\]
which, together with the inequality $|\Poi(nx, j)-\Bin(n,x, j)|\le 2x$ from~\citep{BH84}, implies that
\begin{align*}
 \sum_{j \ge 1 } \alpha_{i, j-1}\frac{j}{n}\Abs{\sum_{x\in S} \Poi(nx, j) (v_x'-v_x)}
 &\le \sum_{j \ge 1 } \alpha_{i, j-1}\frac{j}{n}\Paren{\Abs{\sum_{x\in S} \Bin(n,x, j) (v_x'-v_x)}+\Abs{\sum_{x\in S} 2x |v_x'-v_x|}}\\
& \le 2n^{0.3}\cdot \frac{n^B}{n} \cdot \Paren{\mathcal{O}(n^{B+0.6})+4}\\
& = \mathcal{O}(n^{2B-0.1}),
\end{align*}
where $n$ is assumed to be sufficiently large to yield $n^B>0.4\log n$. 

Therefore, for $\tau\ge 1/(n\log n)$, the minimal $\tau$-truncated earth-moving cost of moving $h_0$ and $h_1$ so that they differ only at $x_n$, is at most 
\[
n^B\cdot \mathcal{O}(n^{2B-0.1})\cdot 2\log n + 
\log \Paren{\frac{n^B+n^C}{n^B}}
=\mathcal{O}(n^{3B-0.1}\log n+n^{C-B}).
\] 
The right-hand side is at most $\mathcal{O}(1/\sqrt{\log n})$ for $B=0.03$ and $C=0.02$. We consolidate the previous results.  For $w\in[1, \log n]$ and $\tau= w/(n\log n)$, with probability at least $1-\exp(-\Omega(n^{0.2}))$, the solution to LP will yield a generalized histogram $h_1$, such that the minimal $\tau$-truncated earth-moving cost of moving $h_1$ and $h_n$ so that they differ only at $x_n$, is $\mathcal{O}(1/\sqrt{w})$.

\subsubsection*{Competitiveness of TPML}

The linear program LP estimates small probabilities and takes 
as input the $(n^B+2n^C)$-truncated profile of a given sample. 
For the TPML distribution associated with this truncated profile, denote by $h_2$ the histogram corresponding to its entries that are at most $x_n$.

Since $n^B+2n^C\leq 3n^B$, the number of such truncated 
profiles is bounded from above by $en^{3n^B}$. 
Utilizing the same rationale as in Section~\ref{sec:TPML}, for any $w\in[1, \log n]$ and $\tau= w/(n\log n)$, with probability at least $1-en^{3n^B}\cdot \exp(-\Omega(n^{0.2}))$, the minimal $\tau$-truncated earth-moving cost of moving $h_2$ and $h_n$ so that they differ only at $x_n$, is $\mathcal{O}(1/\sqrt{w})$. 
Note that the error probability bound $en^{3n^B}\cdot \exp(-\Omega(n^{0.2}))=\exp(-\Omega(n^{0.2}))$ vanishes quickly as $n$ increases.

\subsubsection*{Appending empirical estimates to TPML} 

Below we show that if we modify the TPML estimate $h_2$ properly and append the empirical probabilities of the frequent symbols, the resulting histogram is an accurate estimate of the actual histogram~$h$.

Assume that $h_2$ satisfies the conditions described in the last paragraph. Further assume that $\mathcal{F} \ge T_n(h)-\mathcal{O}(n^{-0.4})$, which also holds with probability at least $1-\exp(-\Omega(n^{0.2}))$. 
As in the case of $(v_x)$, we modify $h_2$ so that its total probability mass is exactly $\mathcal{F}$. 
If $T_n(h_2)<\mathcal{F}$, we increase $h_2(x_n)$ by ${(\mathcal{F} -T_n(h_2))}/{x_n}$; otherwise, we greedily decrease the values of $h_2(y)$'s while maintaining their non-negativity, starting from $y\le x_n$ closer to $x_n$. After this modification, there will be at most one location $y\le x_n$ satisfying $h_2(y)\not\in\mathbb{Z}$. If such a $y$ exists, decrease $h_2(y)$ by $h_2(y)-\lfloor h_2(y) \rfloor$ and move the corresponding probability mass to location $h_2(y)-\lfloor h_2(y) \rfloor$. The $1/(n\log n)$-truncated earth-moving cost of this step is at most $x_n\cdot \log n=\mathcal{O}(1/\log n)$. 
Let $h_2'$ be the resulting \emph{histogram}. 

By the previous analysis, for any $w\in[1, \log n]$ and $\tau= w/(n\log n)$, 
there is an earth-moving scheme on $h_2'$ having the following three properties: 
1)~the scheme moves no probability mass to a location $y>x_n$;
2) the $\tau$-truncated cost of the scheme is at most $\mathcal{O}(1/\sqrt{w})$;
3) the total discrepancy between the resulting generalized histogram and $h_n$ at all locations $y<x_n$ is at most $\mathcal{O}(n^{-0.4})$. 
For the case where $T_n(h_2)\ge \mathcal{F}$, we make use of the fact that $c_{\tau, m}(y_1, y_2)\le c_{\tau, m}(y_1, y_3)$ for any $m>0$ and $y_1\le y_2\le y_3\in(0,1]$.

For all $i> n^B+2n^C$, increase $h_2'(i/n)$ by $ \varphi_i$ and denote by $h_3$ the resulting generalized histogram, which has a total probability mass of $1$. By the Chernoff bound, for any symbol $s$,   
\[
\Pr\Paren{|n\cdot p(s)-\mu_s|\ge \mu_s^{3/4},\ \mu_{s}>n^B+2n^C}\leq 2 n p(s)\exp(-\Omega(n^{2C-B})),
\]
and
\[
\Pr\Paren{n\cdot p(s)\ge n^B+4n^C,\ \mu_{s}\le n^B+2n^C}\leq 2\exp(-\Omega(n^{2C-B})).
\]
Hence, we further assume that $|n\cdot p(s)-\mu_s|< \mu_s^{3/4}$ for all symbols $s$ appearing more than $n^B+2n^C$ times, and any symbol $s$ with probability $p(s)\ge (n^B+4n^C)/n$ appears more than 
$n^B+2n^C$ times. By the union bound, we will be correct with probability at least $1-4n\exp(-\Omega(n^{2C-B}))$. 
Under these assumptions, if for each symbol $s$ satisfying $\mu_s\ge n^B+2n^C$ times, we move a $\mu_s/n$ probability mass of $h_3$ from $\mu_s/n$ to $p(s)$, then at all locations $y\ge (n^B+4n^C)/n$, the total discrepancy between the resulting generalized histogram and the true histogram $h$ is at most $1/n$ multiplied by
\[
\sum_{j> n^B+2n^C}\varphi_{j} j^{\frac34}
=\sum_{j> n^B+2n^C}\varphi_{j}^{\frac14} (\varphi_{j}j)^{\frac34}
\le \Paren{\sum_{j> n^B+2n^C}\varphi_{j}}^{\frac14}\Paren{\sum_{j> n^B+2n^C} \varphi_{j}j}^{\frac34}
\le n^{1-\frac B4}
,
\] 
where the second last step follows from the H\"{o}lder's inequality. 
In addition, the associated total earth-moving cost is bounded from above by  
\[
\sum_{j> n^B+2n^C}\varphi_{j}\frac{j}{n} \log \Abs{\frac{j}{j\pm j^{\frac34}}}
\le \frac{1}{n} \sum_{j> n^B+2n^C}\varphi_{j}j^{\frac34}
\le n^{-\frac B4}
.
\] 
We consolidate the previous results. For any $w\in[1, \log n]$ and $\tau= w/(n\log n)$, 
there is an earth-moving scheme on $h_3$ having the following two properties: 1) the total $\tau$-truncated earth-moving cost of the scheme is $\mathcal{O}(1/\sqrt{w})$; 2) the 
total discrepancy between the resulting generalized histogram and $h$ at all locations $y\not\in I_n:= ((n^B+n^C)/n, (n^B+4n^C)/n)$ is at most $\mathcal{O}(n^{-0.4}+n^{-B/4})$.

Finally, note that the cost of moving a unit mass within $I_n$ is at most $3n^{B-C}$, implying that with probability at least $1-\exp(-n^{\Omega(1)})$, \vspace{-0.25em}
\[
R_\tau(h_3, h) =\mathcal{O}\Paren{\frac{1}{\sqrt{w}}}.
\]\par\vspace{-0.75em}

\section{Uniformity testing}\label{sec:uniproof}
\subsection{PML-based tester}
Let $\varepsilon$ be an arbitrary accuracy parameter and $\cX$ be a finite set. Let $p_u$ denote the uniform distribution over $\cX$. Given sample access to an unknown distribution $p\in\Delta_\cX$,  the uniformity testing distinguishes between the null hypothesis 
\[
H_0: p = p_u
\]
and the alternative hypothesis 
\[
H_1: \norm{p-p_u}_1\geq \varepsilon.
\] 
After a sequence of research works~\citep{GO00, bffkrw, Pan08, CompUni13, ChanD14, Val17, DKane15, Ach15, DK16, DGT18}, it is shown that to achieve a $k^{-\Theta(1)}$ bound on the error probability, this task requires a worst-case sample size of order $\sqrt{\size\log \size}/\varepsilon^2$. The uniformity tester $T_{\text{\tiny PML}}(X^n)$ in Figure~\ref{fig:11} is purely based on PML, and takes as input parameters $\size$~and~$\varepsilon$,  and a sample $X^n\sim p$. 

\begin{figure}[ht]
\begin{center}
\boxed{
\begin{aligned}
&\text{\bf Input:}\ \ \text{parameters }\size, \varepsilon, \text{and a sample }X^n\sim p \text{ with profile } \varphi.\\
&\text{1. If } {\max}_x \mu_x(X^n)\geq 3\max\{1, n/\size\}\log \size \text{, return } 1\text{;}\\
&\text{2. Elif } \norm{p_{\varphi}-p_u}_2\geq  3\varepsilon/ (4\sqrt{\size}) \text{, return }1\text{;}\\ 
&\text{3. Else return }0\text{.}
\end{aligned}
}
\end{center}\vspace{-0.3em}
\caption{Uniformity tester $T_{\text{\tiny PML}}$}
\label{fig:11}
\end{figure}

In the rest of this section, we establish the following theorem.
\setcounter{Theorem}{5} 
\begin{Theorem}
If $\varepsilon= \tilde{\Omega}(\size^{-1/4})$ and $n=\tilde{\Omega}(\sqrt{\size}/\varepsilon^2)$, then the tester $T_{\text{\tiny PML}}(X^n)$ will be correct with probability at least $1-\size^{-2}$. The tester also distinguishes between $p=p_u$ and $\norm{p-p_u}_2\geq \varepsilon/\sqrt{k}$. 
\end{Theorem}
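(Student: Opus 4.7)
I would separately control the two steps of $T_{\text{\tiny PML}}$, bounding each failure probability by $k^{-2}/2$. For Step~1, under the null $p = p_u$ each $\mu_x \sim \Bin(n, 1/k)$, so Chernoff combined with a union bound over $\cX$ gives $\Pr(\max_x \mu_x \ge 3\max\{1, n/k\}\log k) \le k^{-2}/2$, and Step~1 does not fire. Conversely, Chernoff applied in the other direction shows that any $p$ with $\|p\|_\infty$ much larger than $\max\{1, n/k\}\log k / n$ triggers Step~1 with probability $\ge 1 - k^{-2}/2$. I henceforth condition on Step~1 not firing, which in the regime $\varepsilon = \tilde\Omega(k^{-1/4})$ and $n = \tilde\Omega(\sqrt k/\varepsilon^2)$ gives $\max_x \mu_x \le M := 3\max\{1, n/k\}\log k = \polylog(k)$ and $\|p\|_\infty = O(M/n)$, effectively restricting the observed profile $\varphi$ to the class of $M$-truncated profiles.

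\textbf{A profile-based chi-squared proxy.} The target property is the additive symmetric $f(p) := \|p - p_u\|_2^2 = \sum_x (p(x) - 1/k)^2$; by Cauchy--Schwarz, the alternative $\|p - p_u\|_1 \ge \varepsilon$ implies $f(p) \ge \varepsilon^2/k$. Consider the profile-based unbiased estimator
$\hat S(\varphi) := \sum_{i \ge 1} \varphi_i \cdot \frac{i(i-1)}{n(n-1)} - \frac{1}{k}.$
Under Poissonized sampling the summands $\mu_x(\mu_x-1)/n^2$ are independent with variance $O(p(x)^3/n + p(x)^2/n^2)$ and are bounded by $M^2/n^2$; the Step~1 conditioning together with $\|p\|_2^2 = O(1/k)$ (from near-uniformity) yields $\Var(\hat S) = O(\polylog(k)/(n^2 k))$. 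Bernstein's inequality at scale $t = \varepsilon^2/(8k)$ then gives $\Pr(|\hat S - f(p)| \ge t) \le \exp(-\Omega(\polylog(k)))$, with the polylog exponent scaling with the $\polylog(k)$ slack in the sample size $n = \tilde\Omega(\sqrt k/\varepsilon^2)$.

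\textbf{Truncated-profile PML competitiveness and conclusion.} Since $\varphi$ passes Step~1, it lies among the $M$-truncated profiles, a set of cardinality at most $(n+1)^M = \exp(\polylog(k))$. Adapting the truncated-profile-based argument of Lemma~\ref{lem:tpmlcomp}, the PML plug-in estimate $\|p_\varphi - p_u\|_2^2 = f(p_\varphi)$ inherits the accuracy of $\hat S$ at a confidence loss of only $\exp(\polylog(k))$ instead of the naive $\exp(3\sqrt n)$ from Lemma~\ref{lem:pml}; choosing the $\polylog(k)$ factor in the sample size large enough, the net failure probability is $\le k^{-2}/2$. Thresholding: if $p = p_u$ then $\|p_\varphi - p_u\|_2^2 < \varepsilon^2/(4k) < 9\varepsilon^2/(16k) = (3\varepsilon/(4\sqrt k))^2$ and Step~2 returns $0$; if $f(p) \ge \varepsilon^2/k$ then $\|p_\varphi - p_u\|_2^2 > \varepsilon^2/k - \varepsilon^2/(4k) > 9\varepsilon^2/(16k)$ and Step~2 returns $1$. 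The central obstacle is precisely this replacement of the $\exp(3\sqrt n)$ PML-competitiveness loss by the much tamer $\exp(\polylog(k))$ afforded by the implicit profile truncation from Step~1 --- the same TPML-style mechanism used for the property-estimation results in Section~\ref{sec:TPML}.
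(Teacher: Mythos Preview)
Your overall strategy---Chernoff for Step~1, restriction to truncated profiles, a chi-squared-type statistic for $\|p-p_u\|_2^2$, and a TPML-style counting argument replacing the $\exp(3\sqrt n)$ PML loss by $\exp(\polylog(k))$---matches the paper's proof closely. The paper likewise defines $\Phi^n_\cX=\{\varphi(x^n):\max_x\mu_x(x^n)<(1+t)n/k\}$, bounds $|\Phi^n_\cX|\le\exp(O(\max\{n/k,1\}\log^2 k))$, and uses PML competitiveness over this restricted profile set rather than over all $\exp(3\sqrt n)$ profiles.

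There is, however, a genuine gap in your concentration step under the alternative. You assert $\|p\|_2^2=O(1/k)$ ``from near-uniformity'' and conclude $\Var(\hat S)=O(\polylog(k)/(n^2 k))$, but the converse Chernoff argument only yields $\|p\|_\infty=O(M/n)$, hence at best $\|p\|_2^2\le\|p\|_\infty=O(M/n)$. For $\varepsilon=\Theta(1)$ and $n=\tilde\Theta(\sqrt k)$ this is $\tilde O(k^{-1/2})\gg 1/k$: taking $p$ uniform on $\Theta(n/M)$ symbols one gets $\Var(\hat S)=\Theta(\|p\|_3^3/n)=\Theta(M^2/n^3)=\tilde\Theta(k^{-3/2})$, whereas your target deviation satisfies $t^2=(\varepsilon^2/(8k))^2=\Theta(k^{-2})$, so $t^2/\Var(\hat S)=\tilde\Theta(k^{-1/2})\to 0$ and Bernstein at scale $t$ yields nothing. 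The paper patches precisely this regime by splitting on $P_2(p)$: when $P_2(p)<10/k$ it invokes an $\ell_2$-distance estimator (Lemma~\ref{lem:tester} plus the median trick, which needs the hypothesis $P_2(p)=O(1/k)$), and when $P_2(p)\ge 10/k$ it uses the \emph{multiplicative} power-sum guarantee of Corollary~\ref{cor:2} to show $P_2(p_\varphi)\ge P_2(p)/2\ge 5/k$, whence $\|p_\varphi-p_u\|_2^2=P_2(p_\varphi)-1/k\ge 4/k>9\varepsilon^2/(16k)$ and Step~2 fires. Your single fixed-scale additive Bernstein bound cannot cover both regimes; you need either this case split or a multiplicative concentration statement for $\hat S$.
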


\subsection{Proof of Theorem~\ref{thm:test}}
Assume that $\varepsilon\geq  (\log \size)/\size^{1/4}$. For a sample $X^n\sim p_u$, the multiplicity of each symbol $x$ follows a binomial distribution $\Bin(n, \size^{-1})$ with mean $n/\size$. The following lemma~\citep{C81} bounds the tail probability of a binomial random variable. 
\begin{Lemma}\label{lem:bin_con}
For a binomial random variable $Y$ with mean $M$ and any $t\geq 1$, 
\[
\Pr(Y\geq{(1+t)M})\leq{\exp(-t(2/t+2/3)^{-1} M )}.\vspace{-0.5em}
\]
\end{Lemma}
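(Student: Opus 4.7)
The plan is the classical Cram\'er--Chernoff method, specialized to binomials and then compared against the Bernstein-style right-hand side.  First I would write $Y$ as a sum $Y = \sum_{i=1}^N X_i$ of independent Bernoulli indicators with $\sum_i \mathbb{E}[X_i] = M$, and apply the exponential Markov inequality: for any $\lambda > 0$,
\[
\Pr(Y \ge (1+t)M) \;\le\; e^{-\lambda(1+t)M}\,\mathbb{E}[e^{\lambda Y}].
\]
Using $1 + x \le e^x$ on each Bernoulli moment generating function, $\mathbb{E}[e^{\lambda X_i}] = 1 + p_i(e^{\lambda}-1) \le \exp(p_i(e^{\lambda}-1))$, and multiplying across the independent factors gives $\mathbb{E}[e^{\lambda Y}] \le \exp(M(e^{\lambda}-1))$.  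Hence the tail is bounded by $\exp(M(e^{\lambda} - 1 - \lambda(1+t)))$.

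Second, I would choose the closed-form minimizer $\lambda = \log(1+t)$, which turns the exponent into $-M\phi(t)$ where
\[
\phi(t) := (1+t)\log(1+t) - t.
\]
This is the classical Bennett-type bound $\Pr(Y \ge (1+t)M) \le e^{-M\phi(t)}$.  Third, to match the exact form stated in the lemma, which can be rewritten as $\exp(-Mt^2/(2 + 2t/3))$, I would verify the elementary inequality
\[
\phi(t) \;\ge\; \frac{t^2}{2+2t/3}, \qquad t \ge 0.
\]
A direct way is to clear the denominator and study $\Psi(t) := (2+2t/3)\phi(t) - t^2$: by Taylor expanding $\phi(t) = t^2/2 - t^3/6 + t^4/12 - \cdots$ one checks that $\Psi(0) = \Psi'(0) = \Psi''(0) = 0$ and that $\Psi'''(t) \ge 0$ for $t \ge 0$ (since $\phi''(t) = 1/(1+t)$ dominates the corresponding quantity from the rational function after one differentiation), so $\Psi(t) \ge 0$ on $[0,\infty)$.

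The main obstacle is this last algebraic comparison: there is nothing conceptually deep, but one must be careful to show that the inequality $\phi(t) \ge t^2/(2+2t/3)$ is global on $[0,\infty)$ rather than only at small $t$, since for large $t$ the function $\phi$ grows like $t\log t$ while the right-hand side grows only linearly.  Once this inequality is established, substituting into $e^{-M\phi(t)}$ yields $\exp(-Mt^2/(2+2t/3)) = \exp(-t(2/t + 2/3)^{-1} M)$, and specializing to $t \ge 1$ (as in the statement) gives the lemma.
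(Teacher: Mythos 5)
Your proof is correct. The paper does not actually prove this lemma; it cites it to a reference ([C81]), so there is no in-paper argument to compare against, but the Chernoff/Bennett/Bernstein route you take is exactly the standard derivation. Two points worth tightening. First, the algebraic identity is
\[
t\left(\frac{2}{t}+\frac{2}{3}\right)^{-1} = \frac{t^2}{2+2t/3},
\]
so the lemma's right-hand side is indeed $\exp\!\bigl(-M t^2/(2+2t/3)\bigr)$, and the restriction $t\ge 1$ is immaterial (your argument gives the bound for all $t\ge 0$). Second, the third-derivative claim you hedge on is in fact clean: with $\Psi(t)=(2+2t/3)\phi(t)-t^2$ one computes
\[
\Psi''(t)=\tfrac{4}{3}\log(1+t)+\tfrac{2+2t/3}{1+t}-2,
\qquad
\Psi'''(t)=\frac{4}{3(1+t)}-\frac{4}{3(1+t)^2}=\frac{4t}{3(1+t)^2}\ge 0,
\]
and combined with $\Psi(0)=\Psi'(0)=\Psi''(0)=0$ this gives $\Psi\ge 0$ on $[0,\infty)$, i.e.\ $\phi(t)\ge t^2/(2+2t/3)$ globally, which is exactly what you need to pass from the Bennett form $e^{-M\phi(t)}$ to the stated bound.
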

Applying the above lemma to $Y=\mu_x(X^n)$ and $t=3\max\{\size /n, 1\}\log \size$ immediately yields that $\Pr(\mu_x(X^n)\geq (1+t)n/\size)\leq \size^{-3}$. By symmetry and the union bound, we then have 
$
\Pr\Paren{\max_x \mu_x(X^n)\geq (1+t)n/\size}\leq \size^{-2}.
$
In the subsequent discussion, we denote by $\Phi^n_\cX$ the profile set $\{\varphi(x^n): x^n\in \cX^n\text{ and } \max_x \mu_x(x^n)<(1+t)n/\size\}$. 

Consider the problem of estimating the $\ell_2$-distance between an unknown distribution and the uniform distribution $p_u$, for which we have the following result~\citep{G17}. 
\begin{Lemma}\label{lem:tester}
There is a profile-based estimator $\hat{\ell}_2$ such that for any $\varepsilon_0 \leq \size^{-1/2}$, $n=\Omega(\size^{-1/2}/\varepsilon_0^2)$, $p\in \Delta_\cX$ satisfying $P_2(p)=\mathcal{O}(\size^{-1})$, and $X^n\sim p$,  \vspace{-0.25em}
\begin{itemize} 
\item if $\norm{p-p_u}_2>\varepsilon_0$, then $\hat{\ell}_2(X^n)\geq 0.9 \varepsilon_0$,  \vspace{-0.25em}
\item if $\norm{p-p_u}_2<\varepsilon_0/2$, then $\hat{\ell}_2(X^n)\leq 0.6 \varepsilon_0$,
\end{itemize}
with probability at least $2/3$. 
\end{Lemma}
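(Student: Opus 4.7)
\subsection*{Proof sketch of Lemma~\ref{lem:tester}}

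The plan is to employ the classical unbiased collision-based estimator of $\|p\|_2^2$ and convert it into an estimator of $\|p-p_u\|_2$ via the identity $\|p\|_2^2 = \|p-p_u\|_2^2 + 1/\size$. Concretely, I would define
\[
\hat{T}(X^n) := \frac{1}{\binom{n}{2}}\sum_x \binom{\mu_x(X^n)}{2} - \frac{1}{\size}, \qquad \hat{\ell}_2(X^n) := \sqrt{\max\{\hat{T}(X^n),\,0\}}.
\]
Because $\binom{\mu_x}{2}$ depends on $x$ only through $\mu_x$, we have $\sum_x\binom{\mu_x}{2}=\sum_{i\ge 2}\binom{i}{2}\varphi_i(X^n)$, confirming that $\hat{\ell}_2$ is profile-based. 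The multinomial identity $\EE[\binom{\mu_x}{2}]=\binom{n}{2}p(x)^2$ gives $\EE[\hat{T}]=\|p\|_2^2-1/\size=\|p-p_u\|_2^2$; for use in the variance computation, I would also rewrite $\hat{T}$ in the equivalent centred form $\tfrac{1}{n(n-1)}\sum_x [(\mu_x-n/\size)^2-\mu_x(1-1/\size)]$, i.e.\ as a Pearson chi-squared statistic against $p_u$.

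The heart of the argument is to bound $\mathrm{Var}(\hat{T})$. I would Poissonize the sample so that the $\mu_x$ become independent $\Poi(np(x))$ random variables, expand the variance as a sum over $x$ of $\mathrm{Var}[(\mu_x-n/\size)^2-\mu_x(1-1/\size)]$, and evaluate each term using the low-order factorial moments of a Poisson. De-Poissonization costs only an absolute constant factor. This is the route taken by the chi-squared/collision analyses of~\citep{ChanD14,DKane15,DGT18} and yields a bound of the form
\[
\mathrm{Var}(\hat{T}) \;=\; \mathcal{O}\Paren{\frac{\|p\|_2^2}{n^2}\;+\;\frac{\|p\|_2^2\,\|p-p_u\|_2^2}{n}}.
\]
Under the hypothesis $P_2(p)=\|p\|_2^2=\mathcal{O}(1/\size)$ together with the bound $\|p-p_u\|_2\le 2\varepsilon_0$ (WLOG, since if $\|p-p_u\|_2>2\varepsilon_0$ the conclusion becomes trivial by monotonicity of the estimator in $\hat{T}$), this becomes $\mathrm{Var}(\hat{T})=\mathcal{O}(1/(\size\, n^2)+\varepsilon_0^2/(\size\, n))$. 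Plugging in $n\ge C\size^{-1/2}/\varepsilon_0^2$ with a sufficiently large absolute constant $C$, and using $\varepsilon_0\le \size^{-1/2}$ to ensure the two terms are comparable, makes both contributions at most any prescribed $c\cdot\varepsilon_0^4$.

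The final step is Chebyshev's inequality, which together with the variance bound gives $|\hat{T}-\|p-p_u\|_2^2|\le 0.11\,\varepsilon_0^2$ with probability at least $2/3$. In the far regime $\|p-p_u\|_2>\varepsilon_0$, this yields $\hat{T}>0.89\,\varepsilon_0^2$ and therefore $\hat{\ell}_2>0.9\,\varepsilon_0$; in the close regime $\|p-p_u\|_2<\varepsilon_0/2$, it yields $\hat{T}<0.36\,\varepsilon_0^2$ and therefore $\hat{\ell}_2<0.6\,\varepsilon_0$. The main obstacle I expect is extracting the factor $\|p-p_u\|_2^2$ in the second term of the variance bound rather than the cruder $\|p\|_3^3$ that would come out of a naive moment estimate, since only the sharper form yields the advertised sample size $n=\Omega(\size^{-1/2}/\varepsilon_0^2)$; this is precisely what the centred form of $\hat{T}$ and the hypothesis $P_2(p)=\mathcal{O}(1/\size)$ are designed to enable.
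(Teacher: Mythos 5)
The paper does not prove this lemma itself; it invokes the result as a black box, citing the survey~\citep{G17}. Your proposal takes the same standard route one finds there and in the chi-squared testing literature you cite: the unbiased collision (Pearson $\chi^2$) statistic, a Poissonized variance bound, and Chebyshev. The overall structure is sound and the estimator is indeed profile-based.

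Two details deserve a correction. First, the variance bound you state has $\|p\|_2^2$ as the prefactor in the $\mathcal{O}(\,\cdot\,/n)$ term, but the Cauchy--Schwarz step
\[
\sum_x p(x)\bigl(p(x)-\tfrac1k\bigr)^2 \;\le\; \|p\|_2\,\|p-p_u\|_4^2 \;\le\; \|p\|_2\,\|p-p_u\|_2^2
\]
produces $\|p\|_2$, not its square; as written your claim is too small by a factor $\sqrt{k}$. This is harmless in the end: with $P_2(p)=\mathcal{O}(1/k)$, so $\|p\|_2=\mathcal{O}(k^{-1/2})$, and $n\ge C\,k^{-1/2}/\varepsilon_0^2$, the corrected second term is still $\mathcal{O}(\varepsilon_0^4/C)$, and Chebyshev goes through for $C$ large. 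Second, the ``WLOG $\|p-p_u\|_2\le 2\varepsilon_0$'' reduction is not justified by ``monotonicity of the estimator in $\hat T$'' (the distribution of $\hat T$ is not stochastically monotone in $\|p-p_u\|_2$) — but it is also unnecessary: writing $D:=\|p-p_u\|_2^2>\varepsilon_0^2$ in the far regime, the Chebyshev denominator $(D-0.81\varepsilon_0^2)^2\ge 0.036\,D^2$ grows quadratically in $D$ while the variance grows only linearly in $D$, so the failure probability is $\mathcal{O}(1/C)$ uniformly over all $D>\varepsilon_0^2$; just run the Chebyshev bound directly without the truncation.
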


Set $\varepsilon_0=\varepsilon/\sqrt{\size}$ in the above lemma. Then, by the sufficiency of profiles and the standard median trick, there exists another profile-based estimator $\hat{\ell}_2^\star$ that under the same conditions, provides the estimation guarantees stated above, with probability at least $1-\delta$ for $\delta:=2\exp(-\Omega(n\varepsilon^2/\sqrt{\size}))$. Scaling $\varepsilon_0$ by positive absolute constant factors yields: If $\norm{p-p_u}_2>0.67\varepsilon_0$, then $\hat{\ell}_2(X^n)\leq 0.6 \varepsilon_0$ with probability at most $\delta$; if $\norm{p-p_u}_2<0.75\varepsilon_0$, then $\hat{\ell}_2(X^n)\geq 0.9 \varepsilon_0$ with probability at most $\delta$.  

Let $\varphi'$ be a profile.  If we further have $p(\varphi')>\delta$, then by definition, $p_{\varphi'}(\varphi')\geq p(\varphi')>\delta$. 
Hence for any $x^n$ with profile $\varphi'$, if $\norm{p-p_u}_2>\varepsilon_0$, we must have both $\hat{\ell}_2(x^n)\geq 0.9 \varepsilon_0$ and $\norm{p_{\varphi'}-p_u}_2\geq 0.75\varepsilon_0$; if $\norm{p-p_u}_2<\varepsilon_0/2$, we must have both $\hat{\ell}_2(x^n)\leq 0.6 \varepsilon_0$ and $\norm{p_{\varphi'}-p_u}_2\leq 0.67\varepsilon_0$. 

On the other hand, for a sample $X^n\sim p$ with profile $\varphi$, the probability that we have both $p(\varphi)\leq \delta$ and $\varphi\in \Phi^n_\cX$ is at most $\delta$ 
times the cardinality of the set $\Phi^n_\cX$. By definition, if $\varphi\in \Phi^n_\cX$, then $\varphi_i =0$ for $i\geq  (1+t)n/\size$. In addition, each $\varphi_i$ can only take values in $\lceil \size\rfloor=\{0,1,\ldots, \size\}$, implying that $|\Phi^n_\cX|\leq |\lceil \size \rfloor|^{(1+t)n/\size}\leq \exp(6\max\{n/\size, 1\}\log^2 \size)$. Therefore, we obtain the following upper bound on the probability of interest: $\delta\cdot |\Phi^n_\cX|\leq 2\exp(-\Omega(n\varepsilon^2/\sqrt{\size})+6\max\{n/\size, 1\}\log^2 \size)$. In~order to make the probability bound vanish, we need to consider two cases: $n\leq \size$ and $n> \size$. If $n\leq \size$, it suffices to have $n\gg (\log^2 \size)\sqrt{\size}/\varepsilon^2$; If $n> \size$, it suffices to have $\varepsilon\gg (\log \size)/\size^{1/4}$. In both cases, the probability bound is at most $\exp(-\log^2 \size)$.

Next, consider estimating the power sum $P_2(p)$, which is at least $\size^{-1/2}$ for $p\in \Delta_\cX$. By Corollary~\ref{cor:2}, there is a profile-based estimator $\hat{P}_2^\star$ such that 
$
\Pr_{ X^n\sim p}(|\hat{P}_2^\star(X^n)-P_2(p)|\geq (\varepsilon/8) \cdot P_2(p))\leq 2\exp(-\Omega(n\varepsilon^2/\sqrt{\size})) =\delta.
$
Following the same derivations as above and in Section~\ref{sec:thm4proof} with $\Phi_{\alpha, \varepsilon}^n(p)$ replaced by $\Phi^n_\cX$, we establish that
\[
\Pr\Paren{|P_2(p_{\varphi})-P_2(p)|> P_2(p)/2\text{ and } \varphi\in \Phi^n_\cX}\leq \delta\cdot |\Phi^n_\cX|\leq \exp(-\log^2 \size).
\]
Now we are ready to characterize the performance of the tester $T_{\text{\tiny PML}}(X^n)$. For clarity, we divide our analysis into two parts based on which hypothesis is true. 
\begin{itemize}
\item {\bf Case 1:} The null hypothesis $H_0$ is true, i.e., $p=p_u$. 
\begin{itemize}
\item {\bf Step 1:} By Lemma~\ref{lem:bin_con} and its implications, given $p=p_u$, the probability of failure at this step is at most $\Pr_{X^n\sim p_u}(\exists x\in \cX\text{ s.t. }\mu_x(X^n)\geq (1+t)n/\size)\leq \size^{-2}$. 
\item {\bf Step 2:} Note that $P_2(p)=\size^{-1}$ and $\norm{p-p_u}_2=0$, and recall that $\varphi=\varphi(X^n)$. The tester accepts $H_1$ in this step iff $\varphi \in \Phi^n_\cX$ and $\norm{p_{\varphi}-p_u}_2\geq 0.75\varepsilon_0$. By Lemma~\ref{lem:tester} and the subsequent arguments, this happens with probability at most $\exp(-\log^2 \size)$. 
\item {\bf Step 3:} The tester always accepts $H_0$ in this step. Hence by the union bound, if the null hypothesis $H_0$ is true, then the tester succeeds with probability at least $1-\size^{-2}$.
\end{itemize}
\item {\bf Case 2:} The alternative hypothesis $H_1$ is true, i.e., $\norm{p-p_u}_1\ge \varepsilon$. 
\begin{itemize}
\item {\bf Step 1 to 2:} The tester accepts $H_1$ if the conditions in either Step 1 or Step 2 are satisfied, and hence incurs no error.
\item {\bf Step 3:} According to the value of $P_2(p)$, we further divide our analysis into two parts:
\begin{itemize}
\item If $P_2(p)\geq 10\size^{-1}$, then $\norm{p_{\varphi}-p_u}_2< 0.75 \varepsilon/\sqrt{\size}$ implies that $P_2(p_{\varphi})<1.6\size^{-1}$ and $|P_2(p_{\varphi})-P_2(p)|> P_2(p)/2$. Hence, the tester accepts $H_0$ only if 
both $|P_2(p_{\varphi})-P_2(p)|> P_2(p)/2$ and $\varphi \in \Phi^n_\cX$ happen, whose probability, 
by the above disscusion, is at most  $\exp(-\log^2 \size)$. 
\item If $P_2(p)< 10\size^{-1}$, then all the conditions in Lemma~\ref{lem:tester} are satisfied. 
In addition, by the Cauchy-Schwarz inequality, we have $\norm{p-p_u}_2
\ge  \norm{p-p_u}_1\cdot \size^{-1/2}\ge \varepsilon\cdot \size^{-1/2}$. 
The tester accepts $H_0$ iff both $\norm{p_{\varphi}-p_u}_2< 0.75 \varepsilon\cdot \size^{-1/2}$ 
and $\varphi\in \Phi^n_\cX$ hold, which happen, by Lemma~\ref{lem:tester} and the subsequent arguments, 
with probability at most $\exp(-\log^2 \size)$. 
\end{itemize}
\end{itemize}
\end{itemize}
This completes the proof of the theorem.

\section{Conclusion and future directions}\label{sec:conclusion}\vspace{-0.25em}
We studied three fundamental problems in statistical learning: distribution estimation, property estimation, and property testing. 
We established the profile maximum likelihood (PML) as the first universally sample-optimal approach for several important learning tasks: distribution estimation under the sorted $\ell_1$ distance, additive property and R\'enyi entropy estimation, and identity testing. 
We~proposed~the truncated PML (TPML) and showed that simply combining the TPML and empirical estimates yields estimators for 
distributions and their properties enjoying even stronger guarantees.  

Several future directions are promising. We believe that neither the factor of $4$ in the sample size in Theorem~\ref{thm:est}, nor the lower bounds on $\varepsilon$ in Theorem~\ref{thm:est},~\ref{thm:dist}, and~\ref{thm:test} are necessary. In other words, the original PML approach is universally sample-optimal for these tasks in all ranges of parameters. It~is~also of interest to extend the PML's optimality to estimating symmetric properties not covered by Theorem~\ref{thm:est} to~\ref{thm:renyi3}, such as \emph{generalized distance to uniformity}~\citep{B17, H18a}, the $\ell_1$ distance between the unknown distribution and the closest uniform distribution over an arbitrary subset of $\cX$.
Besides the competitiveness we established for the PML-type estimators under the min-max estimation framework, ~\citet{H18} and~\citet{H19} recently proposed and studied a different formulation of competitive property estimation that aims to emulate the instance-by-instance performance of the widely used empirical plug-in estimator, using a smaller sample size. It is also meaningful to investigate the performance of  PML-based techniques through this new formulation. 

\appendix
\section{Proof of Lemma~\ref{lem:bound}}\label{sec:lemproof}
The proof closely follows that of Proposition 6.19 in~\cite{VDoc12} (page 131--136), which we refer to as \emph{the proposition's proof}. Note that in the work~\cite{VDoc12}, the definitions of $k$
 and $n$ are swapped, i.e., $k$ stands for the sample size, and $n$ denotes the alphabet size. For consistency, we still keep our 
 notation. 

Recall that we set $t_n:=2n^{-c_1}\log n$ and $\alpha\in(0, 1)$, and define 
 \[
\beta_i:=(1-e^{-t_n\alpha i})f\Paren{\frac{(i+1)\alpha}{n}}\frac{n}{(i+1)\alpha}+\sum_{\ell=0}^i z_\ell (1-t_n)^\ell\alpha^\ell (1-\alpha)^{i-\ell}\binom{i}{\ell}.
\]
for any $i\leq n$, and $\beta_i:=\beta_n$ for $i>n$. Let $w(i)$ denote the first quantity on the right-hand side, and $w:=(w(0),w(1),\ldots)$ be the corresponding vector. Similarly, let $\tilde{z}_\alpha(i)$ denote the second quantity on the right-hand side, and $\tilde{z}_\alpha$ be the corresponding vector. Assume that $v\leq \log^2 n$.

First part of the proposition's proof remains unchanged, which corresponds to the content from page~131 to the second last paragraph on page 132, showing that
\[
\sqrt{\alpha} \norm{\tilde{z}_\alpha}_2=\mathcal{O}(n^{\alpha c_2+(1-\alpha) c_1}\cdot\log^3 n).
\]
The assumption that $\alpha\in[1/100, 1)$ implies $\sqrt{\alpha}\geq 1/10$, and hence we have $|\tilde{z}_\alpha(i)|\leq \norm{\tilde{z}_\alpha}_2=\mathcal{O}(n^{\alpha c_2+(1-\alpha) c_1}\cdot\log^3 n)$. Recall that for lemma~\ref{lem:1} to hold, the coefficients $\beta_i$ must satisfy the following two conditions,
\begin{enumerate}
\item $|\varepsilon(y)|\leq a'+{b'}/{y}$,
\item $|\beta_{j}^\star-\beta_{\ell}^\star|\leq c' \sqrt{{j}/{n}}$ for any $j$ and $\ell$ such that $|j- \ell|\leq \sqrt{j}\log n$, 
\end{enumerate}
where $\varepsilon(y):={f(y)}/{y}-e^{-ny}\sum_{i\geq 0} \beta_i \cdot (ny)^i/i!$, and $\beta_i^\star:=\beta_{i-1}\cdot {i} /n, \forall i\geq 1$, and $\beta_0^\star:=0$.

We first consider the second condition and find a proper parameter $c'$. 

Our objective is to find $c'>0$ such that $c'>\sqrt{n/j}\ |\beta_{j}^\star-\beta_{\ell}^\star|$. By the triangle inequality, 
\begin{align*}
\sqrt{\frac{n}{j}}|\beta_{j}^\star-\beta_{\ell}^\star|
&\le \sqrt{\frac{n}{j}}\Abs{\frac{j}{n}\tilde{z}_\alpha(j-1)-\frac{\ell}{n}\tilde{z}_\alpha(\ell-1)}
+\sqrt{\frac{n}{j}}\Abs{\frac{j}{n}w(j-1)-\frac{\ell}{n}w(\ell-1)}
\end{align*}
We bound the two quantities on the right-hand side separately and consider two cases for each. If~both~$j$ and $\ell$ are at most $400n^{c_1}$, then 
\begin{align*}
\sqrt{\frac{n}{j}}\Abs{\frac{j}{n}\tilde{z}_\alpha(j-1)-\frac{\ell}{n}\tilde{z}_\alpha(\ell-1)}
\le \mathcal{O}(n^{c_1/2-1/2})\cdot \max_i |z_\alpha(i)|
\le  \mathcal{O} (n^{\alpha c_2+(3/2-\alpha) c_1-1/2}\log^3n).
\end{align*}
Recall that $|z_\ell|\leq v \cdot n^{c_2},\forall \ell\ge 0$. If one of $j$ and $\ell$ is larger than $400n^{c_1}$, say $j>400n^{c_1}$, then
\begin{align*}
\sqrt{\frac{n}{j}}\Abs{\frac{j}{n}\tilde{z}_\alpha(j-1)}
&\leq \sqrt{\frac{j}{n}} \sum_{\ell=0}^{j-1} |z_\ell| (1-t_n)^\ell\alpha^\ell (1-\alpha)^{j-1-\ell}\binom{j-1}{\ell}\\
&\leq \sqrt{j}n^{c_2-1/2}(\log^2 n)\sum_{\ell=0}^{j-1} (1-t_n)^\ell\alpha^\ell (1-\alpha)^{j-1-\ell}\binom{j-1}{\ell}\\
&=\sqrt{j}n^{c_2-1/2}(\log^2 n)(1-t_n\alpha)^{j-1}\\
&\leq\sqrt{j}n^{c_2-1/2}(\log^2 n)(1-\log n/(50n^{c_1}))^{400n^{c_1}}\\
&\leq\sqrt{j}n^{c_2-1/2}(\log^2 n)n^{-8}.
\end{align*}
For $j<2n^2$, the last quantity is at most $n^{-1}$. 
For $j>2n^2$, we have $\ell>n^2$ and hence
\begin{align*}
\sqrt{\frac{n}{j}}\Abs{\frac{j}{n}\tilde{z}_\alpha(j-1)-\frac{\ell}{n}\tilde{z}_\alpha(\ell-1)}
=\sqrt{\frac{n}{j}}\Abs{j-\ell}\tilde{z}_\alpha(n-1)
\le \sqrt{n}(\log n)n^{-1}
= (\log n)n^{-1/2}.
\end{align*}
Similarly, we can bound the other quantity, i.e.,
\begin{align*}
\sqrt{\frac{n}{j}}\Abs{\frac{j}{n}w(j-1)-\frac{\ell}{n}w(\ell-1)}
=\sqrt{\frac{n}{\alpha^2 j}}\Abs{(1-e^{-t_n\alpha (j-1)})f\Paren{\frac{j\alpha}{n}}
-(1-e^{-t_n\alpha (\ell-1)})f\Paren{\frac{\ell\alpha}{n}}}.
\end{align*}
Since $f$ (the property) is $1$-Lipschitz on $(\Delta_{\cX}, R)$ and $f(p)=0$ if $p(x)=1$ for some $x\in\cX$, one can verify that $|f(x)|\le x|\log x|\leq e^{-1}$ and $|f(x)/x-f(y)/y|\le |\log(x/y)|$ for $x,y\in[0,1]$ (the corresponding real function). We consider two cases and bound the quantity of interest. If $j\geq \sqrt{n}$, 
\begin{align*}
\sqrt{\frac{n}{\alpha^2 j}}\Abs{(1-e^{-t_n\alpha (j-1)})f\Paren{\frac{j\alpha}{n}}}
\le \sqrt{\frac{n}{\alpha^2 j}}\Abs{f\Paren{\frac{j\alpha}{n}}}
\le \sqrt{\frac{n}{\alpha^2 j}}\frac{j\alpha}{n}\log \Paren{\frac{j\alpha}{n}}
\le  \mathcal{O}(n^{-1/4}\log n).
\end{align*}
The same bound also applies to the other term where $j$ is replaced by $\ell$. If $j> \sqrt{n}$, then 
$e^{-t_n\alpha (j-1)}\leq \exp{(-2\alpha(\log n) n^{1/2-c_1})}=\mathcal{O}(n^{-2})$. 
Analogously, the same upper bound holds for the other term $e^{-t_n\alpha (\ell-1)}$. Hence, we ignore these two terms and consider only 
\begin{align*}
\sqrt{\frac{n}{\alpha^2 j}}\Abs{f\Paren{\frac{j\alpha}{n}}
-f\Paren{\frac{\ell\alpha}{n}}}
&\leq \sqrt{\frac{j}{n}} \Abs{\frac{n}{j\alpha} f\Paren{\frac{j\alpha}{n}}
-\frac{n}{\ell \alpha}  f\Paren{\frac{\ell\alpha}{n}}}+\sqrt{\frac{j}{n}}\Abs{\frac{n}{j\alpha}
-\frac{n}{\ell \alpha}}f\Paren{\frac{\ell\alpha}{n}}\\
&\leq \sqrt{\frac{j}{n}} \Abs{\log\frac{j}{\ell}}+\sqrt{\frac{j}{n}}\Abs{\frac{n}{j\alpha}
-\frac{n}{\ell \alpha}}  f\Paren{\frac{\ell\alpha}{n}}\\
&\leq \sqrt{\frac{j}{n}} \frac{\Abs{j-\ell}}{j}+\frac{\sqrt{jn}}{\alpha}\frac{|j-\ell|}{j\ell}  f\Paren{\frac{\ell\alpha}{n}}\\
&\leq \sqrt{\frac{j}{n}} \frac{\Abs{j-\ell}}{j}+\sqrt{\frac{j}{n}}\frac{|j-\ell|}{j} \Abs{\log\Paren{\frac{\ell\alpha}{n}}}\\
&\leq \frac{\log n}{\sqrt{n}}+\frac{\log n}{n}\Abs{\log\Paren{\frac{\ell\alpha}{n}}}\\
&= \mathcal{O}(n^{-1/2}\log n).
\end{align*}
By the assumption that $\alpha c_2+(3/2-\alpha) c_1\leq 1/4$, we have $ \mathcal{O} (n^{\alpha c_2+(3/2-\alpha) c_1-1/2}\log^3n)=\mathcal{O}(n^{-1/4}\log^3 n)$. Hence, we can set the latter quantity to be $c'$. The above derivations also show that 
 \[
 |w(i)|=\Abs{(1-e^{-t_n\alpha i})f\Paren{\frac{(i+1)\alpha}{n}}\frac{n}{(i+1)\alpha}}\leq \Abs{\log \Paren{\frac{(i+1)\alpha}{n}}}=\mathcal{O}(\log n).
\]
Together with $\beta_i = w(i)+\tilde{z}_\alpha(i)$ and $|\tilde{z}_\alpha(i)|=\mathcal{O}(n^{\alpha c_2+(1-\alpha) c_1}\cdot\log^3 n)$, this inequality implies 
\[
|\beta_i|\leq \mathcal{O}(n^{\alpha c_2+ (1-\alpha) c_1}\log^3 n).
\]

It remains to analyze the first condition of Lemma~\ref{lem:1} and find proper values for $a'$ and $b'$. For this part, the corresponding proof in~\citep{VDoc12} also holds for $\alpha\in [1/100,1/2]$ (page 134 to the second last paragraph on page 135), hence no change is needed. One thing to note is that $1/\alpha$ and $1/\sqrt{\alpha}$ are both $\mathcal{O}(1)$. For some $a'',b''\geq 0$ such that $a''+b''\size\leq v$, we can set $a'=a''+\mathcal{O}(n^{-c_1/2}\log^2 n)$ and $b'=b''(1+\mathcal{O}(n^{-c_1}\log n))$. The proof of Lemma~\ref{lem:bound} is complete.

\bibliographystyle{plainnat}
\bibliography{refs}

\vfill
\pagebreak
\end{document}